\documentclass[twoside]{article}

\usepackage[preprint]{aistats2027}

\usepackage{amssymb}
\usepackage{amsmath}
\usepackage{amsthm}
\usepackage{booktabs}
\usepackage{subcaption}
\usepackage{float}
\usepackage{tikz}
\usetikzlibrary{arrows.meta,backgrounds,calc,fit}
\usepackage{hyperref}
\usepackage{url}
\usepackage{enumitem}
\usepackage{graphicx} 

\usepackage[round]{natbib}

\newtheorem{theorem}{Theorem}[section]
\newtheorem{proposition}[theorem]{Proposition}
\newtheorem{lemma}[theorem]{Lemma}
\newtheorem{corollary}[theorem]{Corollary}
\newtheorem{assumption}[theorem]{Assumption}
\newtheorem{problem}[theorem]{Problem}
\allowdisplaybreaks
\floatstyle{ruled}
\newfloat{algorithm}{tbp}{loa}
\floatname{algorithm}{Algorithm}
\floatstyle{plain}
\hypersetup{
	colorlinks=true,
	linkcolor=black,
	filecolor=blue,
	citecolor = blue,      
	urlcolor=blue,
}

\newcommand{\Ind}{\mathbf{1}}
\newcommand{\TV}{\operatorname{TV}}

\DeclareMathOperator*{\argmin}{arg\,min}
\DeclareMathOperator*{\argmax}{arg\,max}
\begin{document}

%
\runningtitle{Reinforcement Learning with Conformal Action Sets}

%

\twocolumn[

\aistatstitle{Reinforcement Learning with Conformal Action Sets:\\An Application to Sequential Recommendation}

\aistatsauthor{ Wenwen Si \And Honghao Wei  }

\aistatsaddress{ University of Pennsylvania \And  Washington State University  }  ]

\begingroup
\hbadness=1400
\begin{abstract}
 Sequential recommenders typically use a fixed slate size even though the number of useful alternatives changes within a session. We propose Reinforcement Learning with Calibrated Pruning (RLCP), which adapts the retained action set using critic scores and an online threshold. The threshold is updated from binary feedback indicating whether the set contains an action in a proxy target. We prove a deterministic bound on the observed proxy miss rate along adaptive trajectories. To quantify the effect of pruning on reward, we derive an exact decomposition of value loss into filtering and selection losses. Under explicit proxy and critic approximation conditions, this decomposition yields a finite session reward bound that also accounts for imperfect selection and set truncation, without requiring the learning parameters to converge. Experiments on KuaiRand-Pure and MovieLens 1M compare two RLCP implementations with four RL baselines. In each of the 19 configurations, at least one RLCP variant achieves the highest catalog diversity, reaching $1.11\times$ to $5.21\times$ that of the strongest baseline, with competitive session depth and no larger retained sets.
\end{abstract}
\endgroup
\vspace{-2ex}
\section{Introduction}

Sequential recommendation aims to improve system utility over an entire user session, rather than optimize each interaction in isolation in a long-term deployment. Since a recommendation can affect both the current response and the user's subsequent interests, reinforcement learning (RL) provides a natural framework for this objective~\citep{afsar2022reinforcement,chen2021survey}. Within this framework, many RL recommenders present a slate with a \textit{fixed predetermined number} of items at every request~\citep{zhao2018deep,ie2019reinforcement,zhao2023kuaisim}. However, a fixed cardinality does not reflect how many useful alternatives are available and cannot account for changes in their quality. For example, a small set may suffice when preferences are clear, whereas a broader set can be valuable when several alternatives remain plausible. This mismatch motivates a different design criterion: retain a compact set while controlling the risk of excluding all valuable actions.

Achieving this objective requires more than replacing the top $M$ ranking rule with a threshold. First, the optimal long-term values used to define valuable actions are unknown. Although a learned critic can rank items, its score gaps are not themselves calibrated measures of retention. Second, filtering changes the available actions and, consequently, the states visited later in the session. As a result, a threshold calibrated under one policy need not satisfy a population retention requirement under another. Third, even when a valuable item is retained, the downstream actor or user choice mechanism may consistently select a poor alternative. Retention and selection must therefore be distinguished: keeping a valuable item does not by itself ensure a good decision. Together, these challenges call for an analysis that connects action set construction to sequential decision quality.

To address these challenges, we introduce \emph{Reinforcement Learning with Calibrated Pruning} (RLCP). In RLCP, a critic scores candidate items, an online threshold determines which items are retained, and a downstream selector chooses from the retained set. To calibrate the threshold, we update it using a binary signal indicating whether the raw set contains an action in a proxy target. Following adaptive conformal inference~\citep{gibbs2021adaptive,gibbs2024online,angelopoulos2024online}, this update gives a deterministic bound on the observed proxy miss frequency even as the critic, policy, and visited states change. The guarantee therefore concerns observable proxy retention, rather than retention defined by the unknown optimal value function. 

To account for how pruning affects future decisions, we evaluate expected set size under the state distribution of the policy that uses the retained actions. We prove that this policy's value loss equals the discounted accumulation of two terms: the loss caused by filtering and the loss incurred by selecting among the retained actions. Under a condition relating proxy scores to optimal action values, calibration bounds the filtering loss, while critic accuracy and the downstream selection rule bound the selection loss. Combining these bounds gives a finite session reward guarantee, including the additional loss from imposing a maximum set size. The result allows the critic, policy, and threshold to change throughout the session and does not require their parameters to converge. Experiments in 19 simulator configurations show that RLCP can reach a broader item catalog while maintaining competitive session depth under the same set size cap as the baselines, as summarized in Figure~\ref{fig:overall_comparison}. The main contributions are summarized below.

\begin{figure}[t]
   \centering
   \includegraphics[height=5.2cm]{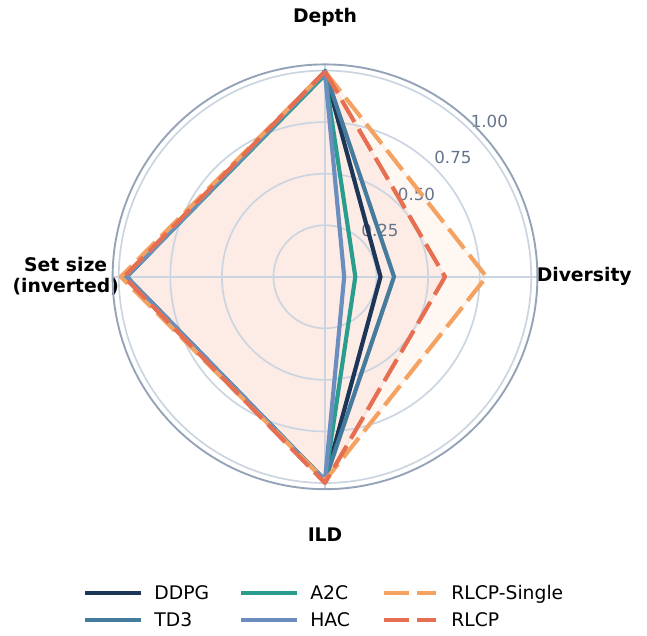}
   \caption{Performance across the 19 experimental configurations. The axes are normalized, and set size is inverted so that larger values are preferable on every axis. RLCP improves catalog diversity while maintaining competitive session depth under the same set size cap as the baselines. The full detailed results and configurations are in Appendix~\ref{app:full_results}.}
   \label{fig:overall_comparison}
   \vspace{-3ex}
\end{figure}

{\bf Main contributions.}
\begin{enumerate}[leftmargin=*]
\vspace{-2ex}
    \item We develop RLCP to construct action sets through learned scores and online retention calibration. A bilevel formulation evaluates set size and retention under the policy induced by the retained actions, accounting for the effect of filtering on subsequent state visitation.
    \item We establish a deterministic proxy miss bound and an exact value decomposition. Together, they give conditional bounds on discounted value and finite session reward that quantify the effects of proxy error, critic error, selection error, and set truncation.
    \item We evaluate two RLCP implementations against four RL baselines on KuaiRand-Pure and MovieLens 1M. At least one variant achieves the highest catalog diversity in every configuration, with competitive session depth and mean retained sets no larger than the corresponding baseline slates.
    \vspace{-1ex}
\end{enumerate}

The conformal component builds on existing online calibration methods; our focus is how calibrated retention interacts with action filtering and sequential value. Appendix~\ref{app:related_work} places this contribution in the context of conformal recommendation and decision-making.

\begin{figure*}[t]
    \centering
    \includegraphics[width=0.95\textwidth]{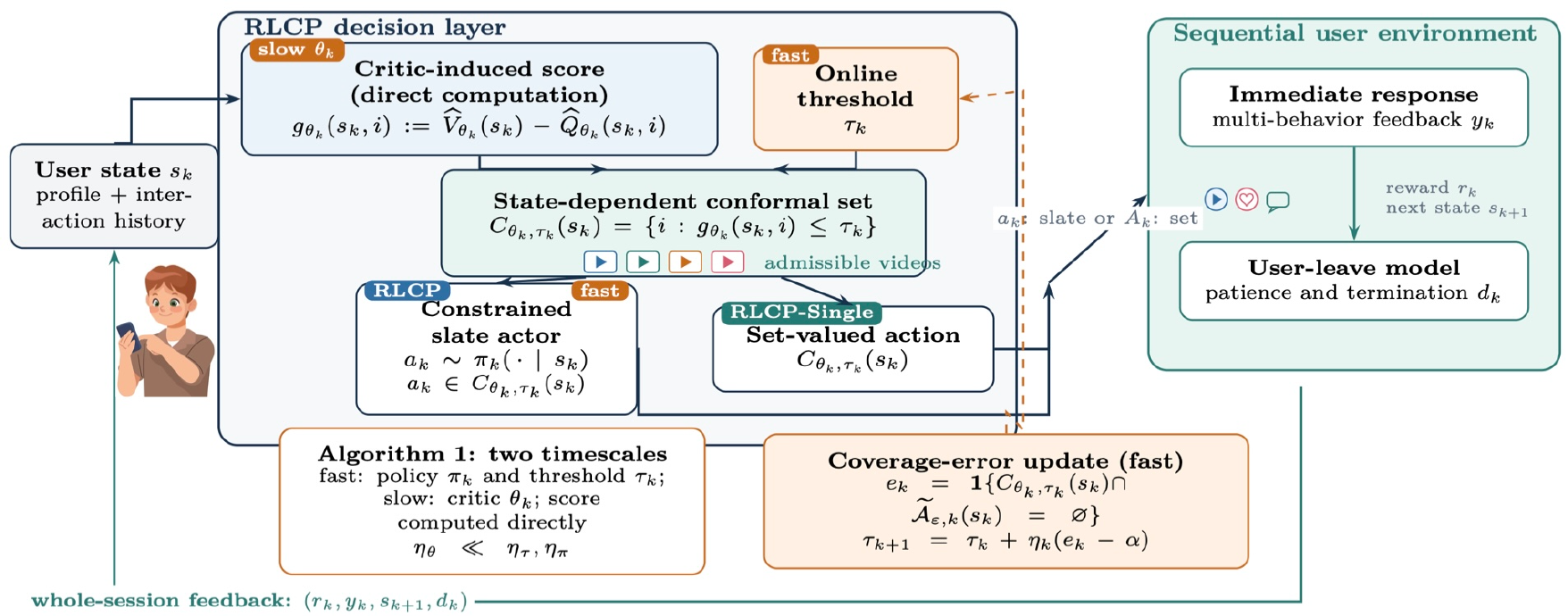}
    \caption{RLCP framework. At state $s_k$, critic scores $g_{\theta_k}(s_k,a)$ and threshold $\tau_k$ define the raw set $C_{\theta_k,\tau_k}(s_k)$. A minimum score fallback ensures nonempty execution. The resulting set constrains the downstream actor in RLCP or supplies the simulator in RLCP Single. Proxy miss feedback updates the threshold more frequently than the critic is updated; this schedule is an implementation choice.}
    \label{fig:rlcp_overview}
    \vspace{-3ex}
\end{figure*}

\section{{Problem Formulation}}\label{sec:formulation}
We consider a finite discounted MDP $\mathcal M=(\mathcal S,\mathcal A,\allowbreak P,r,\gamma,\mu_0)$, where $\mathcal S$ and $\mathcal A$ are finite nonempty state and action spaces, $P$ is the transition kernel, $r(s,a)\in[0,1]$ is the expected reward, $\gamma\in[0,1)$ is the discount factor, and $\mu_0$ is the initial state distribution. At time $t$, the learner observes $s_t$, selects $a_t$, receives $R_t\in[0,1]$, and observes $s_{t+1}$. Let $\mathcal H_t$ contain the trajectory and all algorithmic information available before $a_t$ is sampled. The environment satisfies
\begin{equation}
\begin{aligned}
\Pr(s_{t+1}=s'\mid\mathcal H_t,a_t=a)&=P(s'\mid s_t,a),\\
\mathbb E[R_t\mid\mathcal H_t,a_t=a]&=r(s_t,a),
\end{aligned}
\label{eq:controlled_markov_semantics}
\end{equation}
The learner may use the full history to update its policy, while the reward and transition laws depend only on the current state and action. Let $\Pi$ be the set of stationary Markov policies. For $\pi\in\Pi$, define the state value and state-action value functions as
\begin{equation}
\begin{aligned}
V^\pi(s)
=\mathbb E_{s}^{\pi}\!\left[\sum_{t=0}^{\infty}\gamma^tR_t\right],
Q^\pi(s,a)
=\mathbb E_{s,a\sim\pi}\!\left[\sum_{t=0}^{\infty}\gamma^tR_t\right].
\end{aligned}
\label{eq:value_definitions}
\end{equation}
 For a state distribution $\nu$, write $V^\pi(\nu)=\mathbb E_{s\sim\nu}[V^\pi(s)]$. The optimal values are $V^*(s)=\max_{\pi\in\Pi}V^\pi(s)$ and $Q^*(s,a)=\max_{\pi\in\Pi}Q^\pi(s,a)$, with $V^*(s)=\max_aQ^*(s,a)$ and $V^*(\nu)=\mathbb E_{s\sim\nu}[V^*(s)]$. We evaluate action sets under the normalized discounted state occupancy:
\begin{equation}
d^\pi(s)=(1-\gamma)\sum_{t=0}^{\infty}\gamma^t
\Pr^\pi(s_t=s\mid s_0\sim\mu_0).
\label{eq:state_occupancy}
\end{equation}
For a value tolerance $\varepsilon\ge0$, and $\alpha\in(0,1)$ is the target miss rate. Define the oracle target action set as
\begin{equation}
\mathcal A^*_\varepsilon(s)
=\{a\in\mathcal A:Q^*(s,a)\ge V^*(s)-\varepsilon\}.
\label{eq:oracle_action_set}
\end{equation}
An action belongs to this set if its optimal continuation value is within $\varepsilon$ of the best available value. This target specifies which actions should be retained, but it is unknown to the learner.

\vspace{-1ex}
\subsection{Action Filtering and Execution}
\vspace{-1ex}
Since the oracle target $\mathcal A^*_\varepsilon(s)$ depends on the unknown $Q^*$, the learner cannot use it directly to filter actions. Instead, it uses a learned score $g_\theta(s,a)\in\mathbb R$, with smaller scores preferred. For a threshold $\tau\in\mathbb R$, the raw set consists of all actions whose scores do not exceed the threshold:
\begin{equation}
C_{\theta,\tau}(s)=\{a:g_\theta(s,a)\le\tau\}.
\label{eq:raw_action_set}
\end{equation}
When no action passes the threshold, the policy still needs an action to execute. We therefore fix an order on $\mathcal A$ and use the action with the smallest score as a fallback:
\begin{equation}
a_\theta^{\min}(s)
=\text{first action in }\argmin_a g_\theta(s,a).
\label{eq:fallback_action}
\end{equation}
The resulting execution set is always nonempty:
\begin{equation}
\overline C_{\theta,\tau}(s)=
\begin{cases}
C_{\theta,\tau}(s),&C_{\theta,\tau}(s)\ne\emptyset,\\
\{a_\theta^{\min}(s)\},&C_{\theta,\tau}(s)=\emptyset.
\end{cases}
\label{eq:operational_action_set}
\end{equation}

The execution set determines which actions the downstream policy may select. To express this restriction, for any set map $D$ satisfying $\emptyset\ne D(s)\subseteq\mathcal A$ at every state, let
$\Pi(D)=\{\pi\in\Pi:\operatorname{supp}\pi(\cdot\mid s)\subseteq D(s),\ \forall s\in\mathcal S\}$
denote the stationary policies supported on $D$. For the construction above, we therefore consider a downstream policy $\pi\in\Pi(\overline C_{\theta,\tau})$.

In recommendation, $D(s)$ represents an intermediate candidate set and need not coincide with the displayed slate. The item MDP applies when an item is selected before the current feedback is generated and, conditional on the current state and selected item, reward and transition no longer depend on the remaining collection. Appendix~\ref{subsec:recommendation_interpretation} formalizes this requirement and explains when the collection itself must instead be modeled as the action.

With execution specified, we next evaluate whether the threshold rule retains a valuable action. This evaluation uses the raw set $C_{\theta,\tau}(s)$: retention succeeds when that set contains at least one action in the oracle target $\mathcal A^*_\varepsilon(s)$. The fallback serves a different purpose, namely, ensuring that execution remains possible. Thus an empty raw set counts as a retention failure even if the fallback action belongs to the oracle target, because no target action passed the threshold. Whether retention succeeds depends on the state. We therefore evaluate it under the normalized discounted occupancy $d^\pi$ of the downstream policy $\pi$. For a target miss rate $\alpha\in(0,1)$, the desired retention condition is
\begin{equation}
\Pr_{s\sim d^\pi}\!\left(
C_{\theta,\tau}(s)\cap\mathcal A^*_\varepsilon(s)\ne\emptyset
\right)\ge1-\alpha.
\label{eq:coverage_constraint}
\end{equation}
The condition allows the states where every oracle target action is removed to carry at most $\alpha$ of the discounted occupancy mass. The policy generating this occupancy is restricted by the execution set, so changing the threshold may change both the retained actions and the states on which retention is evaluated.

Let $m_\theta(s)=\min_{a\in\mathcal A^*_\varepsilon(s)}g_\theta(s,a)$ be the smallest score among oracle target actions. Retention succeeds exactly when this score passes the threshold:
\begin{equation}
C_{\theta,\tau}(s)\cap\mathcal A^*_\varepsilon(s)\ne\emptyset
\quad\Longleftrightarrow\quad m_\theta(s)\le\tau.
\label{eq:hit_margin_equivalence}
\end{equation}
This gives a scalar condition for threshold selection. The oracle margin is unavailable because its target depends on $Q^*$; RLCP will instead calibrate a proxy margin obtained from observable feedback.
\vspace{-1ex}
\subsection{Policy-Dependent Action-Set Objective}
\label{sec:objective}
\vspace{-1ex}
A threshold determines which actions remain available throughout the session. The set size objective must therefore account for the policy that uses those actions. Fix a learned score $g_\theta$ and write $D_{\theta,\tau}=\overline C_{\theta,\tau}$. The best policy supported on this set has value
\begin{equation}
V^*_{\theta,\tau}(\mu_0)
=\max_{\pi\in\Pi(D_{\theta,\tau})}V^\pi(\mu_0).
\label{eq:restricted_optimal_value}
\end{equation}
We denote the selected optimal response by $\pi^*_{\theta,\tau}$. A fixed lexicographic rule on optimal occupancies, together with a fixed action at states of zero occupancy, resolves ties; Appendix~\ref{app:additional_theory} gives the construction. We then minimize expected set size subject to retention under this response:

\begin{problem}[Occupancy-dependent threshold selection]
\label{prob:bilevel_corrected}
For the score $g_\theta$, consider
\begin{equation}
\begin{aligned}
\min_{\tau\in\mathbb R}\quad
&\Phi_\theta(\tau)
:=\mathbb E_{s\sim d^{\pi^*_{\theta,\tau}}}
[|D_{\theta,\tau}(s)|]\\
\text{s.t. }\quad
&\Pr_{s\sim d^{\pi^*_{\theta,\tau}}}\!\left(
C_{\theta,\tau}(s)\cap\mathcal A^*_\varepsilon(s)\ne\emptyset
\right)\ge1-\alpha,\\
&\pi^*_{\theta,\tau}\in
\argmax_{\pi\in\Pi(D_{\theta,\tau})}V^\pi(\mu_0).
\end{aligned}
\label{eq:bilevel_objective}
\end{equation}
\end{problem}

The inner problem determines how the retained actions are used, and the outer problem evaluates set size and retention under the resulting state occupancy. Consequently, a larger threshold may induce a different policy and change which states receive weight in the objective. The smallest feasible threshold need not minimize expected set size, unlike threshold selection under a fixed evaluation distribution.

The score is fixed only when defining this benchmark. Allowing arbitrary scores in the outer optimization would admit the uninformative oracle solution of assigning the smallest score to one optimal action at each state and returning singleton sets. RLCP instead learns scores by value estimation and updates the threshold from proxy feedback. We do not assume that these coupled updates solve~\eqref{eq:bilevel_objective}; the following guarantees apply while the score and policy continue to change.

\section{Main Results}
\label{sec:main_results}
In this section, we formally present our main results. We first bound the proxy miss frequency under the online threshold update. We then derive a value identity that separates the loss from filtering from the loss of selecting an action within the retained set. Combining the two results gives a reward bound for a policy that adapts during a finite session. Appendix~\ref{app:additional_theory} establishes the population benchmark, and the detailed proofs are deferred to Appendix~\ref{app:proofs}.

\vspace{-1ex}
\subsection{Retention Calibration During Learning}
\vspace{-1ex}
When both the score and policy are fixed, the smallest threshold satisfying the retention requirement is the $(1-\alpha)$ quantile of the oracle margin (Proposition~\ref{prop:quantile}). During learning, however, updates to the score and policy can change the margin distribution, so a threshold calibrated at one stage need not remain valid later. To account for these changes, we adjust the threshold using observed misses rather than assuming a fixed margin distribution. A miss increases the threshold to make filtering less restrictive, whereas a hit decreases it. Specifically, we use the online calibration update~\citep{gibbs2021adaptive,angelopoulos2024online}
\begin{equation}
e_t=\Ind\{m_t>\tau_t\},\qquad
\tau_{t+1}=\tau_t+\eta_t(e_t-\alpha).
\label{eq:online_threshold_update}
\end{equation}
Here $m_t$ is a finite margin, $e_t$ is its miss indicator, and $\eta_t>0$ is a nonincreasing step size. A miss raises the threshold by $\eta_t(1-\alpha)$, making filtering less restrictive; a hit lowers it by $\eta_t\alpha$. Theorem~\ref{thm:pathwise} bounds the resulting miss frequency for any realized margin sequence.

\begin{theorem}[Pathwise miss-rate tracking]
\label{thm:pathwise}
For any adaptive sequence of finite margins, define
\begin{equation}
W_T=\max\{\tau_1,m_1,\ldots,m_T\}
-\min\{\tau_1,m_1,\ldots,m_T\}.
\label{eq:realized_width}
\end{equation}
Then, for every $T\ge1$,
\begin{equation}
\left|\frac1T\sum_{t=1}^Te_t-\alpha\right|
\le\frac{W_T+\eta_1}{\eta_TT}.
\label{eq:pathwise_calibration_bound}
\end{equation}
If $\sup_TW_T\le W<\infty$ and $\eta_TT\to\infty$, then $T^{-1}\sum_{t=1}^Te_t\to\alpha$. If $\eta_t\equiv\eta$, the always-valid finite-time form is
\begin{equation}
\left|\frac1T\sum_{t=1}^Te_t-\alpha\right|
\le\frac{W_T+\eta}{\eta T},
\label{eq:constant_step_calibration}
\end{equation}
and it is at most $(W+\eta)/(\eta T)$ under the uniform-width condition above.
\end{theorem}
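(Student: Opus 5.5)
Our plan follows the standard adaptive conformal inference argument of \citet{gibbs2021adaptive}. The first step is to show that the threshold iterates stay in a bounded band determined by the realized margins. Let $L_T=\min\{\tau_1,m_1,\ldots,m_T\}$ and $U_T=\max\{\tau_1,m_1,\ldots,m_T\}$. By induction on $t\le T+1$ we claim
\[
L_T-\eta_1\alpha\le\tau_t\le U_T+\eta_1(1-\alpha).
\]
For the upper bound, suppose $\tau_t\le U_T$. The update raises $\tau$ by at most $\eta_t(1-\alpha)\le\eta_1(1-\alpha)$, since the step sizes are nonincreasing. Now suppose $\tau_t>U_T\ge m_t$. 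Then $e_t=0$, so the threshold strictly decreases, and $\tau_{t+1}<\tau_t$ preserves the bound. The lower bound is symmetric. If $\tau_t<L_T\le m_t$, then $e_t=1$ and the threshold increases. Otherwise the decrease is at most $\eta_t\alpha\le\eta_1\alpha$. Consequently every $\tau_t$ with $t\le T+1$ lies in an interval of length at most $W_T+\eta_1$.

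The second step rewrites the update as $e_t-\alpha=(\tau_{t+1}-\tau_t)/\eta_t$ and sums over $t=1,\dots,T$. For constant step size this telescopes directly to $\frac1T\sum_t(e_t-\alpha)=(\tau_{T+1}-\tau_1)/(\eta T)$. Combined with the band, this gives $|\cdot|\le(W_T+\eta)/(\eta T)$, which is \eqref{eq:constant_step_calibration}. For variable step sizes we apply Abel summation:
\[
\sum_{t=1}^T\frac{\tau_{t+1}-\tau_t}{\eta_t}
=\frac{\tau_{T+1}}{\eta_T}-\frac{\tau_1}{\eta_1}
-\sum_{t=2}^{T}\tau_t\Bigl(\frac1{\eta_t}-\frac1{\eta_{t-1}}\Bigr).
\]
Because the expression is invariant under shifting every $\tau_t$ by a constant $c$, we may center the band by choosing $c$ as its midpoint. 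After centering, $|\tau_t-c|\le (W_T+\eta_1)/2$ for all $t\le T+1$. The weights $1/\eta_t-1/\eta_{t-1}$ are nonnegative and telescope to $1/\eta_T-1/\eta_1$. This yields the bound
\[
\frac{W_T+\eta_1}{2}\Bigl(\frac1{\eta_T}+\frac1{\eta_1}+\frac1{\eta_T}-\frac1{\eta_1}\Bigr)=\frac{W_T+\eta_1}{\eta_T}.
\]
Dividing by $T$ gives \eqref{eq:pathwise_calibration_bound}.

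The main obstacle is this variable-step bookkeeping. The naive estimate would bound each term separately and lose a factor of two. The centering trick, together with the monotonicity of $\eta_t$, is what recovers the exact constant. We will also need to check that the band argument uses only $\eta_t\le\eta_1$ and applies to adaptively chosen finite margins, since no probabilistic assumption is involved. The remaining claims are immediate. If $W_T\le W$ and $\eta_TT\to\infty$, the right side is at most $(W+\eta_1)/(\eta_TT)\to0$, so the average miss rate converges to $\alpha$. The constant-step bound under the uniform-width condition is $(W+\eta)/(\eta T)$.
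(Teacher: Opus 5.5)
Your proposal is correct and matches the paper's proof step for step: it proves the same invariant band $[\min\{\tau_1,m_1,\ldots,m_T\}-\alpha\eta_1,\ \max\{\tau_1,m_1,\ldots,m_T\}+(1-\alpha)\eta_1]$ of width $W_T+\eta_1$ by the same case-based induction, then applies the same Abel summation to $\sum_t(\tau_{t+1}-\tau_t)/\eta_t$. The only difference is cosmetic and lies in the final estimate: the paper normalizes the Abel weights into a convex combination and writes the sum as $\eta_T^{-1}$ times the difference between $\tau_{T+1}$ and a point of the band, whereas you center at the band's midpoint and use the triangle inequality, and both give exactly $(W_T+\eta_1)/\eta_T$.
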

With bounded margin width and a constant step size, Theorem~\ref{thm:pathwise} gives an $O(1/T)$ deviation of the observed miss frequency from $\alpha$. The argument is deterministic: it requires neither independent margins nor a stationary margin distribution. It controls the average along the realized sequence, rather than retention at every state or convergence to a fixed population quantile. To apply the update without observing $Q^*$, RLCP uses a finite nonempty proxy target $\widetilde{\mathcal A}_{\varepsilon,t}(s_t)$. Its margin and miss indicator are
\begin{equation}
\widetilde m_t
=\min_{a\in\widetilde{\mathcal A}_{\varepsilon,t}(s_t)}g_{\theta_t}(s_t,a),
\qquad
\widetilde e_t=\Ind\{\widetilde m_t>\tau_t\}.
\label{eq:proxy_margin}
\end{equation}
Setting $m_t=\widetilde m_t$ in Theorem~\ref{thm:pathwise} controls the observed proxy miss rate while the critic and actor change. The relation between this proxy target and actions with high optimal value determines what the guarantee implies for reward.

A cap on set size can introduce misses even when the raw set retains a proxy target action. Let $C_t^K$ keep the $K$ actions with the lowest scores in the raw set, or all actions if fewer than $K$ are available. A capped miss is either a raw miss or a raw hit for which truncation removes every proxy target action. Calibration controls the former; the latter depends on the ranking and the cap. Appendix~\ref{app:maximum_set_size} bounds the combined miss rate and characterizes feasibility under a fixed distribution.
\vspace{-1ex}
\subsection{ {From Retention to Long-Horizon Value}}
\vspace{-1ex}
Retaining a valuable action is not enough if the policy does not select it. Consider a single state with two actions giving rewards one and zero. Retaining both actions gives perfect retention, but a policy that always chooses the second receives zero reward (Proposition~\ref{prop:coverage_insufficient}). To account for this distinction, for a nonempty set map $D$ and policy $\pi\in\Pi(D)$, define
\begin{equation}
\begin{aligned}
\Delta_D(s)
&=V^*(s)-\max_{a\in D(s)}Q^*(s,a),\\
\zeta_{\pi,D}(s)
&=\max_{a\in D(s)}Q^*(s,a)
-\sum_{a\in D(s)}\pi(a\mid s)Q^*(s,a).
\end{aligned}
\label{eq:set_selection_losses}
\end{equation}
The filtering loss $\Delta_D$ is the gap between the best available action and the best retained action. The selection loss $\zeta_{\pi,D}$ is the remaining gap between that retained action and the policy's expected choice. The next identity shows how both losses accumulate under the states visited by the policy.

\begin{theorem}[Filtering--selection value identity]
\label{thm:value_decomposition}
For every set map $D$ satisfying $D(s)\ne\emptyset$ for every state and every stationary $\pi\in\Pi(D)$,
\begin{equation}
V^*(\mu_0)-V^\pi(\mu_0)
=\frac1{1-\gamma}\mathbb E_{s\sim d^\pi}
[\Delta_D(s)+\zeta_{\pi,D}(s)].
\label{eq:exact_value_decomposition}
\end{equation}
\end{theorem}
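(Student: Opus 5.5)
This is the performance difference lemma, applied with $Q^*$ in place of $Q^\pi$; the identity then splits the resulting per-state advantage gap into the two pieces. The first step is pointwise. Since $\pi\in\Pi(D)$, the sum $\sum_a\pi(a\mid s)Q^*(s,a)$ runs only over $a\in D(s)$. Adding and subtracting $\max_{a\in D(s)}Q^*(s,a)$ gives, for every state,
\[
V^*(s)-\sum_{a}\pi(a\mid s)Q^*(s,a)=\Delta_D(s)+\zeta_{\pi,D}(s).
\]
The sets $D(s)$ are nonempty and $\mathcal A$ is finite, so both maxima exist. It therefore suffices to prove
\[
V^*(\mu_0)-V^\pi(\mu_0)=\frac1{1-\gamma}\mathbb E_{s\sim d^\pi}\Big[V^*(s)-\sum_a\pi(a\mid s)Q^*(s,a)\Big],
\]
which holds for any stationary $\pi$.

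For the second step, write $Q^*(s,a)=r(s,a)+\gamma\sum_{s'}P(s'\mid s,a)V^*(s')$. In a finite discounted MDP, $Q^*$ as defined (a maximum over stationary policies) satisfies this Bellman equation, since an optimal stationary deterministic policy attains $V^*$ and $Q^*$ simultaneously. I would state this as a standard fact. Let $P^\pi$ and $r^\pi$ denote the state transition matrix and reward vector under $\pi$. The Bellman equation gives $\sum_a\pi(a\mid s)Q^*(s,a)=r^\pi(s)+\gamma(P^\pi V^*)(s)$. Combined with $V^\pi=r^\pi+\gamma P^\pi V^\pi$, this yields the vector identity
\[
V^*-V^\pi=g+\gamma P^\pi(V^*-V^\pi),\qquad g(s):=V^*(s)-\textstyle\sum_a\pi(a\mid s)Q^*(s,a).
\]
Because $\gamma<1$ and $P^\pi$ is stochastic, $I-\gamma P^\pi$ is invertible with Neumann series inverse. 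Hence $V^*-V^\pi=\sum_{t\ge0}\gamma^t(P^\pi)^tg$.

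For the last step, take the expectation over $s_0\sim\mu_0$. Since $\mu_0^\top(P^\pi)^t$ is the law of $s_t$ under $\pi$, we get $\sum_t\gamma^t\Pr^\pi(s_t=s)=d^\pi(s)/(1-\gamma)$ by \eqref{eq:state_occupancy}. This gives the displayed reduction, and hence \eqref{eq:exact_value_decomposition}. Interchanging the sums is justified because $g$ is bounded ($0\le g\le 1/(1-\gamma)$) and the state space is finite. Note that no sign condition is needed: the identity is exact.

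The main obstacle is minor: making sure $Q^*$ obeys the Bellman form with $V^*=\max_aQ^*$. The paper defines these objects as separate maxima over $\Pi$ and asserts $V^*(s)=\max_aQ^*(s,a)$. I would invoke existence of a uniformly optimal stationary deterministic policy $\pi^\star$, so that $Q^*=Q^{\pi^\star}$ and $V^*=V^{\pi^\star}$. The Bellman equation for $Q^{\pi^\star}$ then gives the needed form.
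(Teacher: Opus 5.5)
Your proposal is correct and is essentially the paper's proof. The paper likewise derives $V^*-V^\pi=b_\pi+\gamma P^\pi(V^*-V^\pi)$ for the optimal Bellman residual $b_\pi(s)=V^*(s)-\sum_a\pi(a\mid s)Q^*(s,a)$, unrolls it (iterating and bounding the tail by $\gamma^N/(1-\gamma)$ instead of invoking the Neumann series), averages over $\mu_0$ to obtain the occupancy, and splits $b_\pi=\Delta_D+\zeta_{\pi,D}$ using $\pi\in\Pi(D)$; the Bellman facts for $Q^*$ that you cite as standard are exactly what it proves in Lemma~\ref{lem:bellman}.
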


Theorem~\ref{thm:value_decomposition} identifies where retention enters the control problem: it bounds the filtering term, while the downstream policy determines the selection term. Both losses use the same unrestricted optimal continuation values $Q^*$. A policy optimal under future action restrictions can therefore still have positive selection loss; this term is not just an error in optimizing the actor.

For proxy retention to control filtering loss, the proxy must identify actions with small optimal value gaps. Suppose the proxy target consists of near maximizers of a score $\widetilde Q$, and its pairwise gaps approximate those of $Q^*$ up to a scale $\lambda>0$ and error $\delta_{\mathrm{prox}}$ (Assumption~\ref{ass:proxy_gap}). Proposition~\ref{prop:proxy_gap_transfer} gives
\[
\widetilde{\mathcal A}_{\varepsilon}(s)
\subseteq\mathcal A^*_{\lambda\varepsilon+\delta_{\mathrm{prox}}}(s).
\]
Thus a proxy hit retains an action whose optimal value gap is at most $\lambda\varepsilon+\delta_{\mathrm{prox}}$. For selection, a critic error bounded by $\delta_{\mathrm{crit}}$ relative to $Q^*$ and a maximization error $\delta_{\mathrm{sel}}(s)$ within the retained set give $\zeta_{\pi,D}(s)\le2\delta_{\mathrm{crit}}+\delta_{\mathrm{sel}}(s)$. Substituting these bounds into the identity yields Corollary~\ref{cor:combined_approximation}. The approximation conditions are needed in addition to calibration: a critic trained to evaluate the current policy need not approximate the unrestricted $Q^*$.

\vspace{-1ex}
\subsection{Adaptive Finite Session Guarantee}
\vspace{-1ex}
The preceding identity evaluates a stationary policy. During learning, however, the policy may change after every interaction. We handle these updates by applying the same loss decomposition at each stage and using the calibration bound along the resulting trajectory. Consider an undiscounted item MDP with $H$ stages, rewards in $[0,1]$, and optimal values $V_h^*,Q_h^*$. Define
\[
\begin{aligned}
\Delta_H^{\max}&=\max_{h,s,a}[V_h^*(s)-Q_h^*(s,a)],\\
\varepsilon_H&=\min\{\lambda\varepsilon+\delta_{\mathrm{prox}},\Delta_H^{\max}\}.
\end{aligned}
\]
Here $\Delta_H^{\max}$ bounds the loss of any action, while $\varepsilon_H$ bounds the loss of a retained proxy target under the stagewise proxy fidelity condition. Let $\overline\delta_{\mathrm{sel},H}$ be the average expected selection error measured by the critic within the retained set, and let $\overline h_H^K$ be the average probability that a raw proxy hit is lost under a cap of $K$ actions. Appendix~\ref{app:finite_session_conditions} states the full information and approximation conditions.

\begin{theorem}[Finite-session proxy-to-value guarantee]
\label{thm:finite_session}
Under Assumption~\ref{ass:finite_session_conditions}, let $\eta_1\ge\cdots\ge\eta_H>0$ and suppose the realized proxy-margin width is at most $W$ almost surely. For $V_1^{\mathrm{RLCP}}(\mu_0)=\mathbb E[\sum_{h=1}^HR_h]$,
\begin{equation}
\begin{aligned}
&V_1^*(\mu_0)-V_1^{\mathrm{RLCP}}(\mu_0)\\
&\quad\le H\Bigg[
(1-\alpha)\varepsilon_H+\alpha\Delta_H^{\max}
+2\delta_{\mathrm{crit}}+\overline\delta_{\mathrm{sel},H}\\
&\qquad{}+(\Delta_H^{\max}-\varepsilon_H)
\left(\frac{W+\eta_1}{\eta_HH}+\overline h_H^K\right)
\Bigg].
\end{aligned}
\label{eq:finite_session_value_bound}
\end{equation}
Without a size cap, take $K=|\mathcal A|$ and $\overline h_H^K=0$.
\end{theorem}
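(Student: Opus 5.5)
The plan is to use the finite-horizon performance-difference identity for a history-dependent learner, bound each stage's gap by a filtering part and a selection part, and then average over stages using Theorem~\ref{thm:pathwise}.

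\textbf{Step 1 (telescoping).} Let $s_h$ be the realized state and $a_h$ the executed action. By~\eqref{eq:controlled_markov_semantics} and the tower property,
\[
V_1^*(\mu_0)-V_1^{\mathrm{RLCP}}(\mu_0)=\mathbb E\Big[\sum_{h=1}^H\big(V_h^*(s_h)-Q_h^*(s_h,a_h)\big)\Big].
\]
This holds because $Q_h^*(s_h,a_h)=r(s_h,a_h)+\mathbb E[V_{h+1}^*(s_{h+1})\mid\mathcal H_h,a_h]$. The identity needs no stationarity of the policy, and it is the finite-horizon analogue of Theorem~\ref{thm:value_decomposition}.

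\textbf{Step 2 (stagewise split).} With $D_h$ the executed (possibly capped) set at stage $h$, I split each summand as
\[
V_h^*(s_h)-Q_h^*(s_h,a_h)=\Delta_{D_h}(s_h)+\zeta_h,
\]
where $\zeta_h=\max_{a\in D_h}Q_h^*(s_h,a)-Q_h^*(s_h,a_h)$.

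For the selection term, I invoke the critic condition of Assumption~\ref{ass:finite_session_conditions}, namely $|g$-induced estimate $-Q_h^*|\le\delta_{\mathrm{crit}}$. I also use the critic-measured selection error $\delta_{\mathrm{sel}}(s_h)$. The usual two-sided swap then gives $\mathbb E[\zeta_h\mid\mathcal H_h]\le2\delta_{\mathrm{crit}}+\delta_{\mathrm{sel},h}$. Averaging over $h$ yields $2\delta_{\mathrm{crit}}+\overline\delta_{\mathrm{sel},H}$.

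For the filtering term, I define the capped miss indicator $e_h^K$ and argue as follows:
\begin{enumerate}
\item On a capped hit, $D_h$ contains a proxy target action. Proposition~\ref{prop:proxy_gap_transfer}, in its stagewise form, gives $\Delta_{D_h}(s_h)\le\lambda\varepsilon+\delta_{\mathrm{prox}}$.
\item Trivially $\Delta_{D_h}(s_h)\le\Delta_H^{\max}$, so on a hit the loss is at most $\varepsilon_H$.
\item On a miss the loss is at most $\Delta_H^{\max}$.
\end{enumerate}
Hence
\[
\Delta_{D_h}(s_h)\le\varepsilon_H+(\Delta_H^{\max}-\varepsilon_H)e_h^K .
\]

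\textbf{Step 3 (calibration).} I bound $e_h^K\le\widetilde e_h+\Ind\{\text{raw hit lost by truncation}\}$. Summing, taking expectations, and applying Theorem~\ref{thm:pathwise} pathwise with $T=H$ and $W_H\le W$ a.s. gives
\[
\frac1H\sum_h\mathbb E[e_h^K]\le\alpha+\frac{W+\eta_1}{\eta_HH}+\overline h_H^K .
\]
Multiplying by $(\Delta_H^{\max}-\varepsilon_H)$ and adding $\varepsilon_H$ produces
\[
(1-\alpha)\varepsilon_H+\alpha\Delta_H^{\max}+(\Delta_H^{\max}-\varepsilon_H)\Big(\frac{W+\eta_1}{\eta_HH}+\overline h_H^K\Big).
\]
Multiplying by $H$ and combining with the selection bound gives~\eqref{eq:finite_session_value_bound}. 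With no cap, no truncation occurs, so $\overline h_H^K=0$.

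\textbf{Main obstacle.} The delicate point is the empty raw set. There the executed fallback set need not contain a proxy target, but $\widetilde e_h=1$ by definition, so it is already counted as a miss and bounded by $\Delta_H^{\max}$. The other point needing care is the measurability bookkeeping: the proxy target, critic, and threshold must all be $\mathcal H_h$-measurable so that the conditional bounds in Step 2 are legitimate. I expect Assumption~\ref{ass:finite_session_conditions} to be stated exactly to license these steps, and I would check that first.
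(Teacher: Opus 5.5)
Your proposal is correct and follows essentially the same route as the paper: the episodic performance identity, the stagewise split into $\Delta_h+\zeta_h$, the pathwise bound $\Delta_h\le\varepsilon_H+(\Delta_H^{\max}-\varepsilon_H)e_h^K$, the decomposition $e_h^K=\widetilde e_h+h_h^K$ combined with Theorem~\ref{thm:pathwise}, and the two-sided critic swap giving $\mathbb E[\zeta_h\mid\mathcal H_h]\le2\delta_{\mathrm{crit}}+\delta_{\mathrm{sel},h}$. Two small corrections: the critic condition is imposed on $\widehat Q_h$ directly, since Assumption~\ref{ass:finite_session_conditions} treats it separately from the score $g_h$, and the Bellman step should use the stage-dependent $r_h,P_h$ from that assumption rather than~\eqref{eq:controlled_markov_semantics}.
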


The first two terms bound filtering loss on proxy hits and misses at the target rate $\alpha$. The critic and selection errors account for the action chosen from the execution set. The final term adds the deviation from the target miss rate and the hits removed by the cap. All of these quantities are evaluated along the adaptive trajectory, so the bound does not require convergence of the critic, actor, or threshold. Its size still depends on the stated approximation errors.

The reward determines what this bound measures. When reward indicates a nonterminal state and departure follows the item dynamics, it bounds truncated session depth. A bound for click or like reward does not imply the same depth guarantee, and a departure rule that depends on the entire collection requires a collection action model. Appendix~\ref{subsec:recommendation_interpretation} discusses this distinction for recommendation.

\section{RLCP Algorithm}
\label{sec:algorithm}

RLCP constructs the retained set from a critic $\widehat Q_{\theta_t}$ and an online threshold, as shown in Figure~\ref{fig:rlcp_overview}. At time $t$, the score of an action is its estimated value gap from the best candidate:
\begin{equation}
g_{\theta_t}(s,a)
=\max_b\widehat Q_{\theta_t}(s,b)-\widehat Q_{\theta_t}(s,a).
\label{eq:critic_gap_score}
\end{equation}
Adding the same constant to all critic values at a state leaves these scores unchanged. Thresholding retains actions whose estimated gaps do not exceed $\tau_t$. If the raw set exceeds the cap, we keep the actions with the smallest gaps; if it is empty, the fallback supplies a critic maximizer. The downstream selector then chooses from this execution set. Algorithm~\ref{alg:rlcp} gives the complete procedure.

\begin{algorithm}[t]
\caption{RLCP with Online Action-Set Calibration}
\label{alg:rlcp}
\small
\begin{tabular}{@{}r@{\quad}p{0.89\linewidth}@{}}
 & \textbf{Input:} target miss rate $\alpha$; interaction budget $T$; threshold $\tau_1$; nonincreasing $\eta_t>0$; cap $K\ge1$; critic $\widehat Q_{\theta_1}$; selector $\pi_{\phi_1}$; proxy-target verifier.\\
1 & \textbf{for} $t=1,\ldots,T$ \textbf{do}\\
2 & \quad Observe $s_t$ and compute $g_{\theta_t}(s_t,\cdot)$ using~\eqref{eq:critic_gap_score}.\\
3 & \quad Form $C_t=\{a:g_{\theta_t}(s_t,a)\le\tau_t\}$.\\
4 & \quad Keep the $\min\{K,|C_t|\}$ lowest-score actions to obtain $C_t^K$.\\
5 & \quad Set $D_t=C_t^K$ if nonempty; otherwise set $D_t=\{a_{\theta_t}^{\min}(s_t)\}$.\\
6 & \quad Define the proxy target for the current state and obtain the raw miss $\widetilde e_t=\Ind\{C_t\cap\widetilde{\mathcal A}_{\varepsilon,t}(s_t)=\emptyset\}$.\\
7 & \quad Sample $a_t\sim\pi_{\phi_t}(\cdot\mid s_t,D_t)$; observe $R_t,s_{t+1}$.\\
8 & \quad Update $\tau_{t+1}=\tau_t+\eta_t(\widetilde e_t-\alpha)$.\\
9 & \quad Update the critic and selector from interaction data.\\
10 & \textbf{end for}\\
\end{tabular}
\end{algorithm}

{\bf Calibration feedback.}
The threshold update uses a binary signal indicating whether the raw set intersects the proxy target. This signal evaluates retention before the cap and fallback; neither operation changes the miss used for calibration. A verifier supplies the signal without revealing $Q^*$, and our simulator implements the verifier through its response model. For Theorem~\ref{thm:finite_session}, the target must be defined before the current action is sampled, although the label may be revealed later. A target defined from the current realized responses can still be calibrated pathwise, but it does not satisfy this information condition. Feedback on the executed item alone generally cannot identify the required set event, so deployment requires a suitable verifier or counterfactual response model.

{\bf Learning and selection.}
Interaction data update the critic and the selector, while raw proxy misses update the threshold. In our implementation, calibration is updated more frequently than the critic. The pathwise guarantee uses the unprojected update in~\eqref{eq:online_threshold_update}; replacing raw misses with capped misses is a different procedure.

The selector also determines whether pruning changes the executed action. The critic gap score, cap, and fallback all preserve a global critic maximizer. An exactly greedy selector using the same critic and tie rule therefore chooses the same action with or without pruning (Proposition~\ref{prop:greedy_preservation}). RLCP instead regulates the alternatives available to a stochastic actor, slate constructor, or user choice mechanism. The experiments compare two such implementations.

\section{Experiments}
\label{sec:experiments}

We evaluate RLCP on the whole-session sequential recommendation task provided by the KuaiSim simulator~\citep{zhao2023kuaisim}. This protocol emulates online policy interaction through full-session rollouts, in which KuaiSim generates user feedback, computes the corresponding rewards, and determines when the user leaves the session. Our experiments are designed to answer the following questions:
\begin{itemize}
\vspace{-2ex}
    \item \textbf{Q1:} Can RLCP improve long-term recommendation performance under simulator-based sequential interaction?
    \vspace{-1ex}
    \item \textbf{Q2:} Can RLCP produce reliable admissible action sets that contain high-quality recommendations while reducing the effective action space?
      \vspace{-1ex}
\end{itemize}
\vspace{-1ex}
\subsection{Datasets and Simulator}
\label{subsec:datasets}
\vspace{-1ex}
We evaluate RLCP in the KuaiSim whole-session setting on
KuaiRand-Pure~\citep{gao2022kuairand}, the dataset used to construct
the original simulator, and on the MovieLens 1M migration demonstrated
by KuaiSim~\citep{harper2015movielens,zhao2023kuaisim}. KuaiRand-Pure comes from Kuaishou logs in which uniformly sampled videos were inserted into ordinary feeds. KuaiSim contains six positive behaviors---click, long view, like, comment, follow, and forward---together with dislike and session departure. MovieLens 1M contains about 1M timestamped ratings. Following the KuaiSim migration, ratings above 3 are mapped to positive feedback (\emph{like}), ratings at most 3 to negative feedback (\emph{hate}), interactions are ordered chronologically, and day-level groups define sessions. Table~\ref{tab:dataset_stats} reports statistics for both datasets.

\begin{table}[h]
\centering
\small
\setlength{\tabcolsep}{2.5pt}
\caption{Statistics of the datasets used.}
\label{tab:dataset_stats}
\begin{tabular}{lrrrrr}
\hline
Dataset                                                  & Users  & Items & Interactions & Sessions & Density \\ \hline
\begin{tabular}[c]{@{}l@{}}KuaiRand\\ -Pure\end{tabular} & 27,077 & 7,551 & 1,436,609    & 246,738  & 0.70\%  \\
ML-1M                                                    & 6,400  & 3,706 & 1,000,208    & 16,629   & 4.22\%  \\ \hline
\end{tabular}
\end{table}

We modify KuaiSim's leave module to make session length depend on the responses elicited by a recommendation. At each request, the recommender observes a user profile and recent interaction history, presents an item collection, and receives feedback from the immediate response model. We use \texttt{is\_click} and \texttt{is\_like} as reward signals. In our simulator inspection, click responses varied little across users, while like responses were more sensitive to user preferences.

At step $t$, the leave module evaluates the admissible set $C_{\theta,\tau}(s_t)$ for RLCP and the displayed slate for a baseline with fixed slate size. Patience decreases by $p$ if the collection receives at least one positive response and by $q>p$ otherwise. The session ends when remaining patience falls below $1$. We test $(p,q)=(1,2)$ and $(0.4,2)$, so a failure to elicit a positive response incurs the larger decrement in both settings.

\begin{figure*}[t]
  \centering

  \begin{subfigure}[t]{0.245\textwidth}
    \centering
    \includegraphics[width=\linewidth]{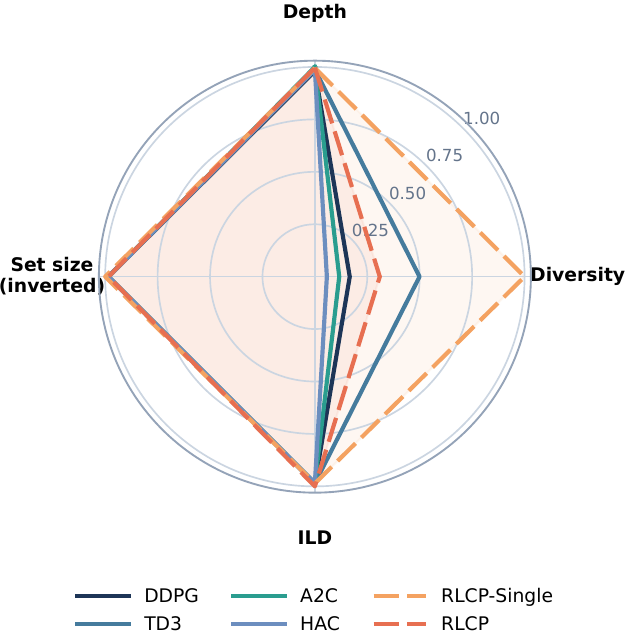}
    \caption{Is\_click, $p, q = (1,2)$.}
    \label{fig:kuairand_radar_1}
  \end{subfigure}\hfill
  \begin{subfigure}[t]{0.245\textwidth}
    \centering
    \includegraphics[width=\linewidth]{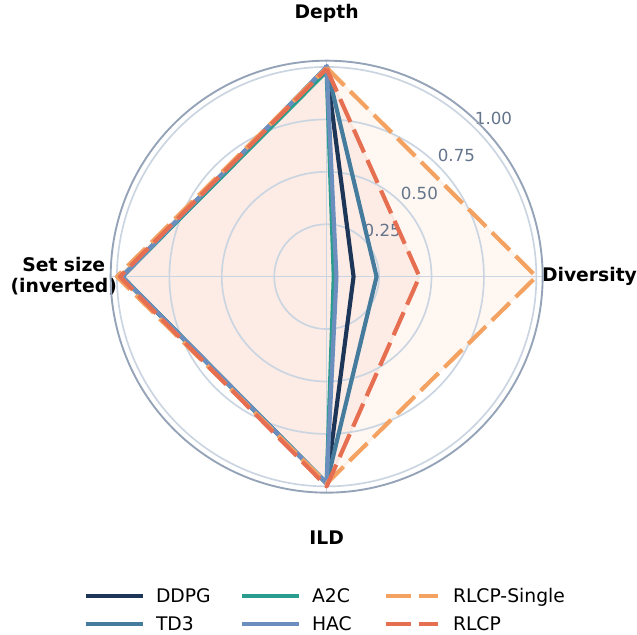}
    \caption{Is\_click, $p,q=(0.4,2)$.}
    \label{fig:kuairand_radar_2}
  \end{subfigure}\hfill
  \begin{subfigure}[t]{0.245\textwidth}
    \centering
    \includegraphics[width=\linewidth]{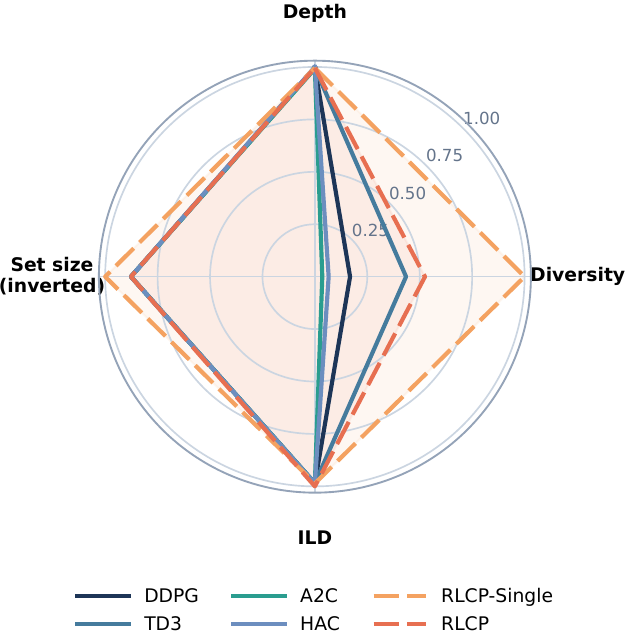}
    \caption{Is\_like, $p,q=(1,2)$.}
    \label{fig:kuairand_radar_3}
  \end{subfigure}\hfill
  \begin{subfigure}[t]{0.245\textwidth}
    \centering
    \includegraphics[width=\linewidth]{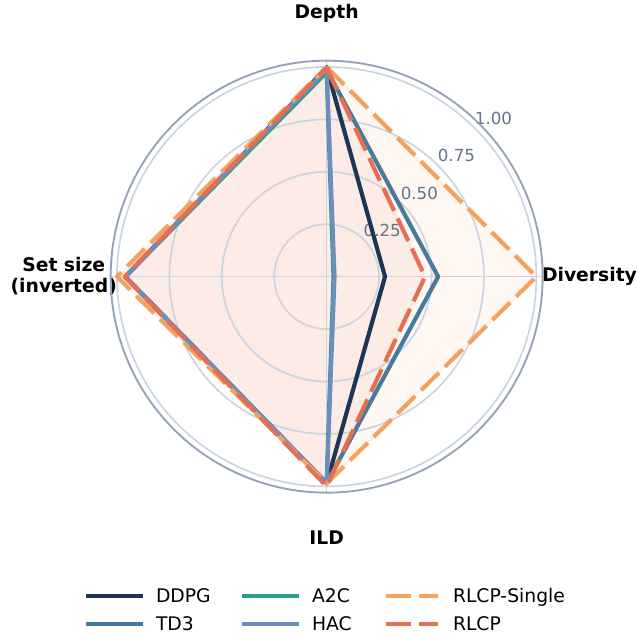}
    \caption{Is\_like, $p,q=(0.4,2)$.}
    \label{fig:kuairand_radar_4}
  \end{subfigure}

  \caption{KuaiRand-Pure radar summaries for the four reward and patience configurations reported in Tables~\ref{tab:kuairand_pq1_slate}--\ref{tab:kuairand_pq04_like_slate}. Each panel averages over the baseline slate sizes in the corresponding table and shows depth, catalog diversity, intra-list diversity, and inverted set size, with larger radius indicating better performance. RLCP-Single gives the strongest catalog-diversity gains on KuaiRand-Pure while using equal or smaller admissible sets, whereas the full RLCP consistently attains the highest intra-list diversity.}
  \label{fig:kuairand_setup_radars}
\end{figure*}

\vspace{-1ex}
\subsection{Baselines and Metrics}
\label{subsec:baselines}
\vspace{-1ex}
We compare RLCP with four whole-session sequential recommendation baselines provided by KuaiSim~\citep{zhao2023kuaisim}: A2C~\citep{mnih2016asynchronous}, DDPG~\citep{lillicrap2016continuous}, TD3~\citep{fujimoto2018addressing}, and HAC~\citep{liu2023exploration}; detailed descriptions are provided in Appendix~\ref{app:baseline}.

We report session depth, average and total reward, catalog diversity, intra-list diversity (ILD), and average admissible-set size, following KuaiSim~\citep{zhao2023kuaisim}. Depth is the primary session-level utility metric; catalog diversity measures the number of distinct items exposed across active user requests, ILD measures dissimilarity within a recommendation set, and set size measures filtering efficiency. Formal definitions are provided in Appendix~\ref{app:evaluation_metrics}.
\vspace{-1ex}
\subsection{Method Implementation}
\label{subsec:method_instantiation}
\vspace{-1ex}
We instantiate RLCP with two variants that share the conformal set construction, online calibration objective, and coverage target. The realized state $s_t$ concatenates the user profile with the full within-session history of rollout actions $(a_0,a_1,\ldots,a_{t-1})$, so the critic and rollout policy condition on all previously executed actions. The variants differ only in how the admissible set becomes a rollout action. Full RLCP follows Algorithm~\ref{alg:rlcp}: $C_{\theta,\tau}(s)$ constrains a downstream slate actor trained through sequential rollouts, while $\tau$ is updated online to maintain the target miss rate for near-optimal items. Critic/score updates occur on the slow timescale, and slate-actor rollouts and threshold calibration on the fast timescale.

\begin{figure*}[t]
  \centering

  \begin{subfigure}[t]{0.245\textwidth}
    \centering
    \includegraphics[width=\linewidth]{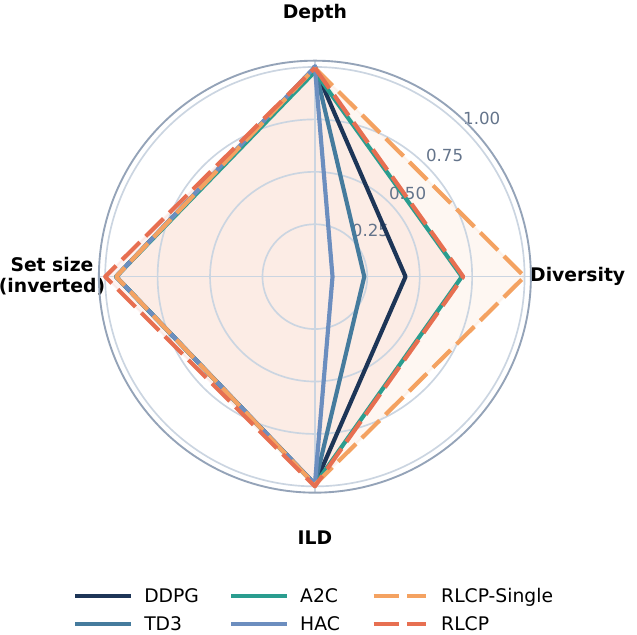}
    \caption{Is\_click, $p, q = (1,2)$.}
    \label{fig:ml1m_radar_1}
  \end{subfigure}\hfill
  \begin{subfigure}[t]{0.245\textwidth}
    \centering
    \includegraphics[width=\linewidth]{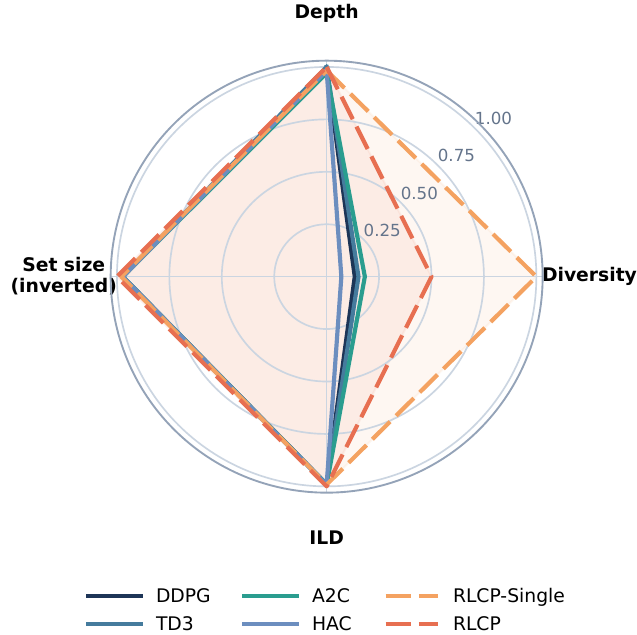}
    \caption{Is\_click, $p, q = (0.4,2)$.}
    \label{fig:ml1m_radar_2}
  \end{subfigure}\hfill
  \begin{subfigure}[t]{0.245\textwidth}
    \centering
    \includegraphics[width=\linewidth]{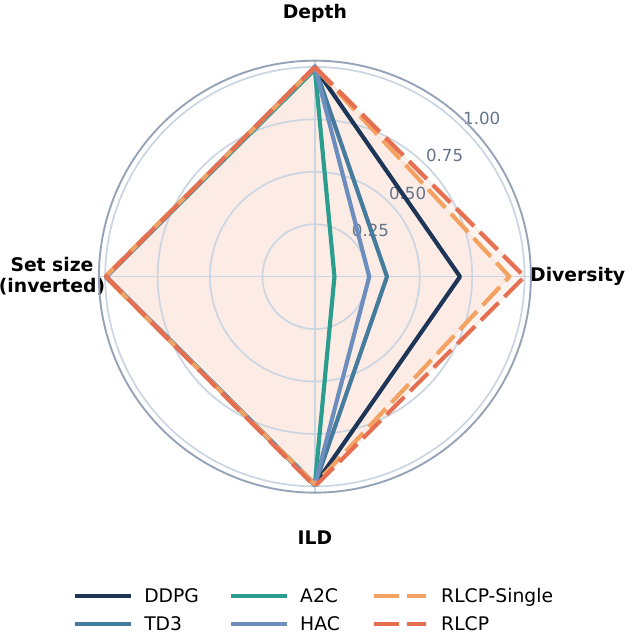}
    \caption{Is\_like, $p, q = (1,2)$.}
    \label{fig:ml1m_radar_3}
  \end{subfigure}\hfill
  \begin{subfigure}[t]{0.245\textwidth}
    \centering
    \includegraphics[width=\linewidth]{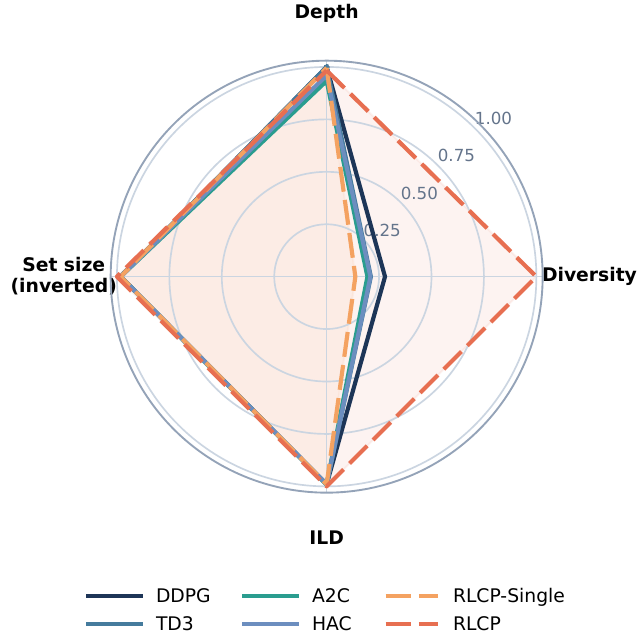}
    \caption{Is\_like, $p, q = (0.4,2)$.}
    \label{fig:ml1m_radar_4}
  \end{subfigure}

  \caption{ML-1M radar summaries for the four reward and patience configurations reported in Tables~\ref{tab:ml1m_pq1_slate}--\ref{tab:ml1m_pq04_like_slate}. Each panel averages over the baseline slate sizes in the corresponding table and uses the same normalized axes as Figure~\ref{fig:kuairand_setup_radars}. On the denser ML-1M data, the full RLCP is more competitive in terms of depth and catalog diversity while maintaining maximal intra-list diversity, and the RLCP variants retain their overall diversity advantage over the fixed-slate baselines.}
  \label{fig:ml1m_setup_radars}
  \vspace{-3ex}
\end{figure*}

RLCP-Single instead feeds the admissible set directly into the environment and uses the simulator's first rewarded item as the rollout action, imitating real-world user behavior. This simplifies rollout while preserving the online calibration objective. Both variants target $1-\alpha$ coverage of $\varepsilon$-optimal items, apply a minimum-set-size rescue when required (usually 1), and use the same conformal mask logic to bound set size.

{\bf Environment-Side Proxy.} To operationalize the $\varepsilon$-optimal action set in \eqref{eq:oracle_action_set}, we use the Bernoulli success probability $\hat p_b(s_t,a)$ from the simulator's immediate-response model and set
$
\widetilde Q(s_t,a)=\hat p_b(s_t,a),
$
which induces
\[
\widetilde{\mathcal A}_\varepsilon(s_t)
=
\left\{a:
\widetilde Q(s_t,a)
\ge
\max_j\widetilde Q(s_t,j)-\varepsilon
\right\}
\]
as in \eqref{eq:proxy_target_set}.
This conditional one-step expected reward is biased for the long-horizon $Q^*$ in \eqref{eq:oracle_action_set}. The numerical value of $\widetilde Q$ is \textit{never} exposed to the actor or critic; the simulator supplies only binary feedback indicating whether $C_{\theta_t,\tau_t}(s_t)\cap\widetilde{\mathcal A}_{\varepsilon}(s_t)$ is nonempty. This emulates positive user feedback when a recommendation set contains a satisfactory item.

{\bf Results.} Across all 19 configurations, {at least one RLCP variant achieves the highest catalog diversity without requiring a larger retained set}. Within each configuration, the RLCP cap equals the baseline slate size $M$, and all methods share the simulator, reward signal, and patience setting. Appendix Tables~\ref{tab:kuairand_pq1_slate} through~\ref{tab:ml1m_pq04_like_slate} provide the full numerical results. These comparisons show broader catalog exposure with competitive session depth under matched set size budgets.

As shown in Figure~\ref{fig:kuairand_setup_radars}, RLCP-Single provides the strongest catalog exposure on KuaiRand-Pure. It achieves the highest catalog diversity in all ten configurations, reaching {$1.48\times$ to $4.56\times$ that of the strongest baseline}. Its mean retained set is smaller than $M$ in nine configurations, with a maximum reduction of $25.5\%$, and equals $M$ in the remaining configuration. It also attains or ties the highest session depth in six configurations and stays within $0.4$ interactions of the best method in the others. Full RLCP achieves the highest reported ILD of $1.00$ throughout, but RLCP-Single generally provides a better balance between catalog exposure and session utility.

As shown in Figure~\ref{fig:ml1m_setup_radars}, the exposure advantage persists on ML-1M, where full RLCP becomes more competitive. It leads catalog diversity in five configurations, exceeds the strongest baseline in eight, and attains or ties the highest session depth in seven. It also maintains ILD $1.00$ throughout and reduces mean set size below $M$ in four configurations. RLCP-Single leads catalog diversity in the remaining four, giving the better variant in each configuration {$1.11\times$ to $5.21\times$ the catalog diversity of the strongest baseline}. Under the longer patience setting, this range is $3.17\times$ to $5.21\times$, although several methods reach the depth ceiling of $48$. Thus the main benefit is broader exposure with competitive depth, rather than uniformly higher reward or a single variant that dominates every setting.

\section{Conclusion}

We introduced RLCP, a conformal policy-learning framework for whole-session sequential recommendation. RLCP decouples critic-induced item scores from the calibrated threshold, producing state-dependent, variable-size sets rather than fixed-cardinality top-$M$ slates. The threshold provides pathwise control of the realized average miss rate, while two-timescale learning jointly updates the score model, calibration threshold, and rollout policy. Across KuaiRand-Pure and ML-1M, RLCP remains competitive with fixed-slate baselines in session depth while using sets of equal or smaller size and improving diversity. An RLCP variant achieves the highest catalog diversity in every configuration, and full RLCP consistently achieves the highest within-slate diversity. These results support conformal policy learning for retaining near-optimal recommendations, controlling set size, and broadening exposure without fixing slate cardinality in advance.

\bibliography{main}
\bibliographystyle{apalike}

\appendix
\thispagestyle{empty}

\onecolumn
\aistatstitle{
Supplementary Materials}

\section{Related Work}
\label{app:related_work}

RLCP draws on sequential control, conformal recommendation, and online calibration. The distinction across these areas is what is calibrated and how that quantity is connected to the subsequent decision. We organize the comparison around this distinction rather than treating the combination of RL and calibration as new.

{\bf RL for Sequential Recommendation.} RL recommender systems optimize utility across a sequence of interactions rather than only immediate feedback~\citep{afsar2022reinforcement,chen2021survey,zhao2018deep}. Within this framework, SlateQ makes slate optimization tractable by decomposing slate values under a user choice model~\citep{ie2019reinforcement}, while HAC learns continuous slate representations and a mapping from latent actions to item lists~\citep{liu2023exploration}. KuaiSim supplies whole session evaluation protocols and actor and critic baselines~\citep{zhao2023kuaisim}, and work on value estimation further distinguishes policy randomness from user response randomness~\citep{wang2025value}. These methods address how to evaluate or select recommendations. RLCP addresses a complementary question: which candidate actions should remain available to that selection rule, and how does their retention affect sequential value? Its item model does not cover every reward or transition rule that depends on a slate.

{\bf Conformal Recommendation.} Conformal methods provide a way to construct sets around a base prediction model while controlling a statistical error criterion~\citep{vovk2005algorithmic,angelopoulos2023gentle}. In recommendation, inductive conformal methods attach conformal $p$-values to candidate items~\citep{kagita2022inductive}, and distribution free recommendation sets calibrate ranking models against the false discovery rate (FDR)~\citep{angelopoulos2023recommendation}. Without explicitly optimizing long-term returns, CPFT incorporates conformal objectives into the fine tuning of sequential recommenders~\citep{wang2024confidence}, while conformal risk control has been used to limit unwanted recommendations~\citep{detoni2025flowers}. These approaches motivate adapting set size to a calibrated target. However, a recommendation target need not identify actions with high continuation value under an interacting policy. RLCP therefore distinguishes proxy retention from oracle retention and states the approximation condition needed to connect them.

{\bf Online Calibration and Decision Making.} Because RLCP changes the policy and the visited state distribution during learning, its calibration argument uses methods for changing data sequences. Adaptive conformal inference controls long-run empirical miscoverage under distribution changes~\citep{gibbs2021adaptive,gibbs2024online}. Multivalid methods provide richer guarantees across groups~\citep{bastani2022practical}, while methods with decaying step sizes combine guarantees for arbitrary sequences with population quantile consistency under stable sampling~\citep{angelopoulos2024online}. RLCP uses this established scalar update principle. Theorem~\ref{thm:pathwise} makes its dependence on the realized action margin range explicit and supplies the calibration term used in the value analysis. Online miscoverage tracking itself is not presented as a new principle.

Conformal methods also support motion planning~\citep{dixit2023adaptive}, return prediction for a policy different from the data collection policy~\citep{zhang2023conformal}, and cost aware orchestration of LLM agents~\citep{si2025ccpo}. In particular, CCPO jointly trains a policy and an adaptive conformal threshold, so RLCP is not distinguished by that broad combination. The distinction lies in the action filtering formulation and the analysis connecting retention to value. A proxy hit must first be related to an oracle hit, and the retained action must then be used by the selector. The value decomposition and finite session bound describe these two steps without assuming convergence of the coupled updates.

\section{Additional Modeling Details}
\label{app:modeling_details}

The main formulation treats an executed item as the MDP action. We now explain how this abstraction relates to a recommendation pipeline, where retrieval and collection based response rules introduce additional conditions.
\subsection{Sequential Recommendation Interpretation}
\label{subsec:recommendation_interpretation}
The first requirement concerns the state: it must contain the information needed to predict both immediate feedback and the next user state. A representative state is
\[
s_t=(x_u,h_t^a,h_t^y,\ell_t,c_t),
\]
Here $x_u$ is a user profile, $h_t^a$ and $h_t^y$ are action and response histories, $\ell_t$ is remaining patience, and $c_t$ is request context. An absorbing state represents user departure. The finite state model is an analytical idealization; in a large system, a learned representation must approximate the same predictive information.

The second requirement concerns retrieval. The finite action space can represent an eligible item pool, but pruning within that pool cannot recover an action already removed by retrieval. To separate these two losses, fix a state distribution $\nu$, let $\mathcal A_{\mathrm{ret}}(s)\subseteq\mathcal A$ be a finite nonempty retrieved pool, and let $C(s)\subseteq\mathcal A_{\mathrm{ret}}(s)$ be the raw set retained from it. Fix $\varepsilon_{\mathrm{ret}},\varepsilon_{\mathrm{filt}}\ge0$ and $\rho\in[0,1)$, and define
\begin{align*}
G_{\mathrm{ret}}(s)
&=\left\{
\max_{a\in\mathcal A_{\mathrm{ret}}(s)}Q^*(s,a)
\ge V^*(s)-\varepsilon_{\mathrm{ret}}
\right\},\\
G_{\mathrm{filt}}(s)
&=\bigl\{\exists a\in C(s)\text{ such that}\notag\\[-0.25ex]
&\hspace{2em}Q^*(s,a)\ge
\max_{b\in\mathcal A_{\mathrm{ret}}(s)}Q^*(s,b)
-\varepsilon_{\mathrm{filt}}
\bigr\}.
\end{align*}
If $\Pr_{s\sim\nu}(G_{\mathrm{ret}}^c)\le\rho$ and
$\Pr_{s\sim\nu}(G_{\mathrm{filt}}^c\mid G_{\mathrm{ret}})\le\alpha$, then on
$G_{\mathrm{ret}}\cap G_{\mathrm{filt}}$ the retained action has global optimality gap at most
$\varepsilon_{\mathrm{ret}}+\varepsilon_{\mathrm{filt}}$. The law of total probability gives
\[
\Pr_{s\sim\nu}\!\left(
C(s)\cap
\mathcal A^*_{\varepsilon_{\mathrm{ret}}+\varepsilon_{\mathrm{filt}}}(s)
=\emptyset
\right)
\le\rho+(1-\rho)\alpha.
\]
Thus the value tolerances add, while the miss probability accounts first for retrieval failure and then for filtering failure conditional on retrieval success. To retain the global tolerance $\varepsilon$, the two tolerances must sum to at most $\varepsilon$. Alternatively, filtering can be calibrated directly against
$\mathcal A^*_{\varepsilon}(s)\cap\mathcal A_{\mathrm{ret}}(s)$ conditional on that intersection being nonempty.

The final requirement concerns how a retained collection becomes an executed item. Let $\kappa(a\mid s,D)$ be a stationary Markov choice kernel supported on $D$, sampled before the current reward and next state are generated. This kernel may combine a downstream actor, slate construction, and user choice. For a stationary set map $D$, the induced item policy is
\begin{equation}
\pi_{D,\kappa}(a\mid s)=\kappa(a\mid s,D(s)).
\label{eq:induced_recommendation_policy}
\end{equation}

For this policy to reproduce the recommendation process, the selected item must account for the dependence of the current reward and next state on the collection. The following assumption states that requirement conditional on all information available before selection.

\begin{assumption}[Pre-response item-mediated dynamics]
\label{ass:item_mediation}
There is an item-level reward and transition kernel $K(\mathrm d r,s'\mid s,a)$ such that, under the pipeline law $\Pr\nolimits_D$ induced by every stationary set map $D$ with $D(s)\ne\emptyset$ for all $s$, the following factorization holds conditional on the full pre-action history $\mathcal H_t$:
\begin{equation}
\begin{split}
&\Pr\nolimits_D\bigl(a_t=a,R_t\in\mathrm d r,s_{t+1}=s'
\mid\mathcal H_t\bigr)\\
&\qquad=\kappa(a\mid s_t,D(s_t))K(\mathrm d r,s'\mid s_t,a).
\end{split}
\label{eq:item_mediation_factorization}
\end{equation}
The kernel agrees with the item-level MDP primitives:
\begin{equation}
\begin{aligned}
\int K(\mathrm d r,s'\mid s,a)
&=P(s'\mid s,a),\\
\sum_{s'}\int r\,K(\mathrm d r,s'\mid s,a)
&=r(s,a).
\end{aligned}
\label{eq:item_kernel_marginals}
\end{equation}
Thus, conditional on the state and executed item, the reward and next-state distribution do not depend further on the admissible set or displayed slate.
\end{assumption}

\begin{proposition}[Recommendation pipeline reduction]
\label{prop:recommendation_reduction}
Under Assumption~\ref{ass:item_mediation}, $\pi_{D,\kappa}\in\Pi(D)$, and the recommendation pipeline has the same reward and state-transition law as the item-level MDP controlled by $\pi_{D,\kappa}$. Hence all restricted-control and value results below apply to the induced policy.
\end{proposition}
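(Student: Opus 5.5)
Two things must be checked: that $\pi_{D,\kappa}\in\Pi(D)$, and that the pipeline and the item-level MDP driven by $\pi_{D,\kappa}$ generate the same law over trajectories. Once both hold, every quantity in the restricted-control and value results ($V^\pi$, $Q^\pi$, $d^\pi$, $\Delta_D$, $\zeta_{\pi,D}$) is a functional of that law, so those results transfer directly.

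For membership, $\kappa(\cdot\mid s,D)$ is assumed to be a stationary Markov kernel supported on $D$. Hence $\pi_{D,\kappa}(\cdot\mid s)=\kappa(\cdot\mid s,D(s))$ defined in \eqref{eq:induced_recommendation_policy} is a probability distribution on $\mathcal A$ that depends only on $s$, with support in $D(s)\neq\emptyset$. So $\pi_{D,\kappa}$ is a stationary Markov policy in $\Pi(D)$.

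For equality of laws, I would argue by induction on $t$ over finite trajectory prefixes $(s_0,a_0,R_0,\dots,s_t)$. Both processes start from $\mu_0$. For the inductive step, Assumption~\ref{ass:item_mediation} gives the one-step conditional law of $(a_t,R_t,s_{t+1})$ given the full pre-action history $\mathcal H_t$ under $\Pr_D$. It equals $\kappa(a\mid s_t,D(s_t))K(\mathrm dr,s'\mid s_t,a)$, which depends on $\mathcal H_t$ only through $s_t$.

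In the item-level MDP under $\pi_{D,\kappa}$, the action is drawn from $\pi_{D,\kappa}(\cdot\mid s_t)$, which is the same factor $\kappa(a\mid s_t,D(s_t))$. The reward and next state are then drawn from $K(\cdot,\cdot\mid s_t,a)$, taking $K$ as the item-level kernel. By \eqref{eq:item_kernel_marginals}, this kernel matches $P$ and $r$, so the interaction semantics \eqref{eq:controlled_markov_semantics} hold.

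Chaining these conditional laws and integrating out the extra pipeline information in $\mathcal H_t$ (slates, internal randomness) by the tower property shows that the finite-dimensional marginals on $(s,a,R)$ coincide. Kolmogorov extension then gives equality of the full trajectory laws.

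The main obstacle is a subtlety: the pipeline's history $\mathcal H_t$ is richer than the MDP trajectory. The factorization must therefore be used conditional on $\mathcal H_t$ and then projected onto the trajectory $\sigma$-algebra. This projection works because the right-hand side of \eqref{eq:item_mediation_factorization} is measurable with respect to $s_t$ alone.

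A second, minor point concerns the value definitions. If only the expected reward $r$ enters $V^\pi$ and $Q^\pi$, it suffices to match the state-action process together with the conditional mean reward. The full reward law is not needed, and the second marginal condition in \eqref{eq:item_kernel_marginals} supplies exactly this. Then $V^{\pi_{D,\kappa}}$, $d^{\pi_{D,\kappa}}$, and the filtering and selection losses computed for the pipeline equal their item-MDP counterparts, so Theorem~\ref{thm:value_decomposition} and the other results apply to the induced policy.
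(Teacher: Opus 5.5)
Your proposal is correct and takes the same route as the paper. Membership in $\Pi(D)$ follows from the support of $\kappa(\cdot\mid s,D(s))$, and equality of the trajectory laws follows by applying \eqref{eq:item_mediation_factorization} conditional on $\mathcal H_t$ and inducting from the common initial distribution. Your extra steps fill in details the paper leaves implicit: projecting from the richer pipeline history onto the trajectory $\sigma$-field (valid because the right-hand side depends on $\mathcal H_t$ only through $s_t$), and noting that only the conditional mean reward enters the value functions.
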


\begin{proof}
For every state, $\kappa(\cdot\mid s,D(s))$ is a probability distribution supported on $D(s)$, so~\eqref{eq:induced_recommendation_policy} defines a stationary policy in $\Pi(D)$. Conditional on the full pre-action history, factorization~\eqref{eq:item_mediation_factorization} gives the same one-step joint law of the action, reward, and next state as this item-level policy. Starting from the same initial distribution and applying this equality inductively establishes equality of the complete trajectory laws.
\end{proof}

The order in Assumption~\ref{ass:item_mediation} matters. For example, a rule that observes the current response vector and then selects the first rewarded item does not satisfy this order: conditioning on the selected item changes its response law because selection used that response. The process can instead be modeled with the retained set or ordered slate as the action and an environment kernel that jointly generates responses, the selected item, reward, and the next state. However, the item value results do not automatically transfer to this different MDP. The pathwise calibration result still concerns the realized proxy misses, and Proposition~\ref{prop:patience_loss} separately covers a leave rule based on any positive response in the collection.

In summary, the item abstraction applies when an item selected before feedback determines the next interaction. When position, competition, or exposure to unchosen items also changes reward or transition, the collection must instead be the primitive action. In either model, a value bound concerns the specified reward: a click or like reward does not by itself establish a depth bound without an appropriate relation to departure.

\section{Additional Supporting Results}
\label{app:additional_theory}

The supporting results follow the same order as the main argument. We first establish the population benchmark and its characterization under a fixed policy. We then account for a cap on set size and state the conditions that transfer proxy retention and approximate selection to value.
\subsection{Restricted Control and Existence}
Before optimizing a threshold, we must ensure that every nonempty execution set supports a well defined control problem. For a set map $D$ satisfying $D(s)\ne\emptyset$ for every state, define
\begin{equation}
(\mathcal T_Dv)(s)=\max_{a\in D(s)}\left[r(s,a)+\gamma\sum_{s'}P(s'\mid s,a)v(s')\right].
\label{eq:restricted_bellman_operator}
\end{equation}

This is the Bellman operator with maximization restricted to retained actions. The next result establishes its optimal value and an occupancy representation, which is also needed to select a unique response for the outer objective.

\begin{theorem}[Restricted control is well defined]
\label{thm:inner_wellposed}
For every set map $D$ satisfying $D(s)\ne\emptyset$ for all $s$, $\mathcal T_D$ is a $\gamma$ contraction and has a unique fixed point $V_D^*$. A stationary deterministic policy in $\Pi(D)$ attains this value. The lower problem also admits a normalized discounted occupancy-measure LP whose optimum is $(1-\gamma)V_D^*(\mu_0)$, with attainment, zero duality gap, and a deterministic lexicographic rule that selects one optimal occupancy.
\end{theorem}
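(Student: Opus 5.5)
The plan has four steps: contraction of $\mathcal T_D$, a greedy deterministic policy attaining $V_D^*$, the occupancy LP and its dual, and a lexicographic tie-breaking rule.

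\textbf{Contraction and fixed point.} For $v,w\in\mathbb R^{\mathcal S}$ and any state $s$, we use $|\max_{a\in D(s)}f(a)-\max_{a\in D(s)}g(a)|\le\max_{a\in D(s)}|f(a)-g(a)|$. This is valid because $D(s)$ is finite and nonempty. It gives
\[
|(\mathcal T_Dv)(s)-(\mathcal T_Dw)(s)|\le\gamma\max_{a\in D(s)}\sum_{s'}P(s'\mid s,a)|v(s')-w(s')|\le\gamma\|v-w\|_\infty .
\]
Banach's fixed point theorem on the complete space $(\mathbb R^{\mathcal S},\|\cdot\|_\infty)$ then yields a unique fixed point $V_D^*$.

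\textbf{Attainment by a deterministic policy.} Let $\pi_D$ choose, at each $s$, the first action in a fixed order that attains the maximum in $\mathcal T_DV_D^*(s)$. Since this action lies in $D(s)$, we have $\pi_D\in\Pi(D)$. The policy evaluation operator $\mathcal T^{\pi_D}$ is also a $\gamma$-contraction. It satisfies $\mathcal T^{\pi_D}V_D^*=\mathcal T_DV_D^*=V_D^*$, so $V^{\pi_D}=V_D^*$.

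For optimality over $\Pi(D)$, take any $\pi\in\Pi(D)$. Because $\pi(\cdot\mid s)$ is supported on $D(s)$, $\mathcal T^{\pi}v\le\mathcal T_Dv$ pointwise. Both operators are monotone, so iterating gives $V^\pi=\lim(\mathcal T^\pi)^nV^\pi\le\lim\mathcal T_D^nV^\pi=V_D^*$. Hence $\max_{\pi\in\Pi(D)}V^\pi(\mu_0)=V_D^*(\mu_0)$, attained by $\pi_D$.

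\textbf{Occupancy LP.} The primal variables are $x(s,a)\ge0$ for $a\in D(s)$; we set $x(s,a)=0$ off $D$. The constraints are the normalized flow equations
\[
\sum_{a\in D(s')}x(s',a)=(1-\gamma)\mu_0(s')+\gamma\sum_{s,a}P(s'\mid s,a)x(s,a)\quad\forall s',
\]
and the objective is $\max\sum x(s,a)r(s,a)$.

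Every $\pi\in\Pi(D)$ induces the feasible point $x^\pi(s,a)=d^\pi(s)\pi(a\mid s)$. Its objective value is $(1-\gamma)V^\pi(\mu_0)$, using \eqref{eq:state_occupancy} and $\sum_t\gamma^t\mathbb E r=V^\pi$.

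Conversely, any feasible $x$ has state marginal $\bar x(s)=\sum_a x(s,a)$. Where $\bar x(s)>0$, define $\pi_x(a\mid s)=x(s,a)/\bar x(s)$; elsewhere, use any action in $D(s)$. Then $\bar x$ solves $\bar x^\top=(1-\gamma)\mu_0^\top+\gamma\bar x^\top P_{\pi_x}$. The matrix $I-\gamma P_{\pi_x}$ is invertible, so this system has a unique solution, and therefore $\bar x=d^{\pi_x}$ and $x=x^{\pi_x}$. The feasible set is thus exactly the set of occupancies of $\Pi(D)$. It is nonempty and bounded, since total mass is $1$, so the LP attains its optimum and the optimal value is $(1-\gamma)V_D^*(\mu_0)$.

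\textbf{Zero duality gap.} The dual is $\min(1-\gamma)\sum_s\mu_0(s)v(s)$ subject to $v(s)\ge r(s,a)+\gamma\sum_{s'}P(s'\mid s,a)v(s')$ for $a\in D(s)$. The point $v=V_D^*$ is dual feasible and achieves $(1-\gamma)V_D^*(\mu_0)$. This matches the primal value, so the gap is zero; strong LP duality gives the same conclusion.

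\textbf{Lexicographic selection (main obstacle).} The optimal face is a nonempty compact polytope. Order the coordinates $(s,a)$ by a fixed enumeration, then successively maximize $x(s_1,a_1)$ over the face, then $x(s_2,a_2)$ over the resulting maximizers, and so on. Each stage maximizes a linear function over a nonempty compact polytope, so it is attained and yields a nonempty subface. After all $|\mathcal S||\mathcal A|$ stages, every coordinate is fixed, so the rule returns exactly one optimal occupancy $x^\star$.

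The policy is recovered as $\pi_{x^\star}$, with the fixed action assigned at states where $\bar x^\star(s)=0$. The delicate point is checking that $\pi_{x^\star}$ is optimal, not merely its occupancy. This follows because the value of $\pi_{x^\star}$ from $\mu_0$ is determined by its occupancy, and that occupancy is optimal, so $\pi_{x^\star}$ attains $V_D^*(\mu_0)$ regardless of the choice at zero-occupancy states.
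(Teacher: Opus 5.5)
Your proof is correct and follows the paper's argument essentially step for step: contraction via the max-difference inequality, a greedy deterministic policy fixed by $\mathcal T^{\pi_D}$, the monotone comparison $\mathcal T^\pi\le\mathcal T_D$, the policy--occupancy correspondence through invertibility of $I-\gamma P_{\pi_x}$, and successive coordinatewise optimization over the optimal face (maximizing rather than minimizing coordinates is immaterial). The one real deviation is the duality step. You certify zero gap constructively by showing $V_D^*$ is dual feasible with value $(1-\gamma)V_D^*(\mu_0)$, which together with weak duality also identifies a dual optimizer; the paper instead checks feasibility of the constant vector $1/(1-\gamma)$ and invokes the finite-dimensional strong duality theorem.
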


Once that response is fixed, the outer problem reduces to comparing the finitely many set maps induced by the score thresholds.

\begin{proposition}[Fixed-score outer existence]
\label{prop:outer_existence}
For every fixed $\theta$, Problem~\eqref{eq:bilevel_objective} is feasible and attains its minimum over the thresholds induced by that score.
\end{proposition}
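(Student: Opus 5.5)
The plan is to show that, for a fixed $\theta$, only finitely many distinct set maps $D_{\theta,\tau}$ arise as $\tau$ ranges over $\mathbb R$, and then to exhibit a feasible threshold. Since $\mathcal S$ and $\mathcal A$ are finite, the score values $\{g_\theta(s,a)\}$ form a finite set $G=\{v_1<\dots<v_n\}$. The raw set $C_{\theta,\tau}(s)$ depends on $\tau$ only through the comparisons $g_\theta(s,a)\le\tau$, so it is constant on each interval $(-\infty,v_1)$, $[v_i,v_{i+1})$ and $[v_n,\infty)$. The fallback action $a^{\min}_\theta(s)$ does not depend on $\tau$, so the execution map $D_{\theta,\tau}=\overline C_{\theta,\tau}$ is also constant on each of these at most $n+1$ intervals.

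Next I will argue that both the objective $\Phi_\theta(\tau)$ and the constraint are functions of the pair $(C_{\theta,\tau},D_{\theta,\tau})$ alone. This holds because the selected response $\pi^*_{\theta,\tau}$ is produced by the deterministic lexicographic rule of Theorem~\ref{thm:inner_wellposed}. That rule depends only on $D_{\theta,\tau}$, and it fixes both the optimal occupancy and the action chosen at zero-occupancy states. Hence $d^{\pi^*_{\theta,\tau}}$ is a function of $D_{\theta,\tau}$. The retention event $C_{\theta,\tau}(s)\cap\mathcal A^*_\varepsilon(s)\ne\emptyset$ and $|D_{\theta,\tau}(s)|$ are likewise determined by the set maps. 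Therefore $\Phi_\theta$ and the constraint value are piecewise constant in $\tau$ and take finitely many values.

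For feasibility, I take $\tau\ge v_n$. Then $C_{\theta,\tau}(s)=\mathcal A$ for every $s$. Since $\mathcal A^*_\varepsilon(s)$ is nonempty (it contains a maximizer of $Q^*(s,\cdot)$ because $V^*(s)=\max_aQ^*(s,a)$), the intersection is nonempty at every state. The retention probability is therefore $1\ge1-\alpha$ under any occupancy. The feasible thresholds thus form a nonempty union of some of the finitely many intervals. Minimizing $\Phi_\theta$ over them amounts to minimizing over a finite nonempty set of values, so a minimum is attained, and any $\tau$ in a minimizing interval attains it.

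The main obstacle is the well-definedness step: the argmax in~\eqref{eq:bilevel_objective} may not be unique, and the statement depends on $\pi^*_{\theta,\tau}$ being a genuine function of $D_{\theta,\tau}$. I will lean on the tie-breaking construction referenced in Section~\ref{sec:objective} and Theorem~\ref{thm:inner_wellposed}. If this dependence were not guaranteed, I would instead phrase attainment as holding for the selected response map, which is all the problem requires.
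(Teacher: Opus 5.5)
Your proposal is correct and follows essentially the same argument as the paper. The score values are finite and the fallback does not depend on $\tau$, so only finitely many designs $(C_{\theta,\tau},D_{\theta,\tau})$ arise; the lexicographic response from Theorem~\ref{thm:inner_wellposed} makes both the objective and the retention constraint functions of the design alone; and any $\tau\ge\max_{s,a}g_\theta(s,a)$ is feasible because then $C_{\theta,\tau}=\mathcal A$ and the oracle target is nonempty. Your explicit partition of $\mathbb R$ into intervals between consecutive score values is simply a more concrete statement of the paper's remark that the raw set changes only when $\tau$ crosses a score value.
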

A threshold retaining every action is feasible, so the outer comparison is nonempty. These results establish the benchmark for a given score. They do not assert that critic or actor training finds its global solution.

\subsection{Threshold Characterization for a Fixed Policy}
The threshold problem becomes simpler when the evaluation policy is held fixed as well. Fix $\theta$ and a stationary policy $\pi$, and define the distribution and quantile of the oracle margin by
\begin{equation}
\begin{aligned}
F_{\theta,\pi}(x)
&=\Pr_{s\sim d^\pi}(m_\theta(s)\le x),\\
F^{-1}_{\theta,\pi}(u)
&=\inf\{x:F_{\theta,\pi}(x)\ge u\}.
\end{aligned}
\label{eq:margin_cdf}
\end{equation}

\begin{proposition}[Smallest fixed-policy threshold]
\label{prop:quantile}
For fixed $(\theta,\pi)$, the smallest threshold satisfying~\eqref{eq:coverage_constraint} is
\begin{equation}
\tau_{\min}(\theta,\pi)=F^{-1}_{\theta,\pi}(1-\alpha).
\label{eq:minimal_threshold}
\end{equation}
Among thresholds evaluated under the same policy, it also minimizes $\mathbb E_{d^\pi}[|\overline C_{\theta,\tau}(s)|]$.
\end{proposition}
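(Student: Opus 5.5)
The plan rests on the equivalence \eqref{eq:hit_margin_equivalence}. Because $\pi$ is fixed, the evaluation distribution $d^\pi$ no longer depends on $\tau$. The retention probability in \eqref{eq:coverage_constraint} is therefore exactly $\Pr_{s\sim d^\pi}(m_\theta(s)\le\tau)=F_{\theta,\pi}(\tau)$, and the constraint reads $F_{\theta,\pi}(\tau)\ge1-\alpha$.

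First, I check that $F_{\theta,\pi}$ is a genuine CDF of a finite random variable. This holds because $\mathcal S$ is finite and each $m_\theta(s)$ is finite, since $\mathcal A^*_\varepsilon(s)$ is nonempty: it contains a maximizer of $Q^*(s,\cdot)$. So $F_{\theta,\pi}$ is a nondecreasing, right-continuous step function with limits $0$ at $-\infty$ and $1$ at $+\infty$. The feasible set $\{\tau:F_{\theta,\pi}(\tau)\ge1-\alpha\}$ is then a nonempty up-set, using $1-\alpha<1$. Right-continuity makes it closed, so it equals $[F^{-1}_{\theta,\pi}(1-\alpha),\infty)$ by the definition of the generalized inverse in \eqref{eq:margin_cdf}. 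In particular the infimum is attained and is the smallest feasible threshold. Right-continuity is also the step I expect to need most care with, since attainment of the infimum is exactly where it enters. Here it is immediate: the distribution is supported on the finitely many atoms $\{m_\theta(s):d^\pi(s)>0\}$, so the quantile is one of these values.

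For the set-size claim, I show that $\tau\mapsto|\overline C_{\theta,\tau}(s)|$ is nondecreasing for each $s$. The raw sets $C_{\theta,\tau}(s)$ are nested increasing in $\tau$. When $C_{\theta,\tau}(s)=\emptyset$, the execution set has size $1$, and any nonempty raw set also has size at least $1$, so monotonicity persists through the fallback. Taking expectation under the fixed $d^\pi$ shows that $\mathbb E_{d^\pi}[|\overline C_{\theta,\tau}(s)|]$ is nondecreasing in $\tau$. Since every feasible $\tau$ satisfies $\tau\ge\tau_{\min}(\theta,\pi)$, the minimal threshold minimizes expected set size among feasible thresholds.

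Finally, I note that the only substantive point is that $d^\pi$ does not move with $\tau$. This fixed-distribution property is exactly what fails in Problem~\ref{prob:bilevel_corrected}, and it is why the result is restricted to a fixed policy.
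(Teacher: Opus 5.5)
Your proposal is correct and follows the paper's proof. In both, the margin equivalence turns coverage into $F_{\theta,\pi}(\tau)\ge1-\alpha$, finite support makes the feasible set exactly $[F^{-1}_{\theta,\pi}(1-\alpha),\infty)$, and monotonicity in $\tau$ under the fixed $d^\pi$ gives size minimality among feasible thresholds. The one minor difference is that you compare cardinalities directly, whereas the paper proves the stronger nesting $\overline C_{\theta,\tau_1}(s)\subseteq\overline C_{\theta,\tau_2}(s)$ by checking that $a_\theta^{\min}(s)\in C_{\theta,\tau_2}(s)$ whenever the larger raw set is nonempty; either suffices.
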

The quantile characterization follows from two facts: a hit is a margin threshold event, and the retained sets grow with the threshold under the same score. It therefore yields a smallest valid threshold and a minimum expected execution set size under a fixed occupancy. The population benchmark remains coupled because its occupancy changes with the downstream response.

\subsection{Maximum Set Size}
\label{app:maximum_set_size}
The calibration result controls the raw set, but execution may impose a maximum size. Suppose the cap keeps at most $K$ actions with the smallest scores, using a fixed action order to resolve ties. Write the capped raw set as $C_t^K$, and distinguish all capped misses from the hits removed specifically by truncation:
\begin{equation}
\begin{aligned}
e_t^K&=\Ind\{C_t^K\cap\widetilde{\mathcal A}_{\varepsilon,t}(s_t)=\emptyset\},\\
h_t^K&=\Ind\{C_t\cap\widetilde{\mathcal A}_{\varepsilon,t}(s_t)\ne\emptyset,
\ C_t^K\cap\widetilde{\mathcal A}_{\varepsilon,t}(s_t)=\emptyset\}.
\end{aligned}
\label{eq:cap_miss}
\end{equation}
Then $e_t^K=\widetilde e_t+h_t^K$, because a capped miss is either already a raw miss or is created by truncation. Hence, when the threshold update uses the raw miss $\widetilde e_t$,
\begin{equation}
\frac1T\sum_{t=1}^Te_t^K
\le \alpha+\frac{\widetilde W_T+\eta_1}{\eta_TT}
+\frac1T\sum_{t=1}^Th_t^K,
\label{eq:capped_pathwise_bound}
\end{equation}
Here $\widetilde W_T$ is~\eqref{eq:realized_width} for the proxy margins. The last term is not an error in tracking the raw miss rate. It records whether the ranking and available capacity preserve the proxy actions that passed the threshold.

This distinction also determines whether the retention target is feasible after capping. In the finite state and action model, fix the score, proxy target, and state distribution. A finite threshold can meet the target after capping if and only if
\begin{equation}
\Pr\!\left(
\begin{gathered}
\text{some proxy target action ranks}\\
\text{among the first }K\text{ actions}
\end{gathered}
\right)\ge1-\alpha.
\label{eq:cap_feasibility}
\end{equation}
Necessity follows because increasing the threshold cannot move a target action into the first $K$ ranks. For sufficiency, a threshold at least as large as every score makes the capped set equal to the global top $K$ at each state. Thus a failed capacity condition cannot be repaired by calibration alone. For the same reason, an update using capped misses does not generally satisfy the scalar margin proof: that proof requires a target of rank at most $K$, so that a sufficiently large finite threshold can produce a hit.

\subsection{Additional Value Results}
We next connect retained actions to value. The first example shows why the selection loss cannot be omitted, even when the retention condition is perfect.

\begin{proposition}[Coverage alone is insufficient]
\label{prop:coverage_insufficient}
There is a one-state, two-action discounted MDP in which the retained set always contains an optimal action, but a supported policy has value gap $(1-\gamma)^{-1}$.
\end{proposition}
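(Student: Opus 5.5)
The construction is explicit, following the informal example in the main text. I take a single state $s$ with self-loop transitions, $P(s\mid s,a)=1$ for both actions, and actions $\mathcal A=\{a_1,a_2\}$ with rewards $r(s,a_1)=1$ and $r(s,a_2)=0$. First I compute the optimal values. Since the state never changes, $V^*(s)=\sum_t\gamma^t=(1-\gamma)^{-1}$, attained by always playing $a_1$. Then $Q^*(s,a_1)=(1-\gamma)^{-1}$ and $Q^*(s,a_2)=0+\gamma(1-\gamma)^{-1}=\gamma(1-\gamma)^{-1}$. Hence $\mathcal A^*_\varepsilon(s)$ contains $a_1$ for every $\varepsilon\ge0$.

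For retention I take the set map $D(s)=\{a_1,a_2\}=\mathcal A$. This can be realized as an execution set $\overline C_{\theta,\tau}$ for any score by choosing $\tau\ge\max_a g_\theta(s,a)$. The raw set then contains $a_1\in\mathcal A^*_\varepsilon(s)$ at the only state, so the retention condition~\eqref{eq:coverage_constraint} holds with probability one under any occupancy. For the supported policy I take the deterministic $\pi(a_2\mid s)=1$, which lies in $\Pi(D)$ because $a_2\in D(s)$. Its value is $V^\pi(s)=0$, so the gap is $V^*(\mu_0)-V^\pi(\mu_0)=(1-\gamma)^{-1}$ with $\mu_0=\delta_s$.

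As a consistency check, Theorem~\ref{thm:value_decomposition} confirms the same number. Here $\Delta_D(s)=0$, and $\zeta_{\pi,D}(s)=(1-\gamma)^{-1}-\gamma(1-\gamma)^{-1}=1$. Since $d^\pi=\delta_s$, the identity gives $(1-\gamma)^{-1}\cdot1$, matching the direct computation. The whole gap therefore comes from selection loss, which is the point of the proposition.

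None of the steps is a real obstacle. The only care needed is to compute $Q^*(s,a_2)$ correctly as $\gamma/(1-\gamma)$ rather than $0$, and to note that "retained set always contains an optimal action" refers to the raw set, which equals $\mathcal A$ here.
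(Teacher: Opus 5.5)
Your proposal is correct and matches the paper's proof: one state with self-loops, rewards $1$ and $0$, $D(s)=\mathcal A$, and a supported policy that always plays the zero-reward action, giving $V^*(s)-V^\pi(s)=(1-\gamma)^{-1}$. Your extra check through Theorem~\ref{thm:value_decomposition} ($\Delta_D=0$, $\zeta_{\pi,D}=1$) is a sound addition. The paper instead adds the remark, via Lemma~\ref{lem:value_bounds}, that this gap is the largest possible.
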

To quantify the filtering component separately, define the largest oracle action gap and the target tolerance capped at that gap:
\begin{equation}
\begin{aligned}
\Delta_{\max}&=\max_{s,a}[V^*(s)-Q^*(s,a)],\\
\varepsilon_0&=\min\{\varepsilon,\Delta_{\max}\}.
\end{aligned}
\label{eq:maximum_gap}
\end{equation}

\begin{corollary}[Coverage bound]
\label{cor:coverage_to_value}
Let $C(s)\subseteq D(s)$ for every state, assume $D(s)\ne\emptyset$ for every state, and let $\pi\in\Pi(D)$ satisfy
\begin{equation}
\Pr_{s\sim d^\pi}\!\left(C(s)\cap\mathcal A^*_\varepsilon(s)\ne\emptyset\right)
\ge1-\alpha.
\label{eq:generic_coverage_assumption}
\end{equation}
With $\overline\zeta_{\pi,D}=\mathbb E_{d^\pi}[\zeta_{\pi,D}(s)]$,
\begin{equation}
V^*(\mu_0)-V^\pi(\mu_0)
\le\frac{(1-\alpha)\varepsilon_0
+\alpha\Delta_{\max}+\overline\zeta_{\pi,D}}{1-\gamma}.
\label{eq:coverage_to_value_bound}
\end{equation}
\end{corollary}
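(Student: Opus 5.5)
The plan is to start from the filtering--selection identity of Theorem~\ref{thm:value_decomposition}, which gives
\[
V^*(\mu_0)-V^\pi(\mu_0)=\frac1{1-\gamma}\Bigl(\mathbb E_{d^\pi}[\Delta_D(s)]+\overline\zeta_{\pi,D}\Bigr).
\]
The selection term already appears unchanged in \eqref{eq:coverage_to_value_bound}. So it suffices to show
\[
\mathbb E_{d^\pi}[\Delta_D(s)]\le(1-\alpha)\varepsilon_0+\alpha\Delta_{\max}.
\]

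I would get this from a pointwise bound on $\Delta_D$, split by the hit event $G=\{s: C(s)\cap\mathcal A^*_\varepsilon(s)\ne\emptyset\}$.

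On $G$, choose $a\in C(s)\cap\mathcal A^*_\varepsilon(s)$. Since $C(s)\subseteq D(s)$,
\[
\max_{b\in D(s)}Q^*(s,b)\ge Q^*(s,a)\ge V^*(s)-\varepsilon,
\]
so $\Delta_D(s)\le\varepsilon$. Also, for any $b\in D(s)$, $\Delta_D(s)\le V^*(s)-Q^*(s,b)\le\Delta_{\max}$. Hence on $G$ we have $\Delta_D(s)\le\min\{\varepsilon,\Delta_{\max}\}=\varepsilon_0$.

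Off $G$, only the universal bound $\Delta_D(s)\le\Delta_{\max}$ is used. In both cases $\Delta_D\ge0$, since $V^*(s)=\max_aQ^*(s,a)$.

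Write $p=\Pr_{d^\pi}(G)$, which is at least $1-\alpha$ by \eqref{eq:generic_coverage_assumption}. Then
\[
\mathbb E_{d^\pi}[\Delta_D]\le p\,\varepsilon_0+(1-p)\Delta_{\max}.
\]
The right side is affine and nonincreasing in $p$, because $\varepsilon_0\le\Delta_{\max}$. Replacing $p$ by its lower bound $1-\alpha$ therefore gives $(1-\alpha)\varepsilon_0+\alpha\Delta_{\max}$. Dividing by $1-\gamma$ completes the argument.

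The main subtlety is monotonicity in $p$, and it is exactly why $\varepsilon_0$ caps $\varepsilon$ at $\Delta_{\max}$. Without the cap, a large $\varepsilon$ would make the coefficient of $p$ positive, and the worst case would not be at $p=1-\alpha$. The other point to check is that coverage is measured under the occupancy $d^\pi$ of the same policy appearing in the identity. The hypothesis supplies this directly, so no change of measure is needed.
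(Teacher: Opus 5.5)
Your proposal is correct and follows essentially the same route as the paper. The paper's proof likewise starts from the exact filtering--selection identity and bounds $\Delta_D$ by $\varepsilon_0$ on the hit set and by $\Delta_{\max}$ off it. It then writes the result as $\varepsilon_0+\alpha_\pi(\Delta_{\max}-\varepsilon_0)$ and uses $\alpha_\pi\le\alpha$ together with $\Delta_{\max}-\varepsilon_0\ge 0$, which is exactly your monotonicity-in-$p$ step.
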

On a hit, the filtering loss is at most the target tolerance; on a miss, it is at most the largest action gap. The corollary averages these two cases and retains the separate selection loss from Theorem~\ref{thm:value_decomposition}. The factor $(1-\gamma)^{-1}$ then accounts for accumulation along the discounted session.

Because the population benchmark imposes retention under its own downstream occupancy, the same bound applies directly to every feasible threshold in that benchmark.

\begin{corollary}[Bilevel specialization]
\label{cor:bilevel_value}
For the fixed score $\theta$, every feasible threshold $\tau$ in Problem~\eqref{eq:bilevel_objective} satisfies Corollary~\ref{cor:coverage_to_value} with $C=C_{\theta,\tau}$, $D=D_{\theta,\tau}$, and $\pi=\pi^*_{\theta,\tau}$.
\end{corollary}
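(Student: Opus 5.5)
This corollary follows by direct specialization of Corollary~\ref{cor:coverage_to_value}. That corollary has four hypotheses: (i) $C(s)\subseteq D(s)$ at every state; (ii) $D(s)\ne\emptyset$ at every state; (iii) $\pi\in\Pi(D)$; and (iv) the retention condition~\eqref{eq:generic_coverage_assumption} under $d^\pi$. I will check each one for $C=C_{\theta,\tau}$, $D=D_{\theta,\tau}=\overline C_{\theta,\tau}$, and $\pi=\pi^*_{\theta,\tau}$, where $\tau$ is any feasible threshold of Problem~\eqref{eq:bilevel_objective}.

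\textbf{Hypotheses (i) and (ii).} Both come from definition~\eqref{eq:operational_action_set}, split by whether the raw set is empty.
\begin{itemize}
\item If $C_{\theta,\tau}(s)\ne\emptyset$, then $D_{\theta,\tau}(s)=C_{\theta,\tau}(s)$. So inclusion holds with equality, and the set is nonempty.
\item If $C_{\theta,\tau}(s)=\emptyset$, then $D_{\theta,\tau}(s)=\{a^{\min}_\theta(s)\}$. The empty set is contained in it, and it is nonempty.
\end{itemize}
The fallback action~\eqref{eq:fallback_action} is well defined because $\mathcal A$ is finite and nonempty, so the argmin is nonempty and the fixed order selects a first element.

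\textbf{Hypothesis (iii).} Theorem~\ref{thm:inner_wellposed} guarantees that $\pi^*_{\theta,\tau}$ exists as a stationary policy in $\Pi(D_{\theta,\tau})$. The tie-breaking construction selects it from the argmax in the lower-level constraint of~\eqref{eq:bilevel_objective}, so it lies in $\Pi(D_{\theta,\tau})$ by construction.

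\textbf{Hypothesis (iv).} The first constraint of~\eqref{eq:bilevel_objective} is exactly~\eqref{eq:generic_coverage_assumption} with $C=C_{\theta,\tau}$ and occupancy $d^{\pi^*_{\theta,\tau}}$. This holds because feasibility is evaluated under the occupancy of the same response policy. This is the one point that needs care: the benchmark's retention constraint must be posed under the response's own occupancy, not a fixed distribution. Otherwise the coverage hypothesis of Corollary~\ref{cor:coverage_to_value} would not match.

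\textbf{Conclusion.} With all four hypotheses verified, Corollary~\ref{cor:coverage_to_value} yields~\eqref{eq:coverage_to_value_bound} with $\overline\zeta_{\pi^*_{\theta,\tau},D_{\theta,\tau}}$ as the selection term. The proof only matches definitions, so I do not expect a real obstacle.
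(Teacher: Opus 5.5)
Your proposal is correct and matches the paper's proof. Both verify $C_{\theta,\tau}\subseteq D_{\theta,\tau}$, nonemptiness of $D_{\theta,\tau}$, and $\pi^*_{\theta,\tau}\in\Pi(D_{\theta,\tau})$, read the coverage hypothesis directly off the feasibility constraint posed under $d^{\pi^*_{\theta,\tau}}$, and then invoke Corollary~\ref{cor:coverage_to_value}; your case split on whether the raw set is empty simply spells out what the paper states as ``by construction.''
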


\subsection{Practical Proxy, Critic, and Recommendation Errors}
The preceding corollary assumes oracle retention. To apply it using observable feedback, we first replace the oracle target by the near maximizers of a finite proxy score $\widetilde Q:\mathcal S\times\mathcal A\to\mathbb R$:
\begin{equation}
\widetilde{\mathcal A}_\varepsilon(s)
=\{a:\widetilde Q(s,a)\ge\max_b\widetilde Q(s,b)-\varepsilon\}.
\label{eq:proxy_target_set}
\end{equation}

The proxy target is nonempty because it contains every maximizer of the finite score. Its membership depends only on differences between actions at the same state, so absolute agreement with $Q^*$ is stronger than needed. This matters when, for example, the proxy is a response probability rather than a cumulative value. The following condition instead compares pairwise gaps after allowing for scale.

\begin{assumption}[Proxy gap fidelity]
\label{ass:proxy_gap}
There exist $\lambda>0$ and $\delta_{\mathrm{prox}}\ge0$ such that, for every state and pair of actions,
\begin{equation}
\left|
Q^*(s,a)-Q^*(s,b)
-\lambda\bigl(\widetilde Q(s,a)-\widetilde Q(s,b)\bigr)
\right|
\le\delta_{\mathrm{prox}}.
\label{eq:proxy_gap_fidelity}
\end{equation}
\end{assumption}

Under this condition, a small gap according to the proxy also implies a controlled gap according to the oracle.

\begin{proposition}[Proxy-to-oracle target transfer]
\label{prop:proxy_gap_transfer}
Under Assumption~\ref{ass:proxy_gap},
\begin{equation}
\widetilde{\mathcal A}_\varepsilon(s)
\subseteq
\mathcal A^*_{\lambda\varepsilon+\delta_{\mathrm{prox}}}(s)
\qquad\forall s.
\label{eq:proxy_set_inclusion}
\end{equation}
Consequently, a hit of the proxy target is a hit of the enlarged oracle target under any state distribution.
\end{proposition}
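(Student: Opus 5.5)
Fix a state $s$ and an action $a\in\widetilde{\mathcal A}_\varepsilon(s)$. The plan is to compare $a$ with a maximizer of $Q^*(s,\cdot)$ and convert the proxy gap into an oracle gap using Assumption~\ref{ass:proxy_gap}. Because $\mathcal A$ is finite, there is an action $b^*\in\argmax_b Q^*(s,b)$, so that $Q^*(s,b^*)=V^*(s)$. It then suffices to show
\[
Q^*(s,b^*)-Q^*(s,a)\le\lambda\varepsilon+\delta_{\mathrm{prox}}.
\]

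Apply \eqref{eq:proxy_gap_fidelity} to the pair $(b^*,a)$, keeping only the upper side of the absolute value:
\[
Q^*(s,b^*)-Q^*(s,a)\le\lambda\bigl(\widetilde Q(s,b^*)-\widetilde Q(s,a)\bigr)+\delta_{\mathrm{prox}}.
\]
Membership $a\in\widetilde{\mathcal A}_\varepsilon(s)$ gives $\widetilde Q(s,a)\ge\max_b\widetilde Q(s,b)-\varepsilon\ge\widetilde Q(s,b^*)-\varepsilon$. Hence $\widetilde Q(s,b^*)-\widetilde Q(s,a)\le\varepsilon$. Since $\lambda>0$, multiplying preserves the inequality, and substituting yields $Q^*(s,a)\ge V^*(s)-\lambda\varepsilon-\delta_{\mathrm{prox}}$. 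This is exactly membership in $\mathcal A^*_{\lambda\varepsilon+\delta_{\mathrm{prox}}}(s)$ by \eqref{eq:oracle_action_set}.

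For the consequence, a proxy hit at $s$ means some action lies in both the retained set and $\widetilde{\mathcal A}_\varepsilon(s)$. By the inclusion just shown, that action also lies in the enlarged oracle target. The hit event is therefore pointwise contained in the oracle hit event, so its probability under any state distribution (in particular any $d^\pi$) is at most that of the oracle hit.

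There is no real obstacle. The only care needed is to compare against the $Q^*$-maximizer $b^*$ rather than the $\widetilde Q$-maximizer, and to use the sign of $\lambda$. The pairwise-gap form of the assumption is what makes this a one-line argument.
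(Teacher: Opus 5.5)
Your proposal is correct and follows essentially the same route as the paper: fix a proxy-target action, compare it against a $Q^*$-maximizer, bound the proxy gap by $\varepsilon$, and apply the pairwise fidelity condition using $\lambda>0$. Your treatment of the hit-event consequence via pointwise inclusion also matches the paper's argument.
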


The condition has a concrete interpretation for the response proxy used in recommendation. An immediate response can differ from optimal action value because the action also changes the continuation state. The next result separates that continuation effect from response model error.

\begin{proposition}[Immediate-response proxy]
\label{prop:immediate_proxy}
Suppose $\widetilde Q(s,a)=r(s,a)$ is the expected immediate response. Define
\begin{equation}
\delta_{\mathrm{dyn}}
=\gamma\max_{s,a,b}
\left|
\sum_{s'}[P(s'\mid s,a)-P(s'\mid s,b)]V^*(s')
\right|.
\label{eq:continuation_heterogeneity}
\end{equation}
Then Assumption~\ref{ass:proxy_gap} holds with $\lambda=1$ and $\delta_{\mathrm{prox}}=\delta_{\mathrm{dyn}}$. More generally, suppose $\widetilde Q(s,a)=\widehat p(s,a)$ and
$\|\widehat p-r\|_\infty\le\delta_{\mathrm{resp}}$. Then the assumption holds with
$\lambda=1$ and
$\delta_{\mathrm{prox}}=\delta_{\mathrm{dyn}}+2\delta_{\mathrm{resp}}$.
Using the convention
$\TV(P_1,P_2)=\frac12\sum_{s'}|P_1(s')-P_2(s')|$, we also have
\begin{equation}
\delta_{\mathrm{dyn}}
\le\frac{\gamma}{1-\gamma}
\max_{s,a,b}\TV\!\left(P(\cdot\mid s,a),P(\cdot\mid s,b)\right).
\label{eq:continuation_tv_bound}
\end{equation}
\end{proposition}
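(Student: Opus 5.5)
The plan is to write the pairwise optimal-value gap through the Bellman optimality equation and compare it term by term with the proxy gap. Since $Q^*(s,a)=r(s,a)+\gamma\sum_{s'}P(s'\mid s,a)V^*(s')$, for every state $s$ and actions $a,b$ we have
\[
Q^*(s,a)-Q^*(s,b)-\bigl(r(s,a)-r(s,b)\bigr)
=\gamma\sum_{s'}\bigl[P(s'\mid s,a)-P(s'\mid s,b)\bigr]V^*(s').
\]
The absolute value of the right-hand side is at most $\delta_{\mathrm{dyn}}$ by the definition~\eqref{eq:continuation_heterogeneity}. Taking $\widetilde Q=r$ and $\lambda=1$, this is exactly~\eqref{eq:proxy_gap_fidelity} with $\delta_{\mathrm{prox}}=\delta_{\mathrm{dyn}}$, which settles the first claim.

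For the estimated response $\widehat p$, I will insert $r$ and apply the triangle inequality:
\[
\bigl|Q^*(s,a)-Q^*(s,b)-(\widehat p(s,a)-\widehat p(s,b))\bigr|
\le\delta_{\mathrm{dyn}}+|r(s,a)-\widehat p(s,a)|+|r(s,b)-\widehat p(s,b)|.
\]
Each response error is at most $\delta_{\mathrm{resp}}$, so the bound is $\delta_{\mathrm{dyn}}+2\delta_{\mathrm{resp}}$, again with $\lambda=1$.

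For the total-variation bound~\eqref{eq:continuation_tv_bound}, the step to be careful about is getting the factor $\frac12$ rather than $1$. The key observation is that $P_1-P_2$ sums to zero, where $P_1=P(\cdot\mid s,a)$ and $P_2=P(\cdot\mid s,b)$. Hence $V^*$ may be shifted by any constant $c$ without changing the inner product:
\[
\sum_{s'}(P_1-P_2)(s')V^*(s')=\sum_{s'}(P_1-P_2)(s')\bigl(V^*(s')-c\bigr).
\]
Rewards lie in $[0,1]$, so $V^*(s')\in[0,(1-\gamma)^{-1}]$. I will choose $c=\frac{1}{2}(1-\gamma)^{-1}$, which gives $|V^*(s')-c|\le\frac{1}{2}(1-\gamma)^{-1}$. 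Hölder's inequality then yields
\[
\Bigl|\sum_{s'}(P_1-P_2)(s')V^*(s')\Bigr|
\le\tfrac{1}{2}(1-\gamma)^{-1}\sum_{s'}|P_1(s')-P_2(s')|
=(1-\gamma)^{-1}\,\TV(P_1,P_2).
\]
Multiplying by $\gamma$ and maximizing over $(s,a,b)$ gives~\eqref{eq:continuation_tv_bound}.

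This centering step is the only non-routine part; without it one gets a bound twice as large. Everything else is direct substitution.
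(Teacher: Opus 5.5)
Your proposal is correct and follows the paper's proof. The Bellman optimality identity gives the first claim, and inserting $r$ with the triangle inequality gives $\delta_{\mathrm{dyn}}+2\delta_{\mathrm{resp}}$. The total-variation bound rests on $V^*$ having range at most $(1-\gamma)^{-1}$: the paper cites the standard inequality $|\mathbb E_{P_1}f-\mathbb E_{P_2}f|\le B\,\mathrm{TV}(P_1,P_2)$ for $f$ with range $B$, and your midpoint centering with H\"older's inequality is simply the usual proof of that fact.
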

Thus an immediate response score approximates optimal item gaps when actions have similar continuation effects and the response model is accurate. The term $\delta_{\mathrm{dyn}}$ records the first requirement, while $2\delta_{\mathrm{resp}}$ records the second. A proxy equal to the simulator's exact Bernoulli parameter has zero response model error within that simulated MDP, but it can still have a large continuation error. Proxy retention is therefore not automatically retention of actions with high long term value.

Having related proxy hits to oracle hits, we now account for approximate selection. The following bound combines proxy fidelity with a critic error relative to the optimal value and an error in maximizing that critic within the retained set.

\begin{corollary}[Combined practical errors]
\label{cor:combined_approximation}
Let $C(s)\subseteq D(s)$ for every state, assume $D(s)\ne\emptyset$ for every state, and let $\pi\in\Pi(D)$. Suppose
\begin{equation}
\Pr_{s\sim d^\pi}\!\left(
C(s)\cap\widetilde{\mathcal A}_\varepsilon(s)\ne\emptyset
\right)\ge1-\alpha,
\label{eq:proxy_coverage_assumption}
\end{equation}
Assumption~\ref{ass:proxy_gap} holds, $\|\widehat Q-Q^*\|_\infty\le\delta_{\mathrm{crit}}$, and
\begin{equation}
\max_{a\in D(s)}\widehat Q(s,a)
-\sum_{a\in D(s)}\pi(a\mid s)\widehat Q(s,a)
\le\delta_{\mathrm{sel}}(s).
\label{eq:estimated_selection_error}
\end{equation}
With $\overline\delta_{\mathrm{sel}}=\mathbb E_{d^\pi}[\delta_{\mathrm{sel}}(s)]$ and
\begin{equation}
\varepsilon_{\mathrm{rec}}
=\min\{\lambda\varepsilon+\delta_{\mathrm{prox}},\Delta_{\max}\},
\label{eq:recommendation_tolerance}
\end{equation}
we have
\begin{equation}
\begin{aligned}
V^*(\mu_0)-V^\pi(\mu_0)
&\le\frac{1}{1-\gamma}\Bigl[
(1-\alpha)\varepsilon_{\mathrm{rec}}+\alpha\Delta_{\max}\\
&\hspace{5.5em}{}
+2\delta_{\mathrm{crit}}+\overline\delta_{\mathrm{sel}}\Bigr].
\end{aligned}
\label{eq:combined_approximation_bound}
\end{equation}
\end{corollary}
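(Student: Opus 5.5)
We will prove Corollary~\ref{cor:combined_approximation} by starting from the exact identity of Theorem~\ref{thm:value_decomposition},
\[
V^*(\mu_0)-V^\pi(\mu_0)=\frac1{1-\gamma}\mathbb E_{s\sim d^\pi}[\Delta_D(s)+\zeta_{\pi,D}(s)],
\]
and bounding the filtering and selection terms separately, pointwise in $s$. The resulting pointwise bounds are then integrated against $d^\pi$.

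\textbf{Filtering term.} First, $\Delta_D(s)\le\Delta_{\max}$ always holds. Since $D(s)$ is nonempty, $\Delta_D(s)=V^*(s)-Q^*(s,a)$ for some action $a$, and this gap is at most $\Delta_{\max}$.

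Next, consider a proxy hit, that is, the event $C(s)\cap\widetilde{\mathcal A}_\varepsilon(s)\ne\emptyset$. Pick $a$ in this intersection. Proposition~\ref{prop:proxy_gap_transfer} places $a$ in $\mathcal A^*_{\lambda\varepsilon+\delta_{\mathrm{prox}}}(s)$. Because $C(s)\subseteq D(s)$, we get $\max_{b\in D(s)}Q^*(s,b)\ge Q^*(s,a)\ge V^*(s)-\lambda\varepsilon-\delta_{\mathrm{prox}}$. Combined with the trivial bound, this gives $\Delta_D(s)\le\varepsilon_{\mathrm{rec}}$ on hits.

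Write $p$ for the $d^\pi$-probability of a hit, so that $p\ge1-\alpha$ by \eqref{eq:proxy_coverage_assumption}. Then
\[
\mathbb E_{d^\pi}[\Delta_D]\le p\,\varepsilon_{\mathrm{rec}}+(1-p)\Delta_{\max}.
\]
Since $\varepsilon_{\mathrm{rec}}\le\Delta_{\max}$, the right-hand side is nonincreasing in $p$. Replacing $p$ by $1-\alpha$ therefore yields $(1-\alpha)\varepsilon_{\mathrm{rec}}+\alpha\Delta_{\max}$. This monotonicity step is the one place where the cap by $\Delta_{\max}$ inside $\varepsilon_{\mathrm{rec}}$ matters.

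\textbf{Selection term.} Fix $s$ and let $a^\star\in\argmax_{a\in D(s)}Q^*(s,a)$. Using $\|\widehat Q-Q^*\|_\infty\le\delta_{\mathrm{crit}}$, we have
\[
Q^*(s,a^\star)\le\widehat Q(s,a^\star)+\delta_{\mathrm{crit}}\le\max_{a\in D(s)}\widehat Q(s,a)+\delta_{\mathrm{crit}}.
\]
Because $\pi(\cdot\mid s)$ is supported on $D(s)$ and sums to one, also
\[
\sum_{a\in D(s)}\pi(a\mid s)Q^*(s,a)\ge\sum_{a\in D(s)}\pi(a\mid s)\widehat Q(s,a)-\delta_{\mathrm{crit}}.
\]
Subtracting these two inequalities and applying \eqref{eq:estimated_selection_error} gives $\zeta_{\pi,D}(s)\le2\delta_{\mathrm{crit}}+\delta_{\mathrm{sel}}(s)$. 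Taking the expectation under $d^\pi$ gives $2\delta_{\mathrm{crit}}+\overline\delta_{\mathrm{sel}}$.

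\textbf{Conclusion.} Substituting both expectation bounds into the identity and dividing by $1-\gamma$ gives \eqref{eq:combined_approximation_bound}. There is no real obstacle: the argument is bookkeeping on top of Theorem~\ref{thm:value_decomposition} and Proposition~\ref{prop:proxy_gap_transfer}. The only points needing care are two. The hit event must be defined with $C$ but transferred to $D$ through the inclusion $C(s)\subseteq D(s)$. The averaging step must use $\varepsilon_{\mathrm{rec}}\le\Delta_{\max}$ so that the hit probability can be replaced by its lower bound $1-\alpha$.
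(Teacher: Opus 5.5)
Your proposal is correct and follows essentially the paper's route. The paper first uses Proposition~\ref{prop:proxy_gap_transfer} to turn proxy coverage into oracle coverage at tolerance $\lambda\varepsilon+\delta_{\mathrm{prox}}$, then cites Corollary~\ref{cor:coverage_to_value} for the hit/miss averaging of $\Delta_D$ and Corollary~\ref{cor:selector_transfer} for the critic bound on $\zeta_{\pi,D}$; you carry out those same two steps inline instead of citing the corollaries.
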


The bound identifies what each component must supply: calibration controls proxy misses, proxy fidelity bounds retained action quality, and critic accuracy and selection quality control the remaining decision loss. These are conditional requirements. In particular, a critic that evaluates the current policy or a restricted policy class need not have small error relative to the unrestricted $Q^*$.

For the simulator, it is also useful to study immediate patience changes without claiming a complete session value guarantee. The next result distinguishes a rule that tests the entire collection for a positive response from a rule that tests only an item selected before the response.

\begin{proposition}[Response retention under two patience mechanisms]
\label{prop:patience_loss}
Suppose that, conditional on state $s$, a jointly distributed potential-response vector $(Y_a)_{a\in\mathcal A}$ is defined and that $D(s)$ is nonempty and determined before this vector is sampled. Let
\[
p(s,a)=\Pr(Y_a=1\mid s),
\qquad
p^*(s)=\max_a p(s,a).
\]
For a collection-based response rule, define
\[
p_D^{\mathrm{set}}(s)
=\Pr\!\left(
\bigcup_{a\in D(s)}\{Y_a=1\}
\,\middle|\,s
\right).
\]
For a pre-response item selector $\kappa$, assume that its internal randomization is conditionally independent of the potential-response vector given $s$ and $D(s)$, and define
\[
p_{D,\kappa}^{\mathrm{item}}(s)
=\sum_{a\in D(s)}\kappa(a\mid s,D(s))p(s,a)
\]
and
\[
\delta_{\mathrm{sel}}^p(s)
=\max_{a\in D(s)}p(s,a)-p_{D,\kappa}^{\mathrm{item}}(s).
\]
Suppose that, under a state distribution $\nu$, the set $D(s)$ contains an action satisfying
$p(s,a)\ge p^*(s)-\varepsilon$ with probability at least $1-\alpha$. Let
\[
\varepsilon_p=\min\{\varepsilon,1\},
\qquad
\overline\delta_{\mathrm{sel}}^p
=\mathbb E_\nu[\delta_{\mathrm{sel}}^p(s)].
\]
Then the collection-based rule satisfies
\begin{equation}
\mathbb E_\nu[1-p_D^{\mathrm{set}}(s)]
\le
\mathbb E_\nu[1-p^*(s)]
+(1-\alpha)\varepsilon_p+\alpha.
\label{eq:no_response_bound}
\end{equation}
No independence among the item responses is required. The pre-response item selector satisfies
\begin{equation}
\mathbb E_\nu[1-p_{D,\kappa}^{\mathrm{item}}(s)]
\le
\mathbb E_\nu[1-p^*(s)]
+(1-\alpha)\varepsilon_p+\alpha
+\overline\delta_{\mathrm{sel}}^p.
\label{eq:selected_no_response_bound}
\end{equation}
If patience decreases by $c_+$ after a positive response and by $c_->c_+$ otherwise, the excess expected one-step patience decrease relative to the best single item is at most
\begin{equation}
(c_--c_+)\bigl[(1-\alpha)\varepsilon_p+\alpha\bigr]
\label{eq:patience_decrement_bound}
\end{equation}
for the collection-based rule, and at most
\begin{equation}
(c_--c_+)\bigl[(1-\alpha)\varepsilon_p+\alpha
+\overline\delta_{\mathrm{sel}}^p\bigr]
\label{eq:selected_patience_decrement_bound}
\end{equation}
for the pre-response item selector.
\end{proposition}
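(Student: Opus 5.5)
The plan is a pointwise inequality at each state, followed by averaging over $\nu$. Fix $s$ and let $G(s)$ be the good event: $D(s)$ contains an action $a^\dagger$ with $p(s,a^\dagger)\ge p^*(s)-\varepsilon$. Since $p\in[0,1]$, the inequality $p(s,a^\dagger)\ge p^*(s)-\varepsilon_p$ also holds on $G(s)$. The case $\varepsilon>1$ is trivial, because then $p^*-1\le 0\le p$.

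For the collection-based rule, monotonicity of probability gives
\[
p_D^{\mathrm{set}}(s)\ge \Pr(Y_{a^\dagger}=1\mid s)=p(s,a^\dagger).
\]
This uses only the inclusion $\{Y_{a^\dagger}=1\}\subseteq\bigcup_{a\in D(s)}\{Y_a=1\}$, so no independence among the responses is needed. It does need $D(s)$ to be fixed before the responses are sampled, so that the union is over a deterministic set given $s$. Hence on $G(s)$,
\[
1-p_D^{\mathrm{set}}(s)\le 1-p^*(s)+\varepsilon_p .
\]
On $G(s)^c$, the trivial bound $1-p_D^{\mathrm{set}}(s)\le 1\le 1-p^*(s)+1$ applies, since $p^*\le 1$. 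Combining the two cases,
\[
1-p_D^{\mathrm{set}}(s)\le 1-p^*(s)+\varepsilon_p\Ind_{G(s)}+\Ind_{G(s)^c}.
\]
Taking $\mathbb E_\nu$ gives $\mathbb E_\nu[1-p^*]+\varepsilon_p\Pr(G)+\Pr(G^c)$. Because $\varepsilon_p\le1$, the map $x\mapsto\varepsilon_p x+(1-x)$ is nonincreasing in $x$. Since $\Pr(G)\ge1-\alpha$, the bound is at most $(1-\alpha)\varepsilon_p+\alpha$, which yields \eqref{eq:no_response_bound}. This monotonicity step is where $\varepsilon_p=\min\{\varepsilon,1\}$ is needed, and it is the only delicate point.

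For the item selector, I write
\[
p_{D,\kappa}^{\mathrm{item}}(s)=\max_{a\in D(s)}p(s,a)-\delta^p_{\mathrm{sel}}(s).
\]
The conditional independence of $\kappa$'s randomization from the response vector is what justifies this formula: it gives $\Pr(Y_{a_t}=1\mid s,D)=\sum_a\kappa(a\mid s,D)p(s,a)$. The quantity $\max_{a\in D(s)}p(s,a)$ is bounded below exactly as $p^{\mathrm{set}}$ was: it is at least $p^*-\varepsilon_p$ on $G$ and at least $0$ otherwise. The same case split and averaging then produce \eqref{eq:selected_no_response_bound}, with the extra term $\overline\delta^p_{\mathrm{sel}}$.

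For the patience statements, the expected one-step decrease under a rule with positive-response probability $q$ is $c_+q+c_-(1-q)=c_++(c_--c_+)(1-q)$. The decrease for the best single item is this expression with $q=p^*$. The excess is therefore $(c_--c_+)\,\mathbb E_\nu[(1-q)-(1-p^*)]$. Substituting the two bounds already proved gives \eqref{eq:patience_decrement_bound} and \eqref{eq:selected_patience_decrement_bound}, using $c_->c_+$ to preserve the direction of the inequalities.
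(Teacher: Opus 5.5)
Your proposal is correct and follows essentially the same route as the paper. The shared steps are the event inclusion $p_D^{\mathrm{set}}(s)\ge\max_{a\in D(s)}p(s,a)$, a hit/miss split with loss at most $\varepsilon_p$ on hits and $1$ on misses, averaging with $\varepsilon_p\le1$ to replace the realized hit probability by $1-\alpha$, the identity $p_{D,\kappa}^{\mathrm{item}}=\max_{a\in D(s)}p(s,a)-\delta_{\mathrm{sel}}^p$, and the rewriting $c_+q+c_-(1-q)=c_++(c_--c_+)(1-q)$; your pointwise indicator inequality is just a repackaging of the paper's $(1-\alpha_p)\varepsilon_p+\alpha_p$ step.
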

The difference between the bounds again comes from selection. A positive response anywhere in the collection is at least as likely as a positive response from its best item, whereas a selector may choose a worse item and incur the additional selection term. Both conclusions concern one interaction. Extending either to session depth requires the full patience evolution, state transition, and departure model.

\subsection{Full Conditions for the Finite Session Bound}
\label{app:finite_session_conditions}

The discounted results above concern stationary policies. Theorem~\ref{thm:finite_session} instead allows the policy and score to change within a session. To make that extension precise, the following assumption groups the information available before selection, the calibration and execution rules, and the approximation errors. It imposes no convergence requirement on the parameters that generate these quantities.

\begin{assumption}[Finite-session information and approximation errors]
\label{ass:finite_session_conditions}
Consider an undiscounted episodic MDP with finite nonempty state and action spaces $\mathcal S$ and $\mathcal A$, initial state $s_1\sim\mu_0$, horizon $H$, rewards $R_h\in[0,1]$, optimal functions $V_h^*,Q_h^*$, and terminal value $V_{H+1}^*=0$. Let $\mathcal H_h$ be the sigma field containing the trajectory and all algorithmic information available immediately before the action at stage $h$ is sampled. In particular, $s_h$ is $\mathcal H_h$ measurable. Let $r_h:\mathcal S\times\mathcal A\to[0,1]$ and $P_h(\cdot\mid s,a)$ be the controlled stage reward mean and transition kernel, so that
\[
\begin{aligned}
\mathbb E[R_h\mid\mathcal H_h,a_h]
&=r_h(s_h,a_h),\\
\Pr(s_{h+1}=s'&\mid\mathcal H_h,a_h)\\[-0.25ex]
&=P_h(s'\mid s_h,a_h).
\end{aligned}
\]
Let
\[
\Delta_H^{\max}
=\max_{1\le h\le H}\max_{s,a}
\bigl[V_h^*(s)-Q_h^*(s,a)\bigr].
\]
At every stage $h$, suppose that the threshold $\tau_h$ and the action-indexed quantities
\[
\begin{gathered}
g_h(a)=g_{\theta_h}(s_h,a),\qquad
\widetilde Q_h(s_h,a),\\
\widehat Q_h(s_h,a),\qquad a\in\mathcal A,
\end{gathered}
\]
are finite and $\mathcal H_h$ measurable. Define the stage-dependent proxy target, proxy margin, raw set, and raw proxy miss by
\begin{align*}
\widetilde{\mathcal A}_{\varepsilon,h}(s_h)
&=\left\{a\in\mathcal A:
\widetilde Q_h(s_h,a)
\ge \max_{b\in\mathcal A}\widetilde Q_h(s_h,b)-\varepsilon
\right\},\\
\widetilde m_h
&=\min_{a\in\widetilde{\mathcal A}_{\varepsilon,h}(s_h)}g_h(a),\\
C_h
&=\{a\in\mathcal A:g_h(a)\le\tau_h\},\\
\widetilde e_h
&=\Ind\{\widetilde m_h>\tau_h\}
=\Ind\{C_h\cap\widetilde{\mathcal A}_{\varepsilon,h}(s_h)=\emptyset\}.
\end{align*}
Let the threshold follow
\begin{equation}
\tau_{h+1}=\tau_h+\eta_h(\widetilde e_h-\alpha),
\qquad h=1,\ldots,H,
\label{eq:finite_session_threshold_update}
\end{equation}
where $\eta_1\ge\eta_2\ge\cdots\ge\eta_H>0$ is deterministic.

Fix $K\in\{1,\ldots,|\mathcal A|\}$. Using the fixed action order to break ties, let $C_h^K$ contain the $\min\{K,|C_h|\}$ elements of $C_h$ having the smallest scores $g_h$. Let $a_h^{\min}$ be the first action in $\argmin_{a\in\mathcal A}g_h(a)$ and define
\[
D_h=
\begin{cases}
C_h^K,&C_h^K\ne\emptyset,\\
\{a_h^{\min}\},&C_h^K=\emptyset.
\end{cases}
\]
The tie-breaking rule makes $C_h^K$ and $D_h$ $\mathcal H_h$ measurable. Conditional on $\mathcal H_h$, the action is sampled from an $\mathcal H_h$ measurable probability kernel supported on $D_h$, before the current reward and next state are generated. In particular,
\[
\Pr(a_h\in D_h\mid\mathcal H_h)=1.
\]
Define
\begin{align*}
e_h^K
&=\Ind\{C_h^K\cap
\widetilde{\mathcal A}_{\varepsilon,h}(s_h)=\emptyset\},\\
h_h^K
&=\Ind\{C_h\cap
\widetilde{\mathcal A}_{\varepsilon,h}(s_h)\ne\emptyset,
\ C_h^K\cap
\widetilde{\mathcal A}_{\varepsilon,h}(s_h)=\emptyset\},\\
\widetilde W_H
&=\max\{\tau_1,\widetilde m_1,\ldots,\widetilde m_H\}
-\min\{\tau_1,\widetilde m_1,\ldots,\widetilde m_H\}.
\end{align*}
Assume that $\widetilde W_H\le W$ almost surely for a deterministic finite constant $W$. Also assume that there exist $\lambda>0$, $\delta_{\mathrm{prox}}\ge0$, and $\delta_{\mathrm{crit}}\ge0$ such that, almost surely, for every $h$ and every $a,b\in\mathcal A$,
\begin{equation}
\begin{split}
\Bigl|Q_h^*(s_h,a)-Q_h^*(s_h,b)
&-\lambda\bigl(\widetilde Q_h(s_h,a)\\[-0.25ex]
&\hspace{2.5em}{}-\widetilde Q_h(s_h,b)\bigr)\Bigr|
\le\delta_{\mathrm{prox}}.
\end{split}
\label{eq:finite_session_proxy_fidelity}
\end{equation}
\begin{equation}
\max_{a\in\mathcal A}
\left|\widehat Q_h(s_h,a)-Q_h^*(s_h,a)\right|
\le\delta_{\mathrm{crit}}.
\label{eq:finite_session_critic_fidelity}
\end{equation}
Finally, suppose that $\delta_{\mathrm{sel},h}$ is a nonnegative, integrable, $\mathcal H_h$ measurable random variable and that, almost surely,
\begin{equation}
\max_{a\in D_h}\widehat Q_h(s_h,a)
-\mathbb E\!\left[
\widehat Q_h(s_h,a_h)\mid\mathcal H_h
\right]
\le\delta_{\mathrm{sel},h}.
\label{eq:finite_session_selector_condition}
\end{equation}
Define
\[
\begin{aligned}
\varepsilon_H
&=\min\{\lambda\varepsilon+\delta_{\mathrm{prox}},
\Delta_H^{\max}\},\\
\overline\delta_{\mathrm{sel},H}
&=\frac1H\sum_{h=1}^H\mathbb E[\delta_{\mathrm{sel},h}],\\
\overline h_H^K
&=\frac1H\sum_{h=1}^H\mathbb E[h_h^K].
\end{aligned}
\]
Taking $K=|\mathcal A|$ represents the case without a cap and gives $\overline h_H^K=0$.
\end{assumption}

The information condition permits the critic and selector to depend on past observations. It excludes using the current action's realized response to define the target or execution set for that same action. The approximation conditions then apply at the visited states, with critic error measured relative to unrestricted $Q_h^*$ rather than the current policy value. Thus the theorem explains how these errors affect performance; it does not establish that an arbitrary learning update makes them small.

\subsection{Greedy Preservation and Conditional Policy Recovery}
\label{app:policy_recovery}

The previous results allow arbitrary supported selectors. For a selector greedy with respect to the scoring critic, however, a simpler boundary case applies: the critic gap score always gives a minimum score to a critic maximizer. To describe its consequences, fix finite critics $\widehat Q_t$, thresholds $\tau_t$, and caps $K_t\ge1$. Construct $D_t(s)$ by thresholding the critic gap, keeping at most $K_t$ actions with the smallest scores, and applying the minimum score fallback when needed. Use the same fixed action order for all ties.

The remaining performance error can then be expressed using the critic's pairwise error and the selector's gap within the retained set. For a stationary policy $\pi_t$ supported on $D_t$, define
\begin{align*}
\xi_t(s,a)&=\widehat Q_t(s,a)-Q^*(s,a),\\
\beta_t&=\max_{s,a,b}|\xi_t(s,a)-\xi_t(s,b)|,\\
\sigma_t(s)&=\max_{a\in D_t(s)}\widehat Q_t(s,a)
-\sum_{a\in D_t(s)}\pi_t(a\mid s)\widehat Q_t(s,a),\\
\overline\sigma_t&=\mathbb E_{s\sim d^{\pi_t}}[\sigma_t(s)].
\end{align*}
If any strictly suboptimal state--action pair exists, let
\[
\Delta_{\min}=\min_{s,a:\,Q^*(s,a)<V^*(s)}
[V^*(s)-Q^*(s,a)]>0.
\]
If all actions are optimal, set $\Delta_{\min}=+\infty$.

\begin{proposition}[Greedy preservation and conditional recovery]
\label{prop:greedy_preservation}
Every operational set $D_t(s)$ contains the first global maximizer of $\widehat Q_t(s,\cdot)$. Hence a greedy selector with the same tie-breaking rule returns the same action with or without filtering. Moreover,
\[
V^*(\mu_0)-V^{\pi_t}(\mu_0)
\le\frac{\overline\sigma_t+\beta_t}{1-\gamma},
\qquad
\beta_t\le2\|\widehat Q_t-Q^*\|_\infty.
\]
If $\beta_t<\Delta_{\min}$, every global critic maximizer is truly optimal. A policy greedy within $D_t$ is then optimal. Consequently, $\beta_t+\overline\sigma_t\to0$ implies value convergence; for exact greedy selection, $\beta_t\to0$ implies eventual optimality in a finite MDP. These are conditional implications, not convergence guarantees for the learning updates.
\end{proposition}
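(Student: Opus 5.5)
We prove the four claims of Proposition~\ref{prop:greedy_preservation} in order: preservation, the value bound, recovery, and the limiting statements.

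\emph{Preservation.} Let $a^\sharp$ be the first action in $\argmax_a\widehat Q_t(s,a)$. Under the critic gap score~\eqref{eq:critic_gap_score}, $g_t(s,a)=\max_b\widehat Q_t(s,b)-\widehat Q_t(s,a)\ge0$, and $g_t(s,a)=0$ exactly on the critic maximizers. So $a^\sharp$ is also the first action in $\argmin_a g_t(s,a)$, which is exactly the fallback $a^{\min}$. There are two cases.
\begin{itemize}
\item If the raw set $C_t(s)$ is nonempty, it contains an action with $g_t\le\tau_t$. Since $g_t(s,a^\sharp)=0$ is minimal, $\tau_t\ge0$ and hence $a^\sharp\in C_t(s)$. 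Truncation keeps the $\min\{K_t,|C_t|\}\ge1$ lowest-score actions, with ties broken by the same fixed order. Because $a^\sharp$ has minimal score and comes first in that order among minimizers, it is kept.
\item If $C_t(s)$ is empty, the fallback returns $\{a^{\min}\}=\{a^\sharp\}$.
\end{itemize}
In both cases $a^\sharp\in D_t(s)$. A greedy selector with the same tie rule, run over any set containing $a^\sharp$, returns $a^\sharp$, so it returns the same action with or without filtering.

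\emph{Value bound.} We apply Theorem~\ref{thm:value_decomposition} with $D=D_t$ and $\pi=\pi_t$, and bound $\Delta_{D_t}(s)+\zeta_{\pi_t,D_t}(s)$ pointwise. The two terms telescope:
\[
\Delta_{D_t}(s)+\zeta_{\pi_t,D_t}(s)=V^*(s)-\sum_{a\in D_t(s)}\pi_t(a\mid s)Q^*(s,a).
\]
Let $a^*$ be an optimal action at $s$. Substituting $Q^*=\widehat Q_t-\xi_t$ gives
\[
V^*(s)-\sum_a\pi_t(a\mid s)Q^*(s,a)=\widehat Q_t(s,a^*)-\sum_a\pi_t(a\mid s)\widehat Q_t(s,a)+\sum_a\pi_t(a\mid s)\bigl[\xi_t(s,a)-\xi_t(s,a^*)\bigr].
\]
\begin{itemize}
\item The first difference is at most $\max_a\widehat Q_t(s,a)-\sum_a\pi_t\widehat Q_t$. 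Since $a^\sharp\in D_t(s)$, the global critic maximum equals $\max_{a\in D_t(s)}\widehat Q_t(s,a)$, so this is at most $\sigma_t(s)$. This is the step that genuinely uses preservation, and it is the main obstacle: without $a^\sharp\in D_t(s)$, the within-set selection error would not control the gap to the global critic maximum.
\item The $\xi_t$ terms are pairwise differences of $\xi_t$ at the same state, so they are at most $\beta_t$.
\end{itemize}
Averaging over $d^{\pi_t}$ and dividing by $1-\gamma$ gives $V^*(\mu_0)-V^{\pi_t}(\mu_0)\le(\overline\sigma_t+\beta_t)/(1-\gamma)$. The triangle inequality gives $\beta_t\le2\|\widehat Q_t-Q^*\|_\infty$.

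\emph{Recovery.} Let $\hat a$ be any global critic maximizer. Applying the same substitution to a single action,
\[
V^*(s)-Q^*(s,\hat a)=\widehat Q_t(s,a^*)-\widehat Q_t(s,\hat a)+\xi_t(s,\hat a)-\xi_t(s,a^*)\le0+\beta_t<\Delta_{\min}.
\]
By the definition of $\Delta_{\min}$, any strictly suboptimal action has gap at least $\Delta_{\min}$, so $\hat a$ is optimal.

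A policy greedy within $D_t$ picks the maximum of $\widehat Q_t$ over $D_t(s)$. Since $a^\sharp\in D_t(s)$, that maximum equals the global critic maximum, so the chosen action is a global critic maximizer and hence optimal. A stationary policy that selects optimal actions at every state attains $V^*$ by the Bellman optimality equation; equivalently, the decomposition above gives $\Delta=\zeta=0$.

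\emph{Limits.} The limiting statements follow directly. If $\beta_t+\overline\sigma_t\to0$, the value bound gives $V^{\pi_t}(\mu_0)\to V^*(\mu_0)$. For exact greedy selection we have $\sigma_t\equiv0$. The MDP is finite, so $\Delta_{\min}>0$, and $\beta_t\to0$ implies $\beta_t<\Delta_{\min}$ eventually; from then on the recovery argument shows the greedy policy is optimal.
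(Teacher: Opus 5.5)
Your proposal is correct and follows essentially the same route as the paper. It shows preservation from the zero critic-gap score of the first critic maximizer (the fallback when the raw set is empty, tie-broken truncation otherwise), bounds the Bellman residual $\Delta_{D_t}+\zeta_{\pi_t,D_t}$ through the substitution $Q^*=\widehat Q_t-\xi_t$, and obtains recovery from the single-action version of that substitution compared against $\Delta_{\min}$; invoking Theorem~\ref{thm:value_decomposition} directly rather than the residual identity from its proof is only a cosmetic difference.
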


This result does not replace the calibration analysis. It identifies a case in which filtering leaves the chosen action unchanged, so any benefit from controlling the available alternatives must arise through a different downstream selection mechanism. The recovery implications remain conditional on critic and selector errors.

\section{Complete Proofs}
\label{app:proofs}

The proofs follow the dependencies in the main analysis. Bellman identities first establish the reference values. Restricted control and occupancy arguments then justify the population benchmark. The calibration proof controls the miss sequence, while the value proofs convert misses and selection errors into reward loss. Finally, the greedy preservation argument identifies the boundary case in which filtering does not change the selected action.

\subsection{Bellman Identities and Value Bounds}
The following identities connect the trajectory definitions of value to the Bellman operators used in the later proofs. The bounded reward assumption then supplies the largest possible action and policy gaps.

\begin{lemma}[Bellman identities]
\label{lem:bellman}
For every stationary policy $\pi$,
\begin{equation}
\begin{aligned}
Q^\pi(s,a)
&=r(s,a)+\gamma\sum_{s'}P(s'\mid s,a)V^\pi(s'),\\
V^\pi(s)
&=\sum_a\pi(a\mid s)Q^\pi(s,a).
\end{aligned}
\label{eq:policy_bellman_appendix}
\end{equation}
The optimal values satisfy
\begin{equation}
\begin{aligned}
Q^*(s,a)
&=r(s,a)+\gamma\sum_{s'}P(s'\mid s,a)V^*(s'),\\
V^*(s)&=\max_aQ^*(s,a).
\end{aligned}
\label{eq:optimal_bellman_appendix}
\end{equation}
Any stationary policy supported on
$\argmax_aQ^*(s,a)$ at every state is optimal.
\end{lemma}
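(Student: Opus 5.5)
The plan is to derive everything from the trajectory definitions in \eqref{eq:value_definitions} together with the Markov semantics \eqref{eq:controlled_markov_semantics}. The optimal part is then handled through the standard contraction and fixed-point argument.

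\textbf{Policy identities.} Fix a stationary $\pi$ and condition on $(s_0,a_0)=(s,a)$. Split the return as $R_0+\gamma\sum_{t\ge1}\gamma^{t-1}R_t$. By \eqref{eq:controlled_markov_semantics}, $\mathbb E[R_0\mid s_0=s,a_0=a]=r(s,a)$ and $s_1\sim P(\cdot\mid s,a)$. Under a stationary Markov policy, the law of $(s_1,a_1,R_1,s_2,\ldots)$ given $s_1=s'$ is that of a fresh run started at $s'$. The tail therefore has conditional expectation $V^\pi(s')$, and the tower rule gives the first identity. All sums are absolutely convergent because rewards lie in $[0,1]$ and $\gamma<1$, so the interchange is legitimate. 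The second identity follows by conditioning on $a_0\sim\pi(\cdot\mid s)$.

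\textbf{Optimal identities.} Let $\mathcal T$ be the unrestricted Bellman optimality operator, that is, $\mathcal T_D$ with $D(s)=\mathcal A$. It is a $\gamma$-contraction in sup norm with a unique fixed point $v^\star$. Set $q^\star(s,a)=r(s,a)+\gamma\sum_{s'}P(s'\mid s,a)v^\star(s')$, and let $\pi^\star$ be deterministic greedy with respect to $q^\star$. By the policy identities, $V^{\pi^\star}$ is the unique fixed point of the $\gamma$-contraction $\mathcal T^{\pi^\star}$, and $\mathcal T^{\pi^\star}v^\star=\mathcal T v^\star=v^\star$. Hence $V^{\pi^\star}=v^\star$.

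For any stationary $\pi$ we have $\mathcal T^\pi v\le\mathcal T v$, and $\mathcal T$ is monotone. Iterating $V^\pi=\mathcal T^\pi V^\pi\le\mathcal T V^\pi$ gives $V^\pi\le\mathcal T^nV^\pi\to v^\star$. Therefore $V^*=\max_\pi V^\pi=v^\star$, attained by $\pi^\star$.

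For $Q^*$, the policy identity gives $Q^\pi(s,a)=r+\gamma PV^\pi\le r+\gamma Pv^\star=q^\star(s,a)$, with equality at $\pi^\star$. So $Q^*=q^\star$, which is the first optimal identity. The relation $V^*=\max_aQ^*$ is the fixed-point equation $v^\star=\mathcal T v^\star$ rewritten.

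\textbf{Greedy optimality.} Let $\pi$ be any stationary policy, possibly randomized, supported on $\argmax_aQ^*(s,a)$. Then $\mathcal T^\pi V^*(s)=\sum_a\pi(a\mid s)Q^*(s,a)=V^*(s)$. By uniqueness of the fixed point of $\mathcal T^\pi$, $V^\pi=V^*$, and the policy identity then gives $Q^\pi=Q^*$.

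\textbf{Main obstacle.} The only delicate step is the Markov restart in the policy identities. One must justify that the tail return given $s_1=s'$ has expectation $V^\pi(s')$, which follows from stationarity of $\pi$ and from \eqref{eq:controlled_markov_semantics}. I would make this precise by computing finite-horizon truncations explicitly via the Chapman–Kolmogorov recursion for $\Pr(s_t=\cdot)$ and then passing to the limit by dominated convergence.
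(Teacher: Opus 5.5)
Your proposal is correct and follows essentially the same route as the paper: a Markov restart for the policy identities, the contraction $\mathcal T$ with a greedy policy attaining its fixed point, the monotone comparison $\mathcal T^\pi\le\mathcal T$ for the upper bound (you iterate from $V^\pi$ rather than from $0$, which makes no difference), the sandwich $Q^{\pi}\le q^\star=Q^{\pi^\star}$, and uniqueness of the fixed point of $\mathcal T^\pi$ for greedy optimality. One small precision: read ``condition on $(s_0,a_0)=(s,a)$'' as the forced-action law, as the paper does, because the step $Q^{\pi^\star}=q^\star$ needs $Q^{\pi^\star}(s,a)$ at actions the deterministic $\pi^\star$ never takes, and literal conditioning there is on a null event.
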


\begin{proof}
We prove the policy identities first. Fix a stationary policy $\pi$, a state $s$, and an action $a$. By the forced-action definition,
\[
Q^\pi(s,a)
=
\mathbb E_s^{a\triangleright\pi}\!\left[
R_0+\sum_{t=1}^{\infty}\gamma^tR_t
\right].
\]
The controlled reward model gives
\[
\mathbb E_s^{a\triangleright\pi}[R_0]=r(s,a).
\]
For the remaining terms, condition on the next state and shift the time index:
\begin{align*}
\mathbb E_s^{a\triangleright\pi}\!\left[
\sum_{t=1}^{\infty}\gamma^tR_t
\right]
&=
\gamma\sum_{s'}P(s'\mid s,a)
\mathbb E_{s'}^\pi\!\left[
\sum_{j=0}^{\infty}\gamma^jR_j
\right]\\
&=
\gamma\sum_{s'}P(s'\mid s,a)V^\pi(s').
\end{align*}
The first equality uses~\eqref{eq:controlled_markov_semantics} and the fact that the policy follows $\pi$ from time one onward. Hence
\[
Q^\pi(s,a)
=r(s,a)+\gamma\sum_{s'}P(s'\mid s,a)V^\pi(s').
\]

Under $\pi$, the time-zero action has distribution $\pi(\cdot\mid s)$. The trajectory law is therefore the corresponding mixture of the forced-action laws:
\begin{align*}
V^\pi(s)
&=\sum_a\pi(a\mid s)
\mathbb E_s^{a\triangleright\pi}\!\left[
\sum_{t=0}^{\infty}\gamma^tR_t
\right]\\
&=\sum_a\pi(a\mid s)Q^\pi(s,a).
\end{align*}
This establishes~\eqref{eq:policy_bellman_appendix}.

We next establish the optimal identities. Define
\[
(\mathcal Tv)(s)
=
\max_{a\in\mathcal A}
\left[
r(s,a)+\gamma\sum_{s'}P(s'\mid s,a)v(s')
\right].
\]
For a stationary policy $\pi$, also define its Bellman operator by
\[
(\mathcal T^\pi v)(s)
=\sum_a\pi(a\mid s)
\left[
r(s,a)+\gamma\sum_{s'}P(s'\mid s,a)v(s')
\right].
\]
For arbitrary bounded functions $v,w$,
\begin{align*}
|(\mathcal Tv)(s)-(\mathcal Tw)(s)|
&\le
\max_a
\left|
\gamma\sum_{s'}P(s'\mid s,a)
[v(s')-w(s')]
\right|\\
&\le
\gamma\|v-w\|_\infty.
\end{align*}
Taking the maximum over $s$ gives
\[
\|\mathcal Tv-\mathcal Tw\|_\infty
\le\gamma\|v-w\|_\infty.
\]
Thus $\mathcal T$ has a unique fixed point.

Let $\overline V$ denote the unique fixed point of $\mathcal T$. Set
$v_0\equiv0$ and $v_H=\mathcal T^H0$. Contraction of $\mathcal T$ gives
\[
v_H\longrightarrow\overline V
\qquad\text{in the sup norm}.
\]
The iterates also have the standard finite-horizon interpretation: by induction,
$v_H(s)$ is the largest expected discounted reward collected during the first
$H$ decision stages.

We now identify $\overline V$ with the value $V^*$ defined as the maximum over stationary policies, without assuming in advance that a stationary optimal policy exists. Fix any stationary policy $\pi$. For every function $v$,
\[
\mathcal T^\pi v\le\mathcal Tv
\]
componentwise because an average over actions cannot exceed their maximum. Both operators are monotone, so induction from the zero function gives
\[
(\mathcal T^\pi)^H0
\le
\mathcal T^H0
=v_H
\qquad\text{for every }H.
\]
The policy operator $\mathcal T^\pi$ is a contraction, so its iterates converge to $V^\pi$. Letting $H\to\infty$ yields
\[
V^\pi\le\overline V.
\]
Since this holds for every stationary policy,
\[
V^*\le\overline V.
\]

For the reverse inequality, choose a stationary deterministic policy
$\overline\pi$ satisfying
\[
\overline\pi(s)\in
\argmax_{a\in\mathcal A}
\left[
r(s,a)+\gamma\sum_{s'}P(s'\mid s,a)\overline V(s')
\right]
\]
at every state. Such a choice exists because $\mathcal A$ is finite. Since
$\overline V=\mathcal T\overline V$, the greedy construction gives
\[
\mathcal T^{\overline\pi}\overline V
=
\mathcal T\overline V
=
\overline V.
\]
The operator $\mathcal T^{\overline\pi}$ has the unique fixed point
$V^{\overline\pi}$, and therefore
\[
V^{\overline\pi}=\overline V.
\]
Because $\overline\pi$ is stationary,
\[
V^*\ge V^{\overline\pi}=\overline V.
\]
Combining both inequalities proves
\[
V^*=\overline V=\mathcal TV^*.
\]
This argument simultaneously proves existence of a stationary deterministic optimal policy.

Define
\[
\overline Q(s,a)
=
r(s,a)+\gamma\sum_{s'}P(s'\mid s,a)V^*(s').
\]
It remains to connect this Bellman action value to the trajectory definition
$Q^*(s,a)=\max_\pi Q^\pi(s,a)$. We establish the two inequalities separately.
For every stationary policy $\pi$, the policy value satisfies
$V^\pi\le V^*$ componentwise. Therefore
\begin{align*}
Q^\pi(s,a)
&=
r(s,a)+\gamma\sum_{s'}P(s'\mid s,a)V^\pi(s')\\
&\le
r(s,a)+\gamma\sum_{s'}P(s'\mid s,a)V^*(s')\\
&=\overline Q(s,a).
\end{align*}
Taking the maximum over $\pi$ gives
\[
Q^*(s,a)\le\overline Q(s,a).
\]

For the reverse inequality, choose a stationary policy $\pi^*$ whose support at every state is contained in
\[
\argmax_b
\left[
r(s,b)+\gamma\sum_{s'}P(s'\mid s,b)V^*(s')
\right].
\]
Such a policy exists because the action set is finite. Its policy Bellman operator satisfies
\[
\mathcal T^{\pi^*}V^*=\mathcal TV^*=V^*.
\]
Since $\mathcal T^{\pi^*}$ is a contraction with unique fixed point
$V^{\pi^*}$, we have
\[
V^{\pi^*}=V^*.
\]
Consequently,
\[
Q^{\pi^*}(s,a)
=
r(s,a)+\gamma\sum_{s'}P(s'\mid s,a)V^*(s')
=
\overline Q(s,a).
\]
Because $Q^*$ is the maximum over policies,
\[
Q^*(s,a)\ge Q^{\pi^*}(s,a)=\overline Q(s,a).
\]
Combining the two inequalities gives
\[
Q^*(s,a)
=
r(s,a)+\gamma\sum_{s'}P(s'\mid s,a)V^*(s').
\]
Substituting this identity into the fixed-point equation for $V^*$ yields
\[
V^*(s)=\max_aQ^*(s,a).
\]

Finally, any stationary policy whose support is contained in
$\argmax_aQ^*(s,a)$ satisfies
\[
\mathcal T^\pi V^*=V^*.
\]
The unique fixed point of $\mathcal T^\pi$ is $V^\pi$, so
$V^\pi=V^*$. Such a policy is therefore optimal.
\end{proof}

\begin{lemma}[Value bounds]
\label{lem:value_bounds}
For every $\pi\in\Pi$, state $s$, and action $a$,
\[
0\le V^\pi(s),Q^\pi(s,a),V^*(s),Q^*(s,a)
\le\frac1{1-\gamma}.
\]
Consequently,
$0\le\Delta_{\max}\le(1-\gamma)^{-1}$.
\end{lemma}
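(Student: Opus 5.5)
The statement is Lemma~\ref{lem:value_bounds}, and I will prove it directly from the trajectory definitions in~\eqref{eq:value_definitions} together with the reward range $R_t\in[0,1]$.

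\textbf{Policy values.} Fix a stationary policy $\pi$, a state $s$, and an action $a$. Every realized discounted sum $\sum_{t\ge0}\gamma^tR_t$ lies in $[0,\sum_{t\ge0}\gamma^t]=[0,(1-\gamma)^{-1}]$. The series converges absolutely because $\gamma<1$ and the rewards are bounded. Expectation preserves these pointwise bounds, so they hold for both $V^\pi(s)$ and $Q^\pi(s,a)$, whether the time-zero action is drawn from $\pi$ or forced to equal $a$.

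\textbf{Optimal values.} $V^*(s)$ and $Q^*(s,a)$ are maxima over $\Pi$ of quantities that each lie in $[0,(1-\gamma)^{-1}]$, so they lie in the same interval. The maxima are attained by Lemma~\ref{lem:bellman}, but attainment is not needed: a supremum of values in a closed interval stays in that interval.

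\textbf{Gap bound.} For $\Delta_{\max}$, the lower bound follows from Lemma~\ref{lem:bellman}. At each $s$, choose $a\in\argmax_aQ^*(s,a)$; then $V^*(s)-Q^*(s,a)=0$, so the maximum in~\eqref{eq:maximum_gap} is at least $0$. For the upper bound, every term satisfies $V^*(s)-Q^*(s,a)\le V^*(s)\le(1-\gamma)^{-1}$, since $Q^*(s,a)\ge0$.

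There is no real obstacle here. The only care needed is to justify passing expectations through the infinite sum, which monotone or dominated convergence handles because the rewards are bounded and nonnegative.
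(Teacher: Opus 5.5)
Your proposal is correct and takes essentially the same route as the paper: bound the discounted return $\sum_{t\ge0}\gamma^tR_t$ pathwise in $[0,(1-\gamma)^{-1}]$, take expectations to get the bounds on $V^\pi$ and $Q^\pi$, pass to the maxima over policies, and read off the gap bound from $V^*(s)=\max_bQ^*(s,b)$ together with $Q^*\ge0$. The only difference is that you invoke monotone or dominated convergence, which is not actually needed: the values are defined as expectations of the entire sum, so no interchange of expectation and summation occurs.
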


\begin{proof}
For every sample path,
\[
0\le\sum_{t=0}^{\infty}\gamma^tR_t
\le\sum_{t=0}^{\infty}\gamma^t
=\frac1{1-\gamma}.
\]
Taking expectations under the corresponding controlled trajectory laws preserves both inequalities, proving the bounds for $V^\pi$ and $Q^\pi$. Taking maxima over policies gives the same bounds for $V^*$ and $Q^*$. Finally,
$V^*(s)=\max_bQ^*(s,b)\ge Q^*(s,a)$, so each difference in~\eqref{eq:maximum_gap} is nonnegative and at most $(1-\gamma)^{-1}$.
\end{proof}

\subsection{Proofs for the Bilevel Formulation}
We first establish a supported optimal policy and its occupancy representation. The occupancy formulation then provides a deterministic choice among optimal responses, so that the outer objective has a definite value for every threshold.

\begin{proof}[Proof of Theorem~\ref{thm:inner_wellposed}]
Fix a set map $D$ satisfying $D(s)\ne\emptyset$ for every state. We will prove that control over this set is equivalent to maximizing reward over supported discounted occupancies. For reference, define
\begin{equation}
\max_{\rho\in\mathcal D(D)}\sum_{s,a}\rho(s,a)r(s,a),
\label{eq:restricted_occupancy_lp}
\end{equation}
where
\begin{equation}
\begin{aligned}
\mathcal D(D)=\{\rho\ge0:\;&\sum_a\rho(s,a)
=(1-\gamma)\mu_0(s)\\
&+\gamma\sum_{s',a'}P(s\mid s',a')\rho(s',a'),\ \forall s,\\
&\rho(s,a)=0\text{ for }a\notin D(s)\}.
\end{aligned}
\label{eq:restricted_occupancy_set}
\end{equation}
The flow constraints record discounted state visitation and rule out actions outside $D$. The corresponding dual places an upper bound on reward plus continuation value for every retained action:
\begin{equation}
\begin{aligned}
\min_v\quad&(1-\gamma)\sum_s\mu_0(s)v(s)\\
\text{subject to}\quad&v(s)\ge r(s,a)+\gamma\sum_{s'}P(s'\mid s,a)v(s'),\\
&\hspace{3.45cm}\forall s,\ a\in D(s).
\end{aligned}
\label{eq:restricted_occupancy_dual}
\end{equation}

For each state $s$ and action $a\in D(s)$, define
\[
h_a(v)
=r(s,a)+\gamma\sum_{s'}P(s'\mid s,a)v(s').
\]
For any two finite families of real numbers $\{x_a\}$ and $\{y_a\}$,
\[
\left|\max_a x_a-\max_a y_a\right|
\le\max_a|x_a-y_a|.
\]
Applying this inequality with $x_a=h_a(v)$ and $y_a=h_a(w)$ gives
\begin{align*}
|\mathcal T_Dv(s)-\mathcal T_Dw(s)|
&\le
\max_{a\in D(s)}
\left|
\gamma\sum_{s'}P(s'\mid s,a)
[v(s')-w(s')]
\right|\\
&\le
\gamma\max_{a\in D(s)}
\sum_{s'}P(s'\mid s,a)
|v(s')-w(s')|\\
&\le
\gamma\|v-w\|_\infty
\max_{a\in D(s)}
\sum_{s'}P(s'\mid s,a)\\
&=\gamma\|v-w\|_\infty.
\end{align*}
Taking the maximum over $s$ proves
\[
\|\mathcal T_Dv-\mathcal T_Dw\|_\infty
\le\gamma\|v-w\|_\infty.
\]
Because $\gamma<1$, the Banach fixed-point theorem gives a unique fixed point, denoted $V_D^*$.

For each state $s$, the set $D(s)$ is finite and nonempty, so there exists
\[
a_D^*(s)\in
\argmax_{a\in D(s)}
\left[
r(s,a)+\gamma\sum_{s'}P(s'\mid s,a)V_D^*(s')
\right].
\]
Let $\pi_D^*$ select $a_D^*(s)$ with probability one. Define the policy Bellman operator
\[
(\mathcal T^\pi v)(s)
=
\sum_a\pi(a\mid s)
\left[
r(s,a)+\gamma\sum_{s'}P(s'\mid s,a)v(s')
\right].
\]
By the choice of $a_D^*(s)$,
\[
\mathcal T^{\pi_D^*}V_D^*
=\mathcal T_DV_D^*
=V_D^*.
\]
The operator $\mathcal T^{\pi_D^*}$ is a $\gamma$ contraction by the same calculation as above, and its unique fixed point is $V^{\pi_D^*}$. Hence
\[
V^{\pi_D^*}=V_D^*.
\]

Now fix any $\pi\in\Pi(D)$. For every $v$ and every state,
\begin{align*}
(\mathcal T^\pi v)(s)
&=\sum_{a\in D(s)}\pi(a\mid s)
\left[
r(s,a)+\gamma\sum_{s'}P(s'\mid s,a)v(s')
\right]\\
&\le
\max_{a\in D(s)}
\left[
r(s,a)+\gamma\sum_{s'}P(s'\mid s,a)v(s')
\right]\\
&=(\mathcal T_Dv)(s).
\end{align*}
Both operators are monotone. Starting from the zero function and applying the preceding inequality inductively yields
\[
(\mathcal T^\pi)^n0
\le
\mathcal T_D^n0
\qquad\text{for every }n.
\]
The left side converges to $V^\pi$ and the right side converges to $V_D^*$ because both operators are contractions. Therefore
\[
V^\pi\le V_D^*
\]
componentwise. This proves that $\pi_D^*$ is optimal over $\Pi(D)$.

For $\pi\in\Pi(D)$, define
\[
\rho^\pi(s,a)
=(1-\gamma)\sum_{t=0}^{\infty}\gamma^t
\Pr^\pi(s_t=s,a_t=a\mid s_0\sim\mu_0).
\]
Every component is nonnegative. If $a\notin D(s)$, then
$\pi(a\mid s)=0$, and hence
$\Pr^\pi(s_t=s,a_t=a)=0$ for every $t$; therefore
$\rho^\pi(s,a)=0$.

Let
\[
d^\pi(s)=\sum_a\rho^\pi(s,a).
\]
Using the law of total probability and the Markov property,
\begin{align*}
d^\pi(s)
&=(1-\gamma)\sum_{t=0}^{\infty}\gamma^t
\Pr^\pi(s_t=s)\\
&=(1-\gamma)\mu_0(s)
 +(1-\gamma)\sum_{t=1}^{\infty}\gamma^t
 \Pr^\pi(s_t=s)\\
&=(1-\gamma)\mu_0(s)
 +\gamma(1-\gamma)\sum_{j=0}^{\infty}\gamma^j
 \Pr^\pi(s_{j+1}=s)\\
&=(1-\gamma)\mu_0(s)\\
&\quad+\gamma(1-\gamma)\sum_{j=0}^{\infty}\gamma^j
 \sum_{s',a'}
 \Pr^\pi(s_j=s',a_j=a')P(s\mid s',a')\\
&=(1-\gamma)\mu_0(s)
 +\gamma\sum_{s',a'}P(s\mid s',a')\rho^\pi(s',a').
\end{align*}
Thus $\rho^\pi\in\mathcal D(D)$.

Let $\rho\in\mathcal D(D)$ and define
\[
d(s)=\sum_a\rho(s,a).
\]
If $d(s)>0$, set
\[
\pi_\rho(a\mid s)=\frac{\rho(s,a)}{d(s)}.
\]
If $d(s)=0$, let $\pi_\rho(\cdot\mid s)$ place probability one on the first action in $D(s)$ under the fixed global action order. This action exists because $D(s)\ne\emptyset$. The confinement constraints imply that $\pi_\rho(a\mid s)=0$ for every $a\notin D(s)$, so
$\pi_\rho\in\Pi(D)$.

Define
\[
P^{\pi_\rho}(s'\mid s)
=\sum_a\pi_\rho(a\mid s)P(s'\mid s,a).
\]
We claim that the flow constraints for $\rho$ can be written as
\[
d=(1-\gamma)\mu_0
+\gamma(P^{\pi_\rho})^\top d.
\]
Indeed, if $d(s')>0$, then
$d(s')\pi_\rho(a\mid s')=\rho(s',a)$.
If $d(s')=0$, nonnegativity and
$\sum_a\rho(s',a)=0$ imply
$\rho(s',a)=0=d(s')\pi_\rho(a\mid s')$ for every action.
Consequently,
\begin{align*}
\sum_{s',a'}P(s\mid s',a')\rho(s',a')
&=\sum_{s'}d(s')
\sum_{a'}\pi_\rho(a'\mid s')P(s\mid s',a')\\
&=\sum_{s'}P^{\pi_\rho}(s\mid s')d(s'),
\end{align*}
which proves the vector equation.

Because $P^{\pi_\rho}$ is row stochastic,
$\|(P^{\pi_\rho})^\top x\|_1\le\|x\|_1$ for every vector $x$.
Hence
\[
\|\gamma(P^{\pi_\rho})^\top\|_{1\to1}\le\gamma<1,
\]
and the Neumann series
\[
\left[I-\gamma(P^{\pi_\rho})^\top\right]^{-1}
=
\sum_{t=0}^{\infty}
\gamma^t\left[(P^{\pi_\rho})^\top\right]^t
\]
converges. Thus the state-flow equation has the unique solution
\[
d=(1-\gamma)
\sum_{t=0}^{\infty}
\gamma^t\left[(P^{\pi_\rho})^\top\right]^t\mu_0,
\]
which is exactly the normalized discounted state occupancy of $\pi_\rho$.
Multiplying this state occupancy by $\pi_\rho(a\mid s)$ shows that the state-action occupancy of $\pi_\rho$ is
\[
d(s)\pi_\rho(a\mid s)=\rho(s,a)
\]
at every state and action. Therefore every point in $\mathcal D(D)$ is generated by a policy in $\Pi(D)$.

For any $\pi\in\Pi(D)$,
\begin{align*}
\sum_{s,a}\rho^\pi(s,a)r(s,a)
&=(1-\gamma)\sum_{t=0}^{\infty}\gamma^t
\sum_{s,a}
\Pr^\pi(s_t=s,a_t=a)r(s,a)\\
&=(1-\gamma)\sum_{t=0}^{\infty}\gamma^t
\mathbb E^\pi[r(s_t,a_t)]\\
&=(1-\gamma)\sum_{t=0}^{\infty}\gamma^t
\mathbb E^\pi[R_t]\\
&=(1-\gamma)V^\pi(\mu_0).
\end{align*}
The third equality uses
$\mathbb E[R_t\mid s_t,a_t]=r(s_t,a_t)$ and the tower property.
Steps 3 and 4 give a correspondence between supported policies and feasible occupancies, so maximizing the LP is equivalent to maximizing
$V^\pi(\mu_0)$ over $\Pi(D)$. Its optimal value is therefore
$(1-\gamma)V_D^*(\mu_0)$.

Summing the flow constraints over all states yields
\begin{align*}
\sum_{s,a}\rho(s,a)
&=(1-\gamma)\sum_s\mu_0(s)\\
&\quad+\gamma
\sum_{s',a'}\rho(s',a')
\sum_sP(s\mid s',a')\\
&=(1-\gamma)+\gamma\sum_{s',a'}\rho(s',a').
\end{align*}
Therefore
\[
\sum_{s,a}\rho(s,a)=1.
\]
The set $\mathcal D(D)$ is defined by finitely many linear equalities, linear zero constraints, and nonnegativity constraints, so it is closed. The preceding identity and nonnegativity make it bounded. It is nonempty because any deterministic policy supported on $D$ has a feasible occupancy. Hence $\mathcal D(D)$ is compact, and the continuous linear objective attains its maximum.

It is enough to regard $\rho(s,a)$ for $a\in D(s)$ as the primal variables. Associate an unrestricted dual variable $v(s)$ with each flow equality. The coefficient of a primal variable $\rho(x,a)$ in the equality indexed by $s$ is
\[
\Ind\{s=x\}-\gamma P(s\mid x,a).
\]
The corresponding dual inequality is
\[
\sum_s v(s)
[\Ind\{s=x\}-\gamma P(s\mid x,a)]
\ge r(x,a),
\]
which is equivalent to
\[
v(x)\ge r(x,a)
+\gamma\sum_sP(s\mid x,a)v(s).
\]
This gives the dual in~\eqref{eq:restricted_occupancy_dual}. The primal is feasible and bounded. The dual is feasible: the constant vector
\[
v(s)=\frac1{1-\gamma}
\]
satisfies, for every admissible action,
\[
r(s,a)+\gamma\sum_{s'}P(s'\mid s,a)v(s')
\le1+\frac{\gamma}{1-\gamma}
=\frac1{1-\gamma}
=v(s).
\]
The finite-dimensional linear-programming strong-duality theorem therefore gives equality of the primal and dual optimal values.

Let $\mathcal O_D$ be the set of primal optimizers. It is a nonempty closed subset of the compact set $\mathcal D(D)$ and is therefore compact.
Order the $n=|\mathcal S||\mathcal A|$ coordinates as
$1,\ldots,n$. Set $\mathcal O_D^{(0)}=\mathcal O_D$. For
$j=1,\ldots,n$, define
\[
m_j=\min_{\rho\in\mathcal O_D^{(j-1)}}\rho_j,
\qquad
\mathcal O_D^{(j)}
=\{\rho\in\mathcal O_D^{(j-1)}:\rho_j=m_j\}.
\]
Each minimum is attained because
$\mathcal O_D^{(j-1)}$ is nonempty and compact, so each
$\mathcal O_D^{(j)}$ is again nonempty and compact. At the last step all coordinates are fixed, so
$\mathcal O_D^{(n)}$ contains exactly one vector. This vector is the lexicographically smallest optimal occupancy and is denoted
$\rho_D^{\mathrm{lex}}$.

Finally, reconstruct a policy from $\rho_D^{\mathrm{lex}}$, using the first action in $D(s)$ at every state with zero occupancy. The reconstructed policy has occupancy
$\rho_D^{\mathrm{lex}}$ and attains the restricted optimal value. This supplies the unique response convention needed to evaluate the outer objective.
\end{proof}

\begin{proof}[Proof of Proposition~\ref{prop:outer_existence}]
Fix $\theta$. Because $\mathcal S$ and $\mathcal A$ are finite, the collection of score values
\[
\{g_\theta(s,a):s\in\mathcal S, a\in\mathcal A\}
\]
is finite. As $\tau$ varies, a raw set changes only when $\tau$ crosses one of these values. The fallback action is also fixed by $\theta$ and the tie-breaking order. Hence only finitely many complete raw--operational designs
$G=(C_{\theta,\tau},D_{\theta,\tau})$ can be induced by thresholds.

For any induced design, the lower occupancy LP depends only on $D$. The lexicographic construction in Theorem~\ref{thm:inner_wellposed} therefore fixes one optimal occupancy and makes both the objective and coverage constraint functions of $G$ alone. The feasible induced designs form a finite set, so the objective attains its minimum whenever this set is nonempty.

It remains to prove nonemptiness. Since every score is finite, choose
\[
\tau_{\mathrm{all}}
\ge\max_{s,a}g_\theta(s,a).
\]
Then $C_{\theta,\tau_{\mathrm{all}}}(s)=D_{\theta,\tau_{\mathrm{all}}}(s)=\mathcal A$ for every state. The oracle target is nonempty, so the retention probability equals one. Thus $\tau_{\mathrm{all}}$ is feasible, and a minimizing threshold exists.
\end{proof}

\subsection{Proof of the Quantile Characterization}
The argument has two parts. The margin equivalence identifies the feasible thresholds, and nesting of the execution sets establishes size minimality under the same fixed occupancy.

\begin{proof}[Proof of Proposition~\ref{prop:quantile}]
Fix $\theta$ and $\pi$. For every state $s$,
\begin{align*}
&C_{\theta,\tau}(s)\cap\mathcal A^*_\varepsilon(s)
\ne\emptyset\\
&\quad\Longleftrightarrow
\text{there exists }a\in\mathcal A^*_\varepsilon(s)
\text{ with }g_\theta(s,a)\le\tau\\
&\quad\Longleftrightarrow
\min_{a\in\mathcal A^*_\varepsilon(s)}
g_\theta(s,a)\le\tau\\
&\quad\Longleftrightarrow
m_\theta(s)\le\tau.
\end{align*}
The minimum is well-defined because
$\mathcal A^*_\varepsilon(s)$ is finite and nonempty. Taking probability under
$s\sim d^\pi$ gives
\[
\Pr_{s\sim d^\pi}
\left(
C_{\theta,\tau}(s)\cap
\mathcal A^*_\varepsilon(s)\ne\emptyset
\right)
=
F_{\theta,\pi}(\tau).
\]
Thus $\tau$ satisfies the coverage requirement if and only if
\[
F_{\theta,\pi}(\tau)\ge1-\alpha.
\]

Let
\[
q=F^{-1}_{\theta,\pi}(1-\alpha)
=\inf\{x:F_{\theta,\pi}(x)\ge1-\alpha\}.
\]
Because $\mathcal S$ is finite, the random variable
$m_\theta(s)$ has finite support under $d^\pi$. Hence its CDF is a right-continuous step function and the set
\[
\{x:F_{\theta,\pi}(x)\ge1-\alpha\}
\]
has a smallest element. That element is $q$, so
$F_{\theta,\pi}(q)\ge1-\alpha$. If $\tau<q$, then by the definition of the infimum and the fact that $q$ is the smallest feasible support point,
\[
F_{\theta,\pi}(\tau)<1-\alpha.
\]
If $\tau\ge q$, monotonicity of the CDF gives
\[
F_{\theta,\pi}(\tau)\ge
F_{\theta,\pi}(q)\ge1-\alpha.
\]
Therefore the feasible thresholds are exactly the interval
$[q,\infty)$, and the smallest feasible threshold is
$q=\tau_{\min}(\theta,\pi)$.

It remains to prove the set-size minimality under the same fixed policy. Let
$\tau_1\le\tau_2$. The raw sets are nested:
\[
C_{\theta,\tau_1}(s)
\subseteq C_{\theta,\tau_2}(s)
\qquad\text{for every }s.
\]
We now verify that the operational sets are also nested.

If $C_{\theta,\tau_1}(s)\ne\emptyset$, then both operational sets equal their raw sets and
\[
\overline C_{\theta,\tau_1}(s)
=C_{\theta,\tau_1}(s)
\subseteq C_{\theta,\tau_2}(s)
=\overline C_{\theta,\tau_2}(s).
\]
If $C_{\theta,\tau_1}(s)=\emptyset$ and
$C_{\theta,\tau_2}(s)=\emptyset$, then both operational sets equal the same singleton
$\{a_\theta^{\min}(s)\}$.
Finally, suppose
$C_{\theta,\tau_1}(s)=\emptyset$ but
$C_{\theta,\tau_2}(s)\ne\emptyset$.
Nonemptiness of the latter implies
\[
\tau_2\ge\min_{a}g_\theta(s,a)
=g_\theta(s,a_\theta^{\min}(s)),
\]
so
$a_\theta^{\min}(s)\in C_{\theta,\tau_2}(s)$. Hence
\[
\overline C_{\theta,\tau_1}(s)
=\{a_\theta^{\min}(s)\}
\subseteq
C_{\theta,\tau_2}(s)
=\overline C_{\theta,\tau_2}(s).
\]
Thus
\[
|\overline C_{\theta,\tau_1}(s)|
\le
|\overline C_{\theta,\tau_2}(s)|
\]
for every state. Averaging with respect to the fixed distribution
$d^\pi$ preserves the inequality. Since every coverage-valid threshold is at least
$\tau_{\min}(\theta,\pi)$, that threshold minimizes the expected operational-set size among all thresholds evaluated under the same fixed policy.
\end{proof}

\subsection{Proof of Pathwise Calibration}
The update cannot move arbitrarily far beyond the realized margin range: outside that range, its direction points back toward it. We first formalize this interval bound and then express the cumulative calibration error in terms of threshold increments.

\begin{proof}[Proof of Theorem~\ref{thm:pathwise}]
Fix an arbitrary horizon $T\ge1$ and define
\begin{equation*}
\begin{aligned}
\underline m_T
&=\min\{\tau_1,m_1,\ldots,m_T\},\\
\overline m_T
&=\max\{\tau_1,m_1,\ldots,m_T\}.
\end{aligned}
\end{equation*}
Then $W_T=\overline m_T-\underline m_T$. Define
\[
\tau_T^-=\underline m_T-\alpha\eta_1,
\qquad
\tau_T^+=\overline m_T+(1-\alpha)\eta_1.
\]
We first prove by induction that
\begin{equation}
\tau_t\in[\tau_T^-,\tau_T^+]
\qquad
\text{for every }1\le t\le T+1.
\label{eq:threshold_invariant}
\end{equation}
The base case holds because
$\tau_1\in[\underline m_T,\overline m_T]$.
Assume the claim holds for some $t\le T$. Since
$\eta_t\le\eta_1$, there are three cases.

If $\tau_t<\underline m_T$, then
$m_t\ge\underline m_T>\tau_t$, so $e_t=1$ and
\[
\tau_{t+1}=\tau_t+(1-\alpha)\eta_t.
\]
The update is nonnegative, so
$\tau_{t+1}\ge\tau_t\ge\tau_T^-$. Also,
\begin{align*}
\tau_{t+1}
&<\underline m_T+(1-\alpha)\eta_t\\
&\le\overline m_T+(1-\alpha)\eta_1
=\tau_T^+.
\end{align*}

If
$\underline m_T\le\tau_t\le\overline m_T$,
then $e_t\in\{0,1\}$ and
\[
\tau_t-\alpha\eta_t
\le\tau_{t+1}
\le\tau_t+(1-\alpha)\eta_t.
\]
Using $\eta_t\le\eta_1$ and the bounds on $\tau_t$ yields
$\tau_T^-\le\tau_{t+1}\le\tau_T^+$.

If $\tau_t>\overline m_T$, then
$m_t\le\overline m_T<\tau_t$, so $e_t=0$ and
\[
\tau_{t+1}=\tau_t-\alpha\eta_t.
\]
The update is nonpositive, so
$\tau_{t+1}\le\tau_t\le\tau_T^+$. Moreover,
\begin{align*}
\tau_{t+1}
&>\overline m_T-\alpha\eta_t\\
&\ge\underline m_T-\alpha\eta_1
=\tau_T^-.
\end{align*}
This completes the induction. The invariant interval has width
\begin{equation}
\begin{aligned}
\tau_T^+-\tau_T^-
&=(\overline m_T-\underline m_T)+\eta_1\\
&=W_T+\eta_1.
\end{aligned}
\label{eq:invariant_width}
\end{equation}

The interval bound now controls the accumulated misses. Rearranging~\eqref{eq:online_threshold_update} gives
\[
e_t-\alpha
=\frac{\tau_{t+1}-\tau_t}{\eta_t}.
\]
Let $c_t=1/\eta_t$. Because $\eta_t$ is positive and nonincreasing, $c_t$ is positive and nondecreasing. Therefore
\begin{align*}
S_T
&=\sum_{t=1}^T(e_t-\alpha)\\
&=\sum_{t=1}^Tc_t(\tau_{t+1}-\tau_t)\\
&=c_T\tau_{T+1}-c_1\tau_1
-\sum_{t=2}^T(c_t-c_{t-1})\tau_t.
\end{align*}
Define
\[
\omega_1=\frac{c_1}{c_T},
\qquad
\omega_t=\frac{c_t-c_{t-1}}{c_T}
\quad(2\le t\le T).
\]
Each $\omega_t\ge0$, and
\[
\sum_{t=1}^T\omega_t
=\frac{c_1+\sum_{t=2}^T(c_t-c_{t-1})}{c_T}
=1.
\]
Hence
\[
S_T
=c_T\left(
\tau_{T+1}-\sum_{t=1}^T\omega_t\tau_t
\right).
\]
The weighted sum is a convex combination of
$\tau_1,\ldots,\tau_T$. By~\eqref{eq:threshold_invariant}, both this convex combination and $\tau_{T+1}$ lie in
$[\tau_T^-,\tau_T^+]$. Thus, using~\eqref{eq:invariant_width},
\[
|S_T|
\le c_T(\tau_T^+-\tau_T^-)
=\frac{W_T+\eta_1}{\eta_T}.
\]
Dividing by $T$ proves~\eqref{eq:pathwise_calibration_bound}. Since the argument used only the update and the realized margin interval, the bound holds for an arbitrary adaptive sequence. The convergence condition and the constant step consequence follow directly from its right hand side.
\end{proof}

\subsection{Proofs for the Value Results}
We begin with the counterexample separating retention from selection, then derive the exact value identity. The subsequent proofs bound its filtering and selection terms in turn, before combining them along an adaptive finite session.

\begin{proof}[Proof of Proposition~\ref{prop:coverage_insufficient}]
Consider one state $s$, two actions $a_g,a_b$, deterministic self-transitions, and rewards
\[
r(s,a_g)=1,
\qquad
r(s,a_b)=0.
\]
The optimal policy always chooses $a_g$, so
\[
V^*(s)=\sum_{t=0}^{\infty}\gamma^t
=\frac1{1-\gamma}.
\]
The optimal action values are
\begin{equation*}
\begin{aligned}
Q^*(s,a_g)
&=1+\gamma V^*(s)=\frac1{1-\gamma},\\
Q^*(s,a_b)
&=\gamma V^*(s)=\frac{\gamma}{1-\gamma}.
\end{aligned}
\end{equation*}
Therefore
$\mathcal A^*_0(s)=\{a_g\}$.
Let $D(s)=\{a_g,a_b\}$, so the set contains an optimal action with probability one. Let $\pi_b$ always choose $a_b$. This policy belongs to $\Pi(D)$ and receives zero reward at every time, hence
$V^{\pi_b}(s)=0$. Its value gap is
\[
V^*(s)-V^{\pi_b}(s)=\frac1{1-\gamma}.
\]
Lemma~\ref{lem:value_bounds} shows that no value gap can be larger, so the gap is maximal.
\end{proof}

\begin{proof}[Proof of Theorem~\ref{thm:value_decomposition}]
We first verify that both terms are losses, then express their sum as the optimal Bellman residual of the supported policy. Since $D(s)$ is nonempty and
$V^*(s)=\max_{a\in\mathcal A}Q^*(s,a)$,
\[
\max_{a\in D(s)}Q^*(s,a)\le V^*(s),
\]
so $\Delta_D(s)\ge0$. Also, an average cannot exceed the maximum of the averaged numbers:
\[
\sum_{a\in D(s)}\pi(a\mid s)Q^*(s,a)
\le
\max_{a\in D(s)}Q^*(s,a),
\]
so $\zeta_{\pi,D}(s)\ge0$.

Define the policy-averaged reward and transition kernel
\begin{equation*}
\begin{aligned}
r^\pi(s)
&=\sum_a\pi(a\mid s)r(s,a),\\
P^\pi(s'\mid s)
&=\sum_a\pi(a\mid s)P(s'\mid s,a).
\end{aligned}
\end{equation*}
For any function $f$ on $\mathcal S$, let
\[
(P^\pi f)(s)
=\sum_{s'}P^\pi(s'\mid s)f(s').
\]
To relate these local gaps to the total value loss, define the optimal Bellman residual of $\pi$ by
\[
b_\pi(s)
=V^*(s)-\sum_a\pi(a\mid s)Q^*(s,a).
\]
Using~\eqref{eq:optimal_bellman_appendix},
\begin{align*}
&\sum_a\pi(a\mid s)Q^*(s,a)\\
&\quad=\sum_a\pi(a\mid s)r(s,a)\\
&\qquad+\gamma\sum_{s'}
\left[\sum_a\pi(a\mid s)P(s'\mid s,a)\right]
V^*(s')\\
&\quad=r^\pi(s)+\gamma(P^\pi V^*)(s).
\end{align*}
Lemma~\ref{lem:bellman} gives
$V^\pi=r^\pi+\gamma P^\pi V^\pi$.
Subtracting this identity from $V^*$ yields, state by state,
\begin{align*}
V^*-V^\pi
&=V^*-r^\pi-\gamma P^\pi V^\pi\\
&=\left[V^*-r^\pi-\gamma P^\pi V^*\right]
+\gamma P^\pi(V^*-V^\pi)\\
&=b_\pi+\gamma P^\pi(V^*-V^\pi).
\end{align*}
Iterating this equality $N$ times gives
\begin{align*}
V^*-V^\pi
={}&\sum_{t=0}^{N-1}\gamma^t(P^\pi)^tb_\pi\\
&+\gamma^N(P^\pi)^N(V^*-V^\pi).
\end{align*}
A Markov operator is nonexpansive in the sup norm. Lemma~\ref{lem:value_bounds} therefore gives
\[
\left\|
\gamma^N(P^\pi)^N(V^*-V^\pi)
\right\|_\infty
\le\frac{\gamma^N}{1-\gamma}
\longrightarrow0.
\]
Consequently,
\[
V^*-V^\pi
=\sum_{t=0}^{\infty}\gamma^t(P^\pi)^tb_\pi.
\]
Average both sides over $s_0\sim\mu_0$. The definition~\eqref{eq:state_occupancy} implies
\begin{align*}
V^*(\mu_0)-V^\pi(\mu_0)
&=\sum_{t=0}^{\infty}\gamma^t
\mathbb E^\pi[b_\pi(s_t)]\\
&=\frac1{1-\gamma}
\mathbb E_{s\sim d^\pi}[b_\pi(s)].
\end{align*}
Because $\pi\in\Pi(D)$,
\begin{align*}
b_\pi(s)
&=V^*(s)-\sum_{a\in D(s)}
\pi(a\mid s)Q^*(s,a)\\
&=V^*(s)-\max_{a\in D(s)}Q^*(s,a)\\
&\quad+\max_{a\in D(s)}Q^*(s,a)
-\sum_{a\in D(s)}\pi(a\mid s)Q^*(s,a)\\
&=\Delta_D(s)+\zeta_{\pi,D}(s).
\end{align*}
Substitution proves~\eqref{eq:exact_value_decomposition}. The decomposition is exact because it splits the same Bellman residual, rather than bounding filtering and selection through separate trajectory comparisons.
\end{proof}

\begin{proof}[Proof of Corollary~\ref{cor:coverage_to_value}]
Define the hit set
\[
H
=\{s:C(s)\cap\mathcal A^*_\varepsilon(s)\ne\emptyset\}
\]
and its miss probability under the policy occupancy:
\[
\alpha_\pi
=\Pr_{s\sim d^\pi}(s\notin H).
\]
Assumption~\eqref{eq:generic_coverage_assumption} gives
\[
\alpha_\pi\le\alpha.
\]

Fix a state $s\in H$. By definition of $H$, there exists at least one action
\[
a_H(s)\in C(s)\cap\mathcal A^*_\varepsilon(s).
\]
Since $C(s)\subseteq D(s)$,
\[
a_H(s)\in D(s).
\]
Therefore
\begin{align*}
\max_{a\in D(s)}Q^*(s,a)
&\ge Q^*(s,a_H(s))\\
&\ge V^*(s)-\varepsilon,
\end{align*}
where the second inequality follows from
$a_H(s)\in\mathcal A^*_\varepsilon(s)$. Rearranging gives
\[
\Delta_D(s)
=V^*(s)-\max_{a\in D(s)}Q^*(s,a)
\le\varepsilon.
\]
By definition of $\Delta_{\max}$, every nonempty set $D(s)$ also satisfies
\[
0\le\Delta_D(s)\le\Delta_{\max}.
\]
Hence, on the hit set,
\[
\Delta_D(s)\le
\min\{\varepsilon,\Delta_{\max}\}
=\varepsilon_0,
\]
while on the miss set,
\[
\Delta_D(s)\le\Delta_{\max}.
\]

Split the occupancy expectation over $H$ and its complement:
\begin{align*}
\mathbb E_{s\sim d^\pi}[\Delta_D(s)]
&=
\mathbb E[\Delta_D(s)\Ind\{s\in H\}]\\
&\quad+
\mathbb E[\Delta_D(s)\Ind\{s\notin H\}]\\
&\le
\varepsilon_0\Pr(s\in H)
+\Delta_{\max}\Pr(s\notin H)\\
&=(1-\alpha_\pi)\varepsilon_0
+\alpha_\pi\Delta_{\max}\\
&=\varepsilon_0
+\alpha_\pi(\Delta_{\max}-\varepsilon_0).
\end{align*}
Since $\Delta_{\max}-\varepsilon_0\ge0$ and
$\alpha_\pi\le\alpha$,
\begin{align*}
\mathbb E_{d^\pi}[\Delta_D(s)]
&\le
\varepsilon_0
+\alpha(\Delta_{\max}-\varepsilon_0)\\
&=(1-\alpha)\varepsilon_0
+\alpha\Delta_{\max}.
\end{align*}
The exact decomposition in Theorem~\ref{thm:value_decomposition} gives
\[
V^*(\mu_0)-V^\pi(\mu_0)
=
\frac{
\mathbb E_{d^\pi}[\Delta_D(s)]
+\overline\zeta_{\pi,D}
}{1-\gamma}.
\]
Substituting the preceding bound proves~\eqref{eq:coverage_to_value_bound}.
\end{proof}

\begin{proof}[Proof of Corollary~\ref{cor:bilevel_value}]
Feasibility of $(\theta,\tau)$ in
Problem~\eqref{eq:bilevel_objective} gives
\[
\Pr_{s\sim d^{\pi^*_{\theta,\tau}}}
\left(
C_{\theta,\tau}(s)
\cap\mathcal A^*_\varepsilon(s)
\ne\emptyset
\right)
\ge1-\alpha.
\]
By construction,
\[
C_{\theta,\tau}(s)\subseteq
D_{\theta,\tau}(s)
\]
and $D_{\theta,\tau}(s)$ is nonempty for every state.
The lower-level policy satisfies
\[
\pi^*_{\theta,\tau}\in
\Pi(D_{\theta,\tau}).
\]
Thus every assumption of Corollary~\ref{cor:coverage_to_value} holds after setting
\[
C=C_{\theta,\tau},
\qquad
D=D_{\theta,\tau},
\qquad
\pi=\pi^*_{\theta,\tau}.
\]
The stated bound follows directly.
\end{proof}

The occupancy bound concerns retention averaged under a specified policy. Under the stronger condition that every state retains a target action, we can instead compare the restricted and unrestricted optimal values at every initial state.

\begin{corollary}[Pointwise retention]
\label{cor:pointwise_retention}
If $D(s)\cap\mathcal A^*_\varepsilon(s)\ne\emptyset$ for every state, then
\begin{equation}
0\le V^*(s)-V_D^*(s)\le\frac{\varepsilon_0}{1-\gamma}
\qquad\forall s.
\label{eq:pointwise_retention_bound}
\end{equation}
\end{corollary}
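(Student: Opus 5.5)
The lower bound follows from Theorem~\ref{thm:inner_wellposed}. Because $\Pi(D)\subseteq\Pi$, we have $V_D^*=\max_{\pi\in\Pi(D)}V^\pi\le V^*$ componentwise. Equivalently, $\mathcal T_D v\le\mathcal T v$ for every $v$, and iterating both operators from $0$ gives $V_D^*\le V^*$. This proves $V^*(s)-V_D^*(s)\ge0$.

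For the upper bound, I will choose a policy directly. At each state, let $\pi(s)$ be an action in $D(s)\cap\mathcal A^*_\varepsilon(s)$; this is possible because the intersection is nonempty. Then $\pi\in\Pi(D)$ is deterministic, so $V_D^*\ge V^\pi$. It therefore suffices to bound $V^*(s)-V^\pi(s)$ at every $s$.

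I will reuse the Bellman residual recursion from the proof of Theorem~\ref{thm:value_decomposition}, but keep it pointwise instead of averaging over $\mu_0$. That proof gives
\[
V^*-V^\pi=\sum_{t\ge0}\gamma^t(P^\pi)^t b_\pi,\qquad b_\pi(s)=V^*(s)-Q^*(s,\pi(s)).
\]
Because $\pi(s)\in\mathcal A^*_\varepsilon(s)$, we get $b_\pi(s)\le\varepsilon$. By the definition of $\Delta_{\max}$ we also have $b_\pi(s)\le\Delta_{\max}$, so $0\le b_\pi\le\varepsilon_0$. Each $(P^\pi)^t$ is a Markov operator, so it is positive and preserves constants. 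Hence $(P^\pi)^t b_\pi\le\varepsilon_0$ pointwise, and summing the geometric series gives $V^*(s)-V^\pi(s)\le\varepsilon_0/(1-\gamma)$ for every $s$.

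One could instead take $\mu_0=\delta_s$ in Theorem~\ref{thm:value_decomposition}. The identity holds for any initial distribution, and the bound $\Delta_D+\zeta_{\pi,D}=b_\pi\le\varepsilon_0$ holds at every state, so this gives the same result. The only point needing care is that this bound is pointwise and not merely on average. With the retained target action chosen as above, it holds everywhere, so I do not expect a real obstacle.
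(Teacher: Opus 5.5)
Your proof is correct and follows the paper's argument. You build a deterministic policy in $\Pi(D)$ whose optimal Bellman residual is at most $\varepsilon_0$ at every state, turn that into a pointwise value bound through the residual identity from the proof of Theorem~\ref{thm:value_decomposition}, and conclude with $V^{\pi}\le V_D^*\le V^*$ via Theorem~\ref{thm:inner_wellposed}. The differences are cosmetic: the paper takes $a_Q(s)\in\argmax_{a\in D(s)}Q^*(s,a)$, so the residual is exactly $\Delta_D(s)$, and closes with the sup-norm step $\|V^*-V^{\pi_Q}\|_\infty\le\varepsilon_0+\gamma\|V^*-V^{\pi_Q}\|_\infty$, whereas you take any retained target action and sum the Neumann series using positivity of $P^\pi$.
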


\begin{proof}[Proof of Corollary~\ref{cor:pointwise_retention}]
For every state $s$, choose
\[
a_Q(s)\in\argmax_{a\in D(s)}Q^*(s,a)
\]
using a fixed tie-breaking rule, and let $\pi_Q$ be the deterministic policy that selects $a_Q(s)$. Because $a_Q(s)\in D(s)$,
\[
\pi_Q\in\Pi(D).
\]

The pointwise retention assumption gives an action
$a_H(s)\in D(s)\cap\mathcal A^*_\varepsilon(s)$ at every state. Therefore
\begin{align*}
\max_{a\in D(s)}Q^*(s,a)
&\ge Q^*(s,a_H(s))\\
&\ge V^*(s)-\varepsilon.
\end{align*}
Together with $\Delta_D(s)\le\Delta_{\max}$, this yields
\[
\Delta_D(s)
=V^*(s)-\max_{a\in D(s)}Q^*(s,a)
\le\varepsilon_0
\qquad\text{for every }s.
\]

For the policy $\pi_Q$, the optimal Bellman residual is
\begin{align*}
b_{\pi_Q}(s)
&=V^*(s)-\sum_a\pi_Q(a\mid s)Q^*(s,a)\\
&=V^*(s)-Q^*(s,a_Q(s))\\
&=\Delta_D(s)\\
&\le\varepsilon_0.
\end{align*}
The residual identity derived in the proof of
Theorem~\ref{thm:value_decomposition} gives
\[
V^*-V^{\pi_Q}
=b_{\pi_Q}
+\gamma P^{\pi_Q}(V^*-V^{\pi_Q}).
\]
Taking sup norms, using
$\|b_{\pi_Q}\|_\infty\le\varepsilon_0$, and using that a Markov operator is nonexpansive in the sup norm, we obtain
\[
\|V^*-V^{\pi_Q}\|_\infty
\le
\varepsilon_0
+\gamma\|V^*-V^{\pi_Q}\|_\infty.
\]
Rearranging gives
\[
\|V^*-V^{\pi_Q}\|_\infty
\le\frac{\varepsilon_0}{1-\gamma}.
\]

By Theorem~\ref{thm:inner_wellposed}, the restricted optimal value dominates every supported policy componentwise:
\[
V_D^*\ge V^{\pi_Q}.
\]
Also $V_D^*\le V^*$ because the restricted policy class is a subset of the unrestricted class. Hence, for every state,
\[
0\le V^*(s)-V_D^*(s)
\le V^*(s)-V^{\pi_Q}(s)
\le\frac{\varepsilon_0}{1-\gamma}.
\]
This proves~\eqref{eq:pointwise_retention_bound}.
\end{proof}

We now replace oracle retention by proxy retention. The key step is to compare any proxy target action with an oracle maximizer through the pairwise fidelity condition.

\begin{proof}[Proof of Proposition~\ref{prop:proxy_gap_transfer}]
Fix a state $s$, an action
$a\in\widetilde{\mathcal A}_\varepsilon(s)$, and
$a^*(s)\in\argmax_bQ^*(s,b)$. Proxy near-optimality implies
\begin{align*}
\widetilde Q(s,a^*(s))-\widetilde Q(s,a)
&\le \max_b\widetilde Q(s,b)-\widetilde Q(s,a)\\
&\le\varepsilon.
\end{align*}
Apply Assumption~\ref{ass:proxy_gap} to the pair $(a^*(s),a)$:
\begin{align*}
V^*(s)-Q^*(s,a)
&=Q^*(s,a^*(s))-Q^*(s,a)\\
&\le\lambda[\widetilde Q(s,a^*(s))-\widetilde Q(s,a)]
+\delta_{\mathrm{prox}}\\
&\le\lambda\varepsilon+\delta_{\mathrm{prox}}.
\end{align*}
Thus $a\in\mathcal A^*_{\lambda\varepsilon+\delta_{\mathrm{prox}}}(s)$, proving~\eqref{eq:proxy_set_inclusion}. If a set $C(s)$ intersects the proxy target, an action in that intersection also belongs to the enlarged oracle target. This proves the stated hit-event implication under any state distribution.
\end{proof}

\begin{proof}[Proof of Proposition~\ref{prop:immediate_proxy}]
The optimal Bellman identity gives, for every state and action pair,
\begin{align*}
&Q^*(s,a)-Q^*(s,b)-[r(s,a)-r(s,b)]\\
&\quad=\gamma\sum_{s'}
[P(s'\mid s,a)-P(s'\mid s,b)]V^*(s').
\end{align*}
Taking absolute values and then the maximum proves the pairwise condition with $\lambda=1$ and $\delta_{\mathrm{prox}}=\delta_{\mathrm{dyn}}$.

For a learned response proxy $\widehat p$, add and subtract the immediate-reward difference:
\begin{align*}
&\left|Q^*(s,a)-Q^*(s,b)
-[\widehat p(s,a)-\widehat p(s,b)]\right|\\
&\quad\le
\left|Q^*(s,a)-Q^*(s,b)-[r(s,a)-r(s,b)]\right|\\
&\qquad+|r(s,a)-\widehat p(s,a)|
+|r(s,b)-\widehat p(s,b)|\\
&\quad\le\delta_{\mathrm{dyn}}+2\delta_{\mathrm{resp}}.
\end{align*}
This proves the stated extension.

By Lemma~\ref{lem:value_bounds}, the range of $V^*$ is at most $(1-\gamma)^{-1}$. For any two probability distributions $P_1,P_2$ and any function $f$ with range at most $B$,
\[
|\mathbb E_{P_1}f-\mathbb E_{P_2}f|
\le B\,\TV(P_1,P_2).
\]
Apply this inequality with $f=V^*$ and $B=(1-\gamma)^{-1}$, then maximize over $(s,a,b)$ and multiply by $\gamma$ to obtain~\eqref{eq:continuation_tv_bound}.
\end{proof}

Proxy fidelity controls the filtering term. The next consequence controls the selection term by transferring an approximate critic maximization guarantee to the true optimal action values.

\begin{corollary}[Approximate within-set selection]
\label{cor:selector_transfer}
Assume $D(s)\ne\emptyset$ for every state and let $\pi\in\Pi(D)$. Suppose $\|\widehat Q-Q^*\|_\infty\le\delta_{\mathrm{crit}}$ and the within-set condition~\eqref{eq:estimated_selection_error} holds. Define
$\overline\delta_{\mathrm{sel}}=\mathbb E_{s\sim d^\pi}[\delta_{\mathrm{sel}}(s)]$.
Then
\begin{equation}
\zeta_{\pi,D}(s)\le2\delta_{\mathrm{crit}}+\delta_{\mathrm{sel}}(s).
\label{eq:true_selection_error_bound}
\end{equation}
If~\eqref{eq:generic_coverage_assumption} also holds, then
\begin{equation}
V^*(\mu_0)-V^\pi(\mu_0)
\le\frac{(1-\alpha)\varepsilon_0+\alpha\Delta_{\max}
+2\delta_{\mathrm{crit}}+\overline\delta_{\mathrm{sel}}}{1-\gamma}.
\label{eq:critic_selector_value_bound}
\end{equation}
\end{corollary}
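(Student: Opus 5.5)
The pointwise bound \eqref{eq:true_selection_error_bound} comes from comparing $Q^*$ and $\widehat Q$ on $D(s)$ in two places: once at the true maximizer inside the set, and once inside the policy average. Fix a state $s$ and choose $a^\dagger(s)\in\argmax_{a\in D(s)}Q^*(s,a)$, which exists because $D(s)$ is finite and nonempty. The sup-norm critic bound gives
\[
\max_{a\in D(s)}Q^*(s,a)=Q^*(s,a^\dagger(s))\le\widehat Q(s,a^\dagger(s))+\delta_{\mathrm{crit}}\le\max_{a\in D(s)}\widehat Q(s,a)+\delta_{\mathrm{crit}}.
\]
Since $\pi(\cdot\mid s)$ is a probability distribution supported on $D(s)$, averaging the pointwise inequality $Q^*(s,a)\ge\widehat Q(s,a)-\delta_{\mathrm{crit}}$ gives
\[
\sum_{a\in D(s)}\pi(a\mid s)Q^*(s,a)\ge\sum_{a\in D(s)}\pi(a\mid s)\widehat Q(s,a)-\delta_{\mathrm{crit}}.
\]
Subtracting the second display from the first, the remaining term $\max_{a\in D(s)}\widehat Q(s,a)-\sum_{a\in D(s)}\pi(a\mid s)\widehat Q(s,a)$ is at most $\delta_{\mathrm{sel}}(s)$ by \eqref{eq:estimated_selection_error}. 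This yields $\zeta_{\pi,D}(s)\le2\delta_{\mathrm{crit}}+\delta_{\mathrm{sel}}(s)$.

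For \eqref{eq:critic_selector_value_bound}, I will start from the exact identity of Theorem~\ref{thm:value_decomposition}:
\[
V^*(\mu_0)-V^\pi(\mu_0)=\frac{1}{1-\gamma}\Bigl(\mathbb E_{d^\pi}[\Delta_D]+\mathbb E_{d^\pi}[\zeta_{\pi,D}]\Bigr).
\]
The filtering term is controlled exactly as in the proof of Corollary~\ref{cor:coverage_to_value}. Let $H$ be the hit set. On $H$ we have $\Delta_D\le\varepsilon_0$, using $C\subseteq D$; off $H$ we have $\Delta_D\le\Delta_{\max}$. Because the miss probability is at most $\alpha$ and $\Delta_{\max}\ge\varepsilon_0$, this gives
\[
\mathbb E_{d^\pi}[\Delta_D]\le(1-\alpha)\varepsilon_0+\alpha\Delta_{\max}.
\]
Equivalently, Corollary~\ref{cor:coverage_to_value} can be invoked directly, leaving $\overline\zeta_{\pi,D}$ to be bounded. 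Integrating the pointwise selection bound against $d^\pi$ gives
\[
\overline\zeta_{\pi,D}\le2\delta_{\mathrm{crit}}+\overline\delta_{\mathrm{sel}}.
\]
Substituting both bounds into the identity proves the claim.

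There is no real obstacle here. The one point that needs care is that the selection bound must be applied to the maximizer of $Q^*$ within $D(s)$, not over all of $\mathcal A$. Otherwise the $\widehat Q$ maximum on the right-hand side would not match the within-set quantity that \eqref{eq:estimated_selection_error} controls. A minor integrability check is also needed: $\overline\delta_{\mathrm{sel}}$ is a finite sum over the finite state space, so it is well defined.
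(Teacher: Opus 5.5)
Your proposal is correct and follows essentially the same route as the paper: bound $\max_{a\in D(s)}Q^*$ from above and the $\pi$-average of $Q^*$ from below using the sup-norm critic error, then invoke \eqref{eq:estimated_selection_error}. The value bound then follows by averaging under $d^\pi$ and substituting into Corollary~\ref{cor:coverage_to_value}, which rests on Theorem~\ref{thm:value_decomposition}.
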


\begin{proof}[Proof of Corollary~\ref{cor:selector_transfer}]
Fix a state $s$ and abbreviate $D_s=D(s)$.
The uniform critic error implies, for every $a$,
\[
Q^*(s,a)\le\widehat Q(s,a)+\delta_{\mathrm{crit}}
\]
and
\[
Q^*(s,a)\ge\widehat Q(s,a)-\delta_{\mathrm{crit}}.
\]
Taking a maximum in the first inequality gives
\[
\max_{a\in D_s}Q^*(s,a)
\le
\max_{a\in D_s}\widehat Q(s,a)
+\delta_{\mathrm{crit}}.
\]
Averaging the second inequality under
$\pi(\cdot\mid s)$ gives
\begin{align*}
\sum_{a\in D_s}\pi(a\mid s)Q^*(s,a)
&\ge
\sum_{a\in D_s}\pi(a\mid s)
[\widehat Q(s,a)-\delta_{\mathrm{crit}}]\\
&=
\sum_{a\in D_s}\pi(a\mid s)\widehat Q(s,a)
-\delta_{\mathrm{crit}},
\end{align*}
because $\sum_{a\in D_s}\pi(a\mid s)=1$.
Subtracting the second bound from the first yields
\begin{align*}
\zeta_{\pi,D}(s)
&=
\max_{a\in D_s}Q^*(s,a)
-\sum_{a\in D_s}\pi(a\mid s)Q^*(s,a)\\
&\le
\max_{a\in D_s}\widehat Q(s,a)
-\sum_{a\in D_s}\pi(a\mid s)\widehat Q(s,a)
+2\delta_{\mathrm{crit}}\\
&\le
\delta_{\mathrm{sel}}(s)
+2\delta_{\mathrm{crit}}.
\end{align*}
This proves~\eqref{eq:true_selection_error_bound}.

Average the statewise inequality under $d^\pi$:
\begin{align*}
\overline\zeta_{\pi,D}
&=
\mathbb E_{s\sim d^\pi}[\zeta_{\pi,D}(s)]\\
&\le
2\delta_{\mathrm{crit}}
+\mathbb E_{s\sim d^\pi}[\delta_{\mathrm{sel}}(s)]\\
&=
2\delta_{\mathrm{crit}}
+\overline\delta_{\mathrm{sel}}.
\end{align*}
Substituting this bound into
Corollary~\ref{cor:coverage_to_value} proves
\eqref{eq:critic_selector_value_bound}.
\end{proof}

The combined bound now follows by applying these two transfers in sequence: proxy hits give an enlarged oracle target, and critic and selector errors bound the remaining selection loss.

\begin{proof}[Proof of Corollary~\ref{cor:combined_approximation}]
Proposition~\ref{prop:proxy_gap_transfer} gives, for every state,
\[
\widetilde{\mathcal A}_\varepsilon(s)
\subseteq
\mathcal A^*_{\lambda\varepsilon+\delta_{\mathrm{prox}}}(s).
\]
Therefore the proxy coverage assumption
\eqref{eq:proxy_coverage_assumption} implies
\[
\Pr_{s\sim d^\pi}
\left(
C(s)\cap
\mathcal A^*_{\lambda\varepsilon+\delta_{\mathrm{prox}}}(s)
\ne\emptyset
\right)
\ge1-\alpha.
\]
Apply Corollary~\ref{cor:coverage_to_value} with the tolerance
\[
\varepsilon'
=\lambda\varepsilon+\delta_{\mathrm{prox}}.
\]
The truncated hit-state loss in that corollary is
\[
\min\{\varepsilon',\Delta_{\max}\}
=\varepsilon_{\mathrm{rec}}.
\]
Thus
\[
V^*(\mu_0)-V^\pi(\mu_0)
\le
\frac{
(1-\alpha)\varepsilon_{\mathrm{rec}}
+\alpha\Delta_{\max}
+\overline\zeta_{\pi,D}
}{1-\gamma}.
\]
Corollary~\ref{cor:selector_transfer} gives
\[
\overline\zeta_{\pi,D}
\le
2\delta_{\mathrm{crit}}
+\overline\delta_{\mathrm{sel}}.
\]
Substitution proves~\eqref{eq:combined_approximation_bound}.
\end{proof}

For patience, we use the same retention and selection distinction at a single interaction. The collection response is controlled by event inclusion, whereas the response of a selected item also includes the choice error.

\begin{proof}[Proof of Proposition~\ref{prop:patience_loss}]
Let
\[
H_p
=\left\{s:D(s)\text{ contains an action }a
\text{ with }p(s,a)\ge p^*(s)-\varepsilon\right\}.
\]
Let $\alpha_p=\Pr_{s\sim\nu}(s\notin H_p)\le\alpha$.

For every action $a\in D(s)$, the event $\{Y_a=1\}$ is contained in the union defining $p_D^{\mathrm{set}}(s)$. Hence
\[
p_D^{\mathrm{set}}(s)\ge\max_{a\in D(s)}p(s,a)
\]
without any independence assumption. On $H_p$, this gives
\[
p^*(s)-p_D^{\mathrm{set}}(s)\le\varepsilon_p.
\]
Outside $H_p$, the same difference is at most one. Therefore
\begin{align*}
\mathbb E_\nu[1-p_D^{\mathrm{set}}(s)]
&=\mathbb E_\nu[1-p^*(s)]
+\mathbb E_\nu[p^*(s)-p_D^{\mathrm{set}}(s)]\\
&\le\mathbb E_\nu[1-p^*(s)]
+(1-\alpha_p)\varepsilon_p+\alpha_p\\
&\le\mathbb E_\nu[1-p^*(s)]
+(1-\alpha)\varepsilon_p+\alpha,
\end{align*}
where the last inequality uses $\varepsilon_p\le1$.

For the pre-response item selector,
\begin{align*}
p^*(s)-p_{D,\kappa}^{\mathrm{item}}(s)
&=p^*(s)-\max_{a\in D(s)}p(s,a)
+\delta_{\mathrm{sel}}^p(s).
\end{align*}
The first term is at most $\varepsilon_p$ on $H_p$ and at most one outside $H_p$. Taking expectations gives
\[
\mathbb E_\nu[p^*(s)-p_{D,\kappa}^{\mathrm{item}}(s)]
\le
(1-\alpha)\varepsilon_p+\alpha
+\overline\delta_{\mathrm{sel}}^p,
\]
which proves~\eqref{eq:selected_no_response_bound}.

For either mechanism, denote its positive-response probability by $p_{\mathrm{mech}}(s)$. Its expected patience decrease is
\[
c_+p_{\mathrm{mech}}(s)+c_-[1-p_{\mathrm{mech}}(s)]
=c_++(c_--c_+)[1-p_{\mathrm{mech}}(s)].
\]
The best single item replaces $p_{\mathrm{mech}}(s)$ by $p^*(s)$. Subtracting and applying the corresponding no-response bound proves~\eqref{eq:patience_decrement_bound} and~\eqref{eq:selected_patience_decrement_bound}.
\end{proof}

\begin{proof}[Proof of Theorem~\ref{thm:finite_session}]
The proof combines a finite horizon performance identity with separate bounds for filtering and selection. Unlike the stationary argument, it conditions on the information available at each stage, so the policy and critic may change during the session.

By the finite-horizon Bellman equation and the controlled Markov property,
\begin{align*}
Q_h^*(s_h,a_h)
&=r_h(s_h,a_h)
+\mathbb E\!\left[
V_{h+1}^*(s_{h+1})
\mid\mathcal H_h,a_h
\right],\\
\mathbb E[R_h\mid\mathcal H_h,a_h]
&=r_h(s_h,a_h).
\end{align*}
The tower property therefore gives
\[
\mathbb E\!\left[
V_h^*(s_h)-Q_h^*(s_h,a_h)
\right]
=
\mathbb E\!\left[
V_h^*(s_h)-R_h-V_{h+1}^*(s_{h+1})
\right].
\]
Summing from $h=1$ to $H$, using $s_1\sim\mu_0$ and $V_{H+1}^*=0$, yields
\begin{equation}
V_1^*(\mu_0)-V_1^{\mathrm{RLCP}}(\mu_0)
=\sum_{h=1}^H
\mathbb E\!\left[
V_h^*(s_h)-Q_h^*(s_h,a_h)
\right].
\label{eq:episodic_performance_identity}
\end{equation}

For each stage, split the optimal action gap into the loss from the execution set and the loss from the selected action:
\begin{align*}
\Delta_h
&=V_h^*(s_h)-\max_{a\in D_h}Q_h^*(s_h,a),\\
\zeta_h
&=\max_{a\in D_h}Q_h^*(s_h,a)-Q_h^*(s_h,a_h).
\end{align*}
Because $D_h$ is nonempty and $a_h\in D_h$ almost surely, both gaps are nonnegative, $\Delta_h\le\Delta_H^{\max}$, and
\[
V_h^*(s_h)-Q_h^*(s_h,a_h)=\Delta_h+\zeta_h.
\]

Suppose first that $e_h^K=0$. Choose
$b_h\in C_h^K\cap\widetilde{\mathcal A}_{\varepsilon,h}(s_h)$ and
$a_h^*\in\argmax_{a\in\mathcal A}Q_h^*(s_h,a)$. Membership of $b_h$ in the proxy target implies
\[
\widetilde Q_h(s_h,a_h^*)-
\widetilde Q_h(s_h,b_h)
\le
\max_{a\in\mathcal A}\widetilde Q_h(s_h,a)
-\widetilde Q_h(s_h,b_h)
\le\varepsilon.
\]
Applying~\eqref{eq:finite_session_proxy_fidelity} to $(a_h^*,b_h)$ gives
\begin{align*}
V_h^*(s_h)-Q_h^*(s_h,b_h)
&=Q_h^*(s_h,a_h^*)-Q_h^*(s_h,b_h)\\
&\le\lambda\bigl(
\widetilde Q_h(s_h,a_h^*)-
\widetilde Q_h(s_h,b_h)
\bigr)+\delta_{\mathrm{prox}}\\
&\le\lambda\varepsilon+\delta_{\mathrm{prox}}.
\end{align*}
Since $b_h\in C_h^K\subseteq D_h$, this proves $\Delta_h\le\varepsilon_H$ on $\{e_h^K=0\}$. On $\{e_h^K=1\}$, the universal bound $\Delta_h\le\Delta_H^{\max}$ applies. Hence, pathwise,
\begin{equation}
\Delta_h
\le\varepsilon_H
+(\Delta_H^{\max}-\varepsilon_H)e_h^K.
\label{eq:finite_session_pointwise_filtering}
\end{equation}

By definition, a capped miss is either a raw miss or a raw hit removed by the cap, and these events are disjoint. Thus
\[
e_h^K=\widetilde e_h+h_h^K.
\]
The recursion~\eqref{eq:finite_session_threshold_update} has exactly the form required by Theorem~\ref{thm:pathwise}. Since the step sizes are positive and nonincreasing and $\widetilde W_H\le W$, that theorem gives, almost surely,
\[
\frac1H\sum_{h=1}^H\widetilde e_h
\le\alpha+\frac{W+\eta_1}{\eta_HH}.
\]
Consequently,
\[
\frac1H\sum_{h=1}^He_h^K
\le\alpha+\frac{W+\eta_1}{\eta_HH}
+\frac1H\sum_{h=1}^Hh_h^K.
\]
Average~\eqref{eq:finite_session_pointwise_filtering} over stages and then take expectations. This yields
\begin{equation}
\begin{aligned}
\frac1H\sum_{h=1}^H\mathbb E[\Delta_h]
\le{}&(1-\alpha)\varepsilon_H
+\alpha\Delta_H^{\max}\\
&+(\Delta_H^{\max}-\varepsilon_H)
\left(
\frac{W+\eta_1}{\eta_HH}
+\overline h_H^K
\right).
\end{aligned}
\label{eq:finite_session_filtering_bound}
\end{equation}

The remaining term is the gap between the best retained action and the action actually selected. Since $D_h$ and $\widehat Q_h$ are $\mathcal H_h$ measurable, the critic condition implies
\begin{align*}
\max_{a\in D_h}Q_h^*(s_h,a)
&\le\max_{a\in D_h}\widehat Q_h(s_h,a)
+\delta_{\mathrm{crit}},\\
\mathbb E\!\left[Q_h^*(s_h,a_h)\mid\mathcal H_h\right]
&\ge
\mathbb E\!\left[\widehat Q_h(s_h,a_h)\mid\mathcal H_h\right]
-\delta_{\mathrm{crit}}.
\end{align*}
Subtracting the second inequality from the first and using~\eqref{eq:finite_session_selector_condition} gives
\[
\mathbb E[\zeta_h\mid\mathcal H_h]
\le2\delta_{\mathrm{crit}}+\delta_{\mathrm{sel},h}.
\]
The assumed integrability permits taking expectations. Averaging over stages gives
\begin{equation}
\frac1H\sum_{h=1}^H\mathbb E[\zeta_h]
\le2\delta_{\mathrm{crit}}+\overline\delta_{\mathrm{sel},H}.
\label{eq:finite_session_selection_bound}
\end{equation}
Finally, the filtering and selection bounds control the two components of each gap in~\eqref{eq:episodic_performance_identity}. Substituting~\eqref{eq:finite_session_filtering_bound} and~\eqref{eq:finite_session_selection_bound}, using $\Delta_h+\zeta_h$, and multiplying by $H$ gives~\eqref{eq:finite_session_value_bound}. No step requires the policy, critic, or threshold to converge; the argument uses their stated stagewise information and error conditions.
\end{proof}

\subsection{Proof of Greedy Preservation and Conditional Recovery}
We first show that thresholding, capping, and fallback all preserve the same critic maximizer. We then use that preservation to express policy loss through pairwise critic error and the remaining selection gap.

\begin{proof}[Proof of Proposition~\ref{prop:greedy_preservation}]
Fix $t$ and $s$, and let $\widehat a_t(s)$ be the first maximizer of $\widehat Q_t(s,\cdot)$ under the fixed action order. Every score $g_t(s,a)$ is nonnegative, and $g_t(s,\widehat a_t(s))=0$. If $\tau_t<0$, the raw set is empty and the minimum score fallback is exactly $\widehat a_t(s)$. If $\tau_t\ge0$, the raw set contains every critic maximizer. Its first action in increasing score order is $\widehat a_t(s)$, so every cap with $K_t\ge1$ retains this action. In both cases,
\[
\widehat a_t(s)\in D_t(s),
\qquad
\max_{a\in D_t(s)}\widehat Q_t(s,a)
=\max_a\widehat Q_t(s,a).
\]
Thus the execution set preserves the global critic maximizer selected by the fixed action order. A greedy selector using that order consequently returns the same item with or without filtering.

Choose any $a^*(s)\in\argmax_aQ^*(s,a)$. Substituting $Q^*=\widehat Q_t-\xi_t$ into the optimal Bellman residual gives
\begin{align*}
b_{\pi_t}(s)
&=Q^*(s,a^*(s))-\sum_a\pi_t(a\mid s)Q^*(s,a)\\
&=\widehat Q_t(s,a^*(s))
-\sum_a\pi_t(a\mid s)\widehat Q_t(s,a)\\
&\quad+\sum_a\pi_t(a\mid s)
\bigl[\xi_t(s,a)-\xi_t(s,a^*(s))\bigr].
\end{align*}
For the first difference, replace $\widehat Q_t(s,a^*(s))$ by its maximum over all actions, which equals its maximum over $D_t(s)$ by the preceding argument. This upper bounds that difference by $\sigma_t(s)$. Every bracket in the final sum is at most $\beta_t$, and the policy weights are nonnegative and sum to one. Consequently,
\[
0\le b_{\pi_t}(s)\le\sigma_t(s)+\beta_t.
\]
The residual identity in the proof of Theorem~\ref{thm:value_decomposition} now gives
\begin{align*}
V^*(\mu_0)-V^{\pi_t}(\mu_0)
&=\frac1{1-\gamma}
\mathbb E_{s\sim d^{\pi_t}}[b_{\pi_t}(s)]\\
&\le\frac{\overline\sigma_t+\beta_t}{1-\gamma}.
\end{align*}
For every state and action pair, the triangle inequality gives
\[
|\xi_t(s,a)-\xi_t(s,b)|
\le|\xi_t(s,a)|+|\xi_t(s,b)|
\le2\|\widehat Q_t-Q^*\|_\infty.
\]
Maximizing shows that $\beta_t\le2\|\widehat Q_t-Q^*\|_\infty$.

Suppose a strictly suboptimal state and action pair exists. Finiteness makes the minimum of its positive gaps strictly positive, so $\Delta_{\min}>0$ is well defined. Let $\widehat a\in\argmax_a\widehat Q_t(s,a)$. Then
\begin{align*}
V^*(s)-Q^*(s,\widehat a)
&=\widehat Q_t(s,a^*(s))-\widehat Q_t(s,\widehat a)\\
&\quad+\xi_t(s,\widehat a)-\xi_t(s,a^*(s))\\
&\le\beta_t,
\end{align*}
because the first difference is nonpositive. If $\widehat a$ were suboptimal and $\beta_t<\Delta_{\min}$, this would imply
\[
\Delta_{\min}
\le V^*(s)-Q^*(s,\widehat a)
\le\beta_t<\Delta_{\min},
\]
a contradiction. This proves ranking consistency for every state. Since $D_t(s)$ contains a global critic maximizer, every critic maximizer within $D_t(s)$ is also a global critic maximizer. A policy supported on these actions is therefore supported on the true optimal actions. Lemma~\ref{lem:bellman} proves its optimality. If no strictly suboptimal pair exists, every policy is supported on optimal actions, and the same lemma applies directly.

Finally, the quantitative value convergence statement follows by taking limits in the value bound above. For exact greedy selection, $\beta_t\to0$ implies $\beta_t<\Delta_{\min}$ at every sufficiently large index when a positive gap exists. At those indices the policy is optimal. A unique optimal action at each state forces the action distribution to be the same point mass at all such indices. If several actions are optimal, support on those actions imposes no requirement that the distribution converge. None of these steps uses convergence of a parameter vector or of the threshold.
\end{proof}

\section{Experiment Details}

This appendix describes the comparison methods and the reported metrics. The metric definitions distinguish utility, catalog exposure, and retained set size so that the numerical comparisons can be interpreted separately from the calibration guarantee.

\subsection{Baseline Details}\label{app:baseline}

\begin{itemize}
\item \textbf{A2C}: an advantage actor--critic method that updates the policy using advantage-weighted policy gradients and trains the critic to estimate the state-value function~\citep{mnih2016asynchronous}.

\item \textbf{DDPG}: an off-policy deterministic actor--critic algorithm designed for continuous action spaces, which is applied to continuous action representations for slate recommendation~\citep{lillicrap2016continuous}.

\item \textbf{TD3}: an extension of DDPG that mitigates value overestimation and improves training stability through clipped double Q-learning, delayed policy updates, and target policy smoothing~\citep{fujimoto2018addressing}.

\item \textbf{HAC}: a hyper-actor--critic method that represents recommendation slates in a latent action space and decomposes slate generation into hyper-action inference and effect-action selection~\citep{liu2023exploration}.

\end{itemize}

\subsection{Evaluation Metrics}
\label{app:evaluation_metrics}

The metrics below describe complete simulated sessions. Let $N$ be the number of evaluated sessions, $T_n$ the length of session $n$, and $R_{n,t}$ its reward at request $t$. Let $L_{n,t}$ denote the item collection used by the evaluation code at that request. A baseline with fixed slate size has $|L_{n,t}|=M$, while the reported RLCP set size uses its retained execution collection. Catalog diversity and ILD require the evaluated collection to be defined consistently across methods.

\paragraph{Depth.}
We first measure how long the interaction continues. Depth is the mean number of recommendation steps before departure:
\[
\mathrm{Depth}
=
\frac{1}{N}\sum_{n=1}^{N} T_n.
\]
A larger value means that the simulated sessions contain more interactions; it does not specify how much reward those interactions produce.

\paragraph{Average Reward.}
To measure the reward per interaction while giving equal weight to each session, we first average within sessions and then across sessions:
\[
\mathrm{AvgReward}
=
\frac{1}{N}\sum_{n=1}^{N}
\frac{1}{T_n}
\sum_{t=1}^{T_n} R_{n,t}.
\]

\paragraph{Total Reward.}
In contrast, total reward records the cumulative reward of a session before averaging across sessions:
\[
\mathrm{TotalReward}
=
\frac{1}{N}\sum_{n=1}^{N}
\sum_{t=1}^{T_n} R_{n,t}.
\]

\paragraph{Diversity.}
To measure exposure across users, catalog diversity counts distinct items in the $B$ active requests of an evaluation batch and averages this count over $H_{\mathrm{eval}}$ evaluation steps, following KuaiSim:
\[
\mathrm{Diversity}
=
\frac{1}{H_{\mathrm{eval}}}
\sum_{h=1}^{H_{\mathrm{eval}}}
\left|
\bigcup_{j=1}^{B}L_{j,h}
\right|.
\]
The tables call this metric \emph{Diversity}. It counts exposed items and is distinct from the proxy retention probability controlled by calibration.

\paragraph{Intra List Diversity.}
Catalog diversity does not describe how similar the items within a request are. ILD measures that separate property through mean pairwise dissimilarity within each evaluated collection. Write $M_{n,t}=|L_{n,t}|$ and let $e_a$ be the embedding of item $a$. Then
\begin{align*}
\mathrm{ILD}
&=
\frac{1}{\sum_{n=1}^{N} T_n}
\sum_{n=1}^{N}
\sum_{t=1}^{T_n}
\frac{2}{M_{n,t}(M_{n,t}-1)}
\nonumber\\
&\quad\times
\sum_{1 \leq j < k \leq M_{n,t}}
d\!\left(
  e_{(L_{n,t})_j},
  e_{(L_{n,t})_k}
\right).
\end{align*}
Here \(d(\cdot,\cdot)\) is the item dissimilarity function and \(M_{n,t}\) is the number of items in the evaluated collection at request \(t\) of session \(n\). For \(M_{n,t}=1\), the corresponding diversity term is defined as zero.

\paragraph{Admissible Set Size.}
Finally, the average admissible set size records how many items remain after filtering, averaged over evaluation requests:
\[
\mathrm{SetSize}
=
\frac{1}{\sum_{n=1}^{N} T_n}
\sum_{n=1}^{N}
\sum_{t=1}^{T_n}
|L_{n,t}|.
\]
A smaller value indicates that fewer alternatives are retained. It is not a measure of total computation, since it excludes scoring the candidate pool and obtaining the proxy label.

\section{Full Results}
\label{app:full_results}
The following tables report the complete measurements underlying Section~\ref{sec:experiments}. Each configuration fixes the dataset, reward signal, patience parameters, and baseline slate size $M$, with the matched cap $K=M$ for RLCP. Comparisons therefore concern the same maximum set size within a configuration. Boldface identifies the highest depth and catalog diversity and the lowest mean set size, including ties. These markings compare point estimates and do not imply statistical significance.

\begin{table*}[h]
\centering
\small
\setlength{\tabcolsep}{1.45pt}
\caption{KuaiRand-Pure results under patience setting \((p,q)=(1,2)\) with \texttt{is\_click} as the reward signal for displayed slate sizes \(M\in\{3,4,5\}\). The final seven columns report behavior rates; set size is reported for RLCP variants when available.}
\label{tab:kuairand_pq1_slate}
\begin{tabular}{l*{13}{r}}
\toprule
Method & \textbf{Depth} & \shortstack{Avg.\\reward} & \shortstack{Total\\reward} & \textbf{Diversity} & ILD & \shortstack{\textbf{Set}\\\textbf{size}} & Click & Comment & Follow & Forward & Hate & Like & \shortstack{Long\\view} \\
\midrule
\multicolumn{14}{l}{\textbf{Slate size $M=3$}} \\
 DDPG & 18.80 & 0.62 & 13.46 & 25.00 & 0.99 & 3.00 & 0.62 & 0.61 & 0.54 & 0.54 & 0.55 & 0.58 & 0.64 \\
TD3 & 19.20 & 0.65 & 13.24 & 54.50 & 0.99 & 3.00 & 0.65 & 0.62 & 0.52 & 0.54 & 0.52 & 0.58 & 0.63 \\
 A2C & \textbf{19.60} & 0.66 & 13.32 & 3.00 & 0.99 & 3.00 & 0.66 & 0.61 & 0.58 & 0.56 & 0.54 & 0.60 & 0.62 \\
 HAC & 19.10 & 0.67 & 13.47 & 4.70 & 0.98 & 3.00 & 0.67 & 0.62 & 0.57 & 0.55 & 0.54 & 0.56 & 0.68 \\
RLCP-Single & 19.30 & 0.64 & 13.20 & \textbf{92.40} & 0.99 & \textbf{2.91} & 0.64 & 0.57 & 0.55 & 0.55 & 0.52 & 0.59 & 0.65 \\
RLCP & 19.40 & 0.61 & 12.77 & 28.50 & 1.00 & 3.00 & 0.61 & 0.58 & 0.56 & 0.55 & 0.54 & 0.57 & 0.59 \\
\midrule
\multicolumn{14}{l}{\textbf{Slate size $M=4$}} \\
DDPG & 19.60 & 0.66 & 13.55 & 19.50 & 0.99 & \textbf{4.00} & 0.66 & 0.61 & 0.58 & 0.58 & 0.52 & 0.58 & 0.65 \\
 TD3 & 19.70 & 0.61 & 13.49 & 84.40 & 0.99 & \textbf{4.00} & 0.61 & 0.61 & 0.55 & 0.52 & 0.54 & 0.56 & 0.61 \\
 A2C & \textbf{19.90} & 0.67 & 13.35 & 15.50 & 0.99 & \textbf{4.00} & 0.67 & 0.65 & 0.59 & 0.57 & 0.55 & 0.63 & 0.66 \\
 HAC & 19.60 & 0.67 & 13.62 & 6.60 & 0.99 & \textbf{4.00} & 0.67 & 0.59 & 0.58 & 0.54 & 0.50 & 0.61 & 0.65 \\
RLCP-Single & \textbf{19.90} & 0.64 & 13.52 & \textbf{126.40} & 0.99 & \textbf{4.00} & 0.64 & 0.61 & 0.59 & 0.55 & 0.54 & 0.60 & 0.64 \\
 RLCP & \textbf{19.90} & 0.59 & 13.10 & 50.50 & 1.00 & \textbf{4.00} & 0.59 & 0.58 & 0.52 & 0.51 & 0.52 & 0.56 & 0.59 \\
\midrule
\multicolumn{14}{l}{\textbf{Slate size $M=5$}} \\
 DDPG & 19.90 & 0.68 & 13.67 & 17.50 & 0.99 & 5.00 & 0.68 & 0.60 & 0.56 & 0.56 & 0.54 & 0.57 & 0.65 \\
TD3 & 19.90 & 0.64 & 13.67 & 47.20 & 0.99 & 5.00 & 0.64 & 0.58 & 0.54 & 0.53 & 0.52 & 0.58 & 0.63 \\
A2C & \textbf{20.00} & 0.65 & 13.35 & 24.80 & 0.99 & 5.00 & 0.65 & 0.61 & 0.58 & 0.54 & 0.53 & 0.60 & 0.63 \\
HAC & \textbf{20.00} & 0.66 & 13.99 & 10.00 & 0.99 & 5.00 & 0.66 & 0.60 & 0.58 & 0.55 & 0.53 & 0.59 & 0.65 \\
RLCP-Single & \textbf{20.00} & 0.66 & 13.93 & \textbf{154.10} & 0.99 & \textbf{4.91} & 0.66 & 0.56 & 0.53 & 0.53 & 0.55 & 0.55 & 0.64 \\
RLCP & \textbf{20.00} & 0.63 & 13.03 & 36.40 & 1.00 & 5.00 & 0.63 & 0.56 & 0.56 & 0.53 & 0.54 & 0.56 & 0.63 \\
\bottomrule
\end{tabular}%
\end{table*}

\begin{table*}[t]
\centering
\small
\setlength{\tabcolsep}{1.85pt}
\caption{KuaiRand-Pure results under patience setting \((p,q)=(0.4,2)\) with \texttt{is\_click} as the reward signal for displayed slate sizes \(M\in\{4,5\}\). The final seven columns report behavior rates; set size is reported for RLCP variants when available.}
\label{tab:kuairand_pq04_click_slate}
\begin{tabular}{l*{13}{r}}
\toprule
Method & \textbf{Depth} & \shortstack{Avg.\\reward} & \shortstack{Total\\reward} & \textbf{Diversity} & ILD & \shortstack{\textbf{Set}\\\textbf{size}} & Click & Comment & Follow & Forward & Hate & Like & \shortstack{Long\\view} \\
\midrule
\multicolumn{14}{l}{\textbf{Slate size $M=4$}} \\
DDPG & 46.80 & 0.70 & 32.83 & 14.40 & 0.99 & 4.00 & 0.70 & 0.62 & 0.59 & 0.57 & 0.52 & 0.60 & 0.68 \\
TD3 & \textbf{48.00} & 0.69 & 33.46 & 25.90 & 0.99 & 4.00 & 0.69 & 0.64 & 0.59 & 0.60 & 0.53 & 0.61 & 0.70 \\
A2C & 46.00 & 0.64 & 31.35 & 4.10 & 0.98 & 4.00 & 0.64 & 0.60 & 0.57 & 0.54 & 0.55 & 0.57 & 0.63 \\
HAC & \textbf{48.00} & 0.68 & 32.95 & 7.10 & 0.99 & 4.00 & 0.68 & 0.64 & 0.58 & 0.57 & 0.53 & 0.61 & 0.69 \\
RLCP-Single & 47.60 & 0.68 & 32.92 & \textbf{118.10} & 0.99 & \textbf{3.84} & 0.68 & 0.62 & 0.57 & 0.54 & 0.55 & 0.60 & 0.65 \\
RLCP & 47.20 &  0.62  &  31.91  & 41.60 &  1.00  & 3.91 &  0.63  &  0.57  &  0.58  &  0.55  &  0.54  &  0.59  &  0.62 \\
\midrule
\multicolumn{14}{l}{\textbf{Slate size $M=5$}} \\
DDPG & \textbf{48.00} & 0.68 & 33.63 & 20.00 & 0.98 & 5.00 & 0.68 & 0.62 & 0.60 & 0.58 & 0.52 & 0.61 & 0.66 \\
TD3 & \textbf{48.00} & 0.68 & 33.74 & 37.90 & 0.99 & 5.00 & 0.68 & 0.64 & 0.58 & 0.57 & 0.54 & 0.60 & 0.69 \\
A2C & \textbf{48.00} & 0.63 & 32.49 & 5.00 & 0.98 & 5.00 & 0.63 & 0.57 & 0.57 & 0.54 & 0.53 & 0.58 & 0.64 \\
HAC & \textbf{48.00} & 0.59 & 31.45 & 5.40 & 0.98 & 5.00 & 0.59 & 0.57 & 0.55 & 0.51 & 0.53 & 0.57 & 0.60 \\
RLCP-Single & \textbf{48.00} & 0.69 & 33.51 & \textbf{149.90} & 0.99 & \textbf{4.94} & 0.69 & 0.64 & 0.58 & 0.58 & 0.56 & 0.60 & 0.69\\
RLCP & \textbf{48.00} & 0.61 & 31.35 & 77.00 & 1.00 & 5.00 & 0.61 & 0.60 & 0.55 & 0.55 & 0.54 & 0.60 & 0.58 \\
\bottomrule
\end{tabular}%
\end{table*}

\begin{table*}[t]
\centering
\small
\setlength{\tabcolsep}{1.85pt}
\caption{KuaiRand-Pure results under patience setting \((p,q)=(1,2)\) with \texttt{is\_like} as the reward signal for displayed slate sizes \(M\in\{4,5,6\}\). The final seven columns report behavior rates; set size is reported for RLCP variants when available.}
\label{tab:kuairand_pq1_like_slate}
\begin{tabular}{l*{13}{r}}
\toprule
Method & \textbf{Depth} & \shortstack{Avg.\\reward} & \shortstack{Total\\reward} & \textbf{Diversity} & ILD & \shortstack{\textbf{Set}\\\textbf{size}} & Click & Comment & Follow & Forward & Hate & Like & \shortstack{Long\\view} \\
\midrule
\multicolumn{14}{l}{\textbf{Slate size $M=4$}} \\
DDPG & 19.80 & 0.61 & 12.79 & 24.00 & 0.99 & 4.00 & 0.66 & 0.60 & 0.58 & 0.55 & 0.54 & 0.61 & 0.65 \\
TD3 & 19.80 & 0.60 & 12.69 & 51.80 & 0.99 & 4.00 & 0.62 & 0.60 & 0.59 & 0.57 & 0.56 & 0.60 & 0.62 \\
A2C & 19.70 & 0.60 & 12.83 & 4.00 & 0.98 & 4.00 & 0.62 & 0.60 & 0.59 & 0.55 & 0.53 & 0.60 & 0.59 \\
HAC & \textbf{20.00} & 0.58 & 12.84 & 8.60 & 0.99 & 4.00 & 0.64 & 0.59 & 0.56 & 0.57 & 0.53 & 0.58 & 0.66 \\
RLCP-Single & 19.80 & 0.61 & 12.85 & \textbf{119.70} & 0.99 & \textbf{3.91} & 0.63 & 0.59 & 0.57 & 0.57 & 0.54 & 0.61 & 0.63 \\
RLCP & 19.70 & 0.59 & 12.27 & 58.60 & 1.00 & 4.00 & 0.64 & 0.59 & 0.58 & 0.54 & 0.54 & 0.59 & 0.64 \\
\midrule
\multicolumn{14}{l}{\textbf{Slate size $M=5$}} \\
DDPG & \textbf{20.00} & 0.58 & 12.88 & 21.80 & 0.99 & 5.00 & 0.62 & 0.57 & 0.56 & 0.54 & 0.54 & 0.58 & 0.58 \\
TD3 & 19.90 & 0.58 & 13.18 & 72.30 & 0.99 & 5.00 & 0.61 & 0.59 & 0.54 & 0.55 & 0.52 & 0.58 & 0.63 \\
A2C & 19.80 & 0.59 & 12.87 & 5.00 & 0.98 & 5.00 & 0.62 & 0.60 & 0.57 & 0.54 & 0.55 & 0.59 & 0.62 \\
HAC & \textbf{20.00} & 0.57 & 13.06 & 10.20 & 0.99 & 5.00 & 0.65 & 0.59 & 0.58 & 0.56 & 0.54 & 0.57 & 0.63 \\
RLCP-Single & \textbf{20.00} & 0.58 & 12.83 & \textbf{149.60} & 0.99 & \textbf{4.78} & 0.64 & 0.60 & 0.57 & 0.53 & 0.52 & 0.58 & 0.65 \\
RLCP & \textbf{20.00} & 0.56 & 11.98 & 64.40 & 1.00 & 5.00 & 0.62 & 0.57 & 0.57 & 0.53 & 0.54 & 0.56 & 0.62 \\
\midrule
\multicolumn{14}{l}{\textbf{Slate size $M=6$}} \\
DDPG & \textbf{20.00} & 0.57 & 12.62 & 25.50 & 0.99 & 6.00 & 0.59 & 0.59 & 0.55 & 0.55 & 0.53 & 0.57 & 0.57 \\
TD3 & \textbf{20.00} & 0.59 & 12.51 & 60.70 & 0.99 & 6.00 & 0.65 & 0.61 & 0.55 & 0.55 & 0.55 & 0.59 & 0.63 \\
A2C & \textbf{20.00} & 0.62 & 12.74 & 6.00 & 0.99 & 6.00 & 0.66 & 0.62 & 0.58 & 0.54 & 0.53 & 0.62 & 0.64 \\
HAC & \textbf{20.00} & 0.62 & 12.76 & 9.10 & 0.99 & 6.00 & 0.63 & 0.59 & 0.57 & 0.56 & 0.55 & 0.62 & 0.61 \\
RLCP-Single & 19.90 &  0.60 & 12.83 & \textbf{155.80} & 0.99 & \textbf{4.47} & 0.66 & 0.61 & 0.57 & 0.55 & 0.54 & 0.60 & 0.64 \\
RLCP & \textbf{20.00} & 0.58 & 11.46 & 100.20 & 1.00 & 6.00 & 0.60 & 0.56 & 0.55 & 0.53 & 0.51 & 0.58 & 0.59\\
\bottomrule
\end{tabular}%
\end{table*}

\begin{table*}[t]
\centering
\small
\setlength{\tabcolsep}{1.85pt}
\caption{KuaiRand-Pure results under patience setting \((p,q)=(0.4,2)\) with \texttt{is\_like} as the reward signal for displayed slate sizes \(M\in\{5,6\}\). The final seven columns report behavior rates; set size is reported for RLCP variants when available.}
\label{tab:kuairand_pq04_like_slate}
\begin{tabular}{l*{13}{r}}
\toprule
Method & \textbf{Depth} & \shortstack{Avg.\\reward} & \shortstack{Total\\reward} & \textbf{Diversity} & ILD & \shortstack{\textbf{Set}\\\textbf{size}} & Click & Comment & Follow & Forward & Hate & Like & \shortstack{Long\\view} \\
\midrule
\multicolumn{14}{l}{\textbf{Slate size $M=5$}} \\
DDPG & 47.60 & 0.57 & 31.62 & 35.20 & 0.99 & 5.00 & 0.65 & 0.60 & 0.57 & 0.55 & 0.57 & 0.57 & 0.67 \\
TD3 & \textbf{48.00} & 0.60 & 31.93 & 53.90 & 0.99 & 5.00 & 0.69 & 0.64 & 0.57 & 0.58 & 0.56 & 0.60 & 0.71 \\
A2C & 45.60 & 0.60 & 30.79 & 5.00 & 0.98 & 5.00 & 0.64 & 0.62 & 0.58 & 0.56 & 0.55 & 0.60 & 0.66 \\
HAC & 47.20 & 0.61 & 29.78 & 5.10 & 0.98 & 5.00 & 0.63 & 0.59 & 0.59 & 0.52 & 0.54 & 0.61 & 0.63 \\
RLCP-Single & \textbf{48.00} & 0.62 & 31.09 & \textbf{151.70} & 0.99 & \textbf{4.84} & 0.70 & 0.63 & 0.61 & 0.57 & 0.55 & 0.62 & 0.70 \\
RLCP & 47.70 & 0.56 & 29.20 & 72.20 & 1.00 & 4.97 & 0.60 & 0.58 & 0.56 & 0.53 & 0.53 & 0.57 & 0.58 \\
\midrule
\multicolumn{14}{l}{\textbf{Slate size $M=6$}} \\
DDPG & \textbf{48.00} & 0.61 & 30.86 & 58.30 & 0.99 & 6.00 & 0.67 & 0.62 & 0.59 & 0.56 & 0.52 & 0.61 & 0.67 \\
TD3 & 46.80 & 0.60 & 31.75 & 124.70 & 0.99 & 6.00 & 0.71 & 0.64 & 0.60 & 0.58 & 0.52 & 0.60 & 0.70 \\
A2C & \textbf{48.00} & 0.63 & 30.11 & 6.00 & 0.98 & 6.00 & 0.70 & 0.63 & 0.59 & 0.58 & 0.56 & 0.63 & 0.69 \\
HAC & 47.90 & 0.59 & 30.54 & 7.50 & 0.98 & 6.00 & 0.65 & 0.60 & 0.58 & 0.53 & 0.52 & 0.59 & 0.64 \\
RLCP-Single & \textbf{48.00} & 0.62 & 31.22 & \textbf{184.20} & 0.99 & \textbf{5.72} & 0.70 & 0.65 & 0.61 & 0.58 & 0.53 & 0.62 & 0.69 \\
RLCP & \textbf{48.00} & 0.59 & 28.52 & 85.50 & 1.00 & 6.00 & 0.61 & 0.55 & 0.55 & 0.54 & 0.53 & 0.60 & 0.57 \\
\bottomrule
\end{tabular}%
\end{table*}

\begin{table*}[t]
\centering
\small
\setlength{\tabcolsep}{3pt}
\caption{ML-1M results under patience setting \((p,q)=(1,2)\) with \texttt{is\_click} as the reward signal for displayed slate sizes \(M\in\{4,5\}\). The final three columns report behavior rates; set size is reported for RLCP variants when available.}
\label{tab:ml1m_pq1_slate}
\begin{tabular}{l*{9}{r}}
\toprule
Method & \textbf{Depth} & \shortstack{Avg.\\reward} & \shortstack{Total\\reward} & \textbf{Diversity} & ILD & \shortstack{\textbf{Set}\\\textbf{size}} & Click & Like & Star \\
\midrule
\multicolumn{10}{l}{\textbf{Slate size $M=4$}} \\
DDPG & 19.90 & 0.69 & 13.95 & 23.90 & 0.99 & \textbf{4.00} & 0.69 & 0.69 & 0.70 \\
TD3 & 19.90 & 0.73 & 14.90 & 15.90 & 0.99 & \textbf{4.00} & 0.73 & 0.72 & 0.70 \\
A2C & 19.20 & 0.58 & 11.97 & 46.30 & 0.99 & \textbf{4.00} & 0.58 & 0.61 & 0.62 \\
HAC & \textbf{20.00} & 0.68 & 14.30 & 4.80 & 0.99 & \textbf{4.00} & 0.68 & 0.66 & 0.66 \\
RLCP-Single & 19.80 & 0.58 & 11.85 & \textbf{119.40} & 0.99 & \textbf{4.00} & 0.58 & 0.58 & 0.58 \\
RLCP & \textbf{20.00} & 0.61 & 12.85 & 41.80 & 1.00 & \textbf{4.00} & 0.61 & 0.57 & 0.55\\
\midrule
\multicolumn{10}{l}{\textbf{Slate size $M=5$}} \\
DDPG & 19.90 & 0.69 & 13.82 & 35.20 & 0.99 & 5.00 & 0.69 & 0.68 & 0.69 \\
TD3 & \textbf{20.00} & 0.73 & 14.87 & 16.30 & 0.99 & 5.00 & 0.73 & 0.72 & 0.75 \\
A2C & 19.90 & 0.58 & 11.86 & 49.50 & 0.99 & 5.00 & 0.58 & 0.61 & 0.61 \\
HAC & \textbf{20.00} & 0.63 & 12.80 & 6.70 & 0.99 & 5.00 & 0.63 & 0.61 & 0.63 \\
RLCP-Single & \textbf{20.00} & 0.57 & 12.41 & 17.30 & 0.99 & 5.00 & 0.57 & 0.61 & 0.62 \\
RLCP & 19.80 & 0.60 & 13.71 & \textbf{54.90} & 1.00 & \textbf{4.53} & 0.64 & 0.59 & 0.57 \\
\bottomrule
\end{tabular}%
\end{table*}

\begin{table*}[t]
\centering
\small
\setlength{\tabcolsep}{3pt}
\caption{ML-1M results under patience setting \((p,q)=(0.4,2)\) with \texttt{is\_click} as the reward signal for displayed slate sizes \(M\in\{4,5\}\). The final three columns report behavior rates; set size is reported for RLCP variants when available.}
\label{tab:ml1m_eval_report_slate}
\begin{tabular}{l*{9}{r}}
\toprule
Method & \textbf{Depth} & \shortstack{Avg.\\reward} & \shortstack{Total\\reward} & \textbf{Diversity} & ILD & \shortstack{\textbf{Set}\\\textbf{size}} & Click & Like & Star \\
\midrule
\multicolumn{10}{l}{\textbf{Slate size $M=4$}} \\
DDPG & \textbf{46.80} & 0.69 & 33.76 & 12.90 & 0.99 & 4.00 & 0.69 & 0.71 & 0.73 \\
TD3 & \textbf{46.80} & 0.71 & 35.13 & 14.30 & 0.99 & 4.00 & 0.71 & 0.71 & 0.73 \\
A2C & 44.00 & 0.59 & 26.76 & 31.20 & 0.99 & 4.00 & 0.59 & 0.64 & 0.64 \\
HAC & 45.20 & 0.61 & 29.02 & 9.70 & 0.99 & 4.00 & 0.61 & 0.64 & 0.65 \\
RLCP-Single & 45.60 & 0.57 & 26.56 & \textbf{119.30} & 0.99 & 3.97 & 0.57 & 0.59 & 0.63\\
RLCP & \textbf{46.80} & 0.57 & 27.95 & 62.00 & 1.00 & \textbf{3.78} & 0.59 & 0.60 & 0.60 \\
\midrule
\multicolumn{10}{l}{\textbf{Slate size $M=5$}} \\
DDPG & 47.20 & 0.69 & 34.00 & 15.80 & 0.99 & 5.00 & 0.69 & 0.69 & 0.72 \\
TD3 & \textbf{48.00} & 0.72 & 34.98 & 18.20 & 0.99 & 5.00 & 0.72 & 0.71 & 0.71 \\
A2C & \textbf{48.00} & 0.59 & 28.88 & 7.80 & 0.99 & 5.00 & 0.59 & 0.66 & 0.68 \\
HAC & \textbf{48.00} & 0.65 & 32.25 & 5.30 & 0.99 & 5.00 & 0.65 & 0.67 & 0.70 \\
RLCP-Single & 47.60 & 0.59 & 28.12 & \textbf{94.80} & 0.99 & \textbf{4.97} & 0.59 & 0.62 & 0.63 \\
RLCP & \textbf{48.00} & 0.63 & 30.64 & 45.10 & 1.00 & \textbf{4.97} & 0.63 & 0.63 & 0.62 \\
\bottomrule
\end{tabular}%
\end{table*}

\begin{table*}[t]
\centering
\small
\setlength{\tabcolsep}{3pt}
\caption{ML-1M results under patience setting \((p,q)=(1,2)\) with \texttt{is\_like} as the reward signal for displayed slate sizes \(M\in\{4,5,6\}\). The final three columns report behavior rates; set size is reported for RLCP variants when available.}
\label{tab:ml1m_pq1_like_slate}
\begin{tabular}{l*{9}{r}}
\toprule
Method & \textbf{Depth} & \shortstack{Avg.\\reward} & \shortstack{Total\\reward} & \textbf{Diversity} & ILD & \shortstack{\textbf{Set}\\\textbf{size}} & Click & Like & Star \\
\midrule
\multicolumn{10}{l}{\textbf{Slate size $M=4$}} \\
DDPG & 19.80 & 0.65 & 13.21 & 32.80 & 0.99 & \textbf{4.00} & 0.63 & 0.65 & 0.67 \\
TD3 & 19.80 & 0.66 & 13.71 & 20.80 & 0.99 & \textbf{4.00} & 0.65 & 0.66 & 0.66 \\
A2C & 19.60 & 0.60 & 12.83 & 4.10 & 0.99 & \textbf{4.00} & 0.57 & 0.60 & 0.60 \\
HAC & \textbf{19.90} & 0.64 & 13.31 & 13.90 & 0.99 & \textbf{4.00} & 0.67 & 0.64 & 0.63 \\
RLCP-Single & 19.80 & 0.63 & 12.75 & 7.70 & 0.99 & \textbf{4.00} & 0.58 & 0.63 & 0.62\\
RLCP & \textbf{19.90} & 0.61 & 12.49 & \textbf{74.10} & 1.00 & \textbf{4.00} & 0.62 & 0.61 & 0.60 \\
\midrule
\multicolumn{10}{l}{\textbf{Slate size $M=5$}} \\
DDPG & 19.90 & 0.62 & 13.09 & 52.00 & 0.99 & \textbf{5.00} & 0.64 & 0.62 & 0.63 \\
TD3 & \textbf{20.00} & 0.67 & 14.06 & 24.50 & 0.99 & \textbf{5.00} & 0.67 & 0.67 & 0.67 \\
A2C & 19.80 & 0.67 & 13.05 & 5.70 & 0.99 & \textbf{5.00} & 0.60 & 0.67 & 0.65 \\
HAC & 19.90 & 0.67 & 13.41 & 11.40 & 0.99 & \textbf{5.00} & 0.63 & 0.67 & 0.66 \\
RLCP-Single & \textbf{20.00} & 0.63 & 13.03 & 11.50 & 0.99 & \textbf{5.00} & 0.60 & 0.63 & 0.62\\
RLCP & \textbf{20.00} & 0.62 & 12.34 & \textbf{64.60} & 1.00 & \textbf{5.00} & 0.63 & 0.62 & 0.60 \\
\midrule
\multicolumn{10}{l}{\textbf{Slate size $M=6$}} \\
DDPG & \textbf{20.00} & 0.66 & 13.46 & 59.10 & 0.99 & 6.00 & 0.63 & 0.66 & 0.67 \\
TD3 & \textbf{20.00} & 0.69 & 13.73 & 26.20 & 0.99 & 6.00 & 0.65 & 0.69 & 0.70 \\
A2C & 19.80 & 0.65 & 13.19 & 9.60 & 0.99 & 6.00 & 0.62 & 0.65 & 0.69 \\
HAC & \textbf{20.00} & 0.63 & 13.34 & 28.60 & 0.99 & 6.00 & 0.61 & 0.63 & 0.64 \\
RLCP-Single & \textbf{20.00} & 0.61 & 12.67 & \textbf{173.70} & 0.99 & \textbf{5.94} & 0.56 & 0.61 & 0.61\\
RLCP & \textbf{20.00} & 0.58 & 12.00 & 69.20 & 1.00 & 6.00 & 0.61 & 0.58 & 0.56 \\
\bottomrule
\end{tabular}%
\end{table*}

\begin{table*}[t]
\centering
\small
\setlength{\tabcolsep}{3pt}
\caption{ML-1M results under patience setting \((p,q)=(0.4,2)\) with \texttt{is\_like} as the reward signal for displayed slate sizes \(M\in\{4,5\}\). The final three columns report behavior rates; set size is reported for RLCP variants when available.}
\label{tab:ml1m_pq04_like_slate}
\begin{tabular}{l*{9}{r}}
\toprule
Method & \textbf{Depth} & \shortstack{Avg.\\reward} & \shortstack{Total\\reward} & \textbf{Diversity} & ILD & \shortstack{\textbf{Set}\\\textbf{size}} & Click & Like & Star \\
\midrule
\multicolumn{10}{l}{\textbf{Slate size $M=4$}} \\
DDPG & \textbf{47.20} & 0.73 & 33.86 & 20.10 & 0.99 & 4.00 & 0.68 & 0.73 & 0.70 \\
TD3 & \textbf{47.20} & 0.70 & 34.57 & 20.60 & 0.99 & 4.00 & 0.69 & 0.70 & 0.70 \\
A2C & 41.20 & 0.64 & 29.45 & 22.30 & 0.99 & 4.00 & 0.60 & 0.64 & 0.63 \\
HAC & 44.20 & 0.71 & 31.59 & 6.80 & 0.99 & 4.00 & 0.68 & 0.71 & 0.69 \\
RLCP-Single & 46.80 & 0.63 & 29.46 & 12.40 & 0.99 & 4.00 & 0.61 & 0.63 & 0.63\\
RLCP & 45.60 & 0.57 & 28.28 & \textbf{70.60} & 1.00 & \textbf{3.84} & 0.58 & 0.58 & 0.60 \\
\midrule
\multicolumn{10}{l}{\textbf{Slate size $M=5$}} \\
DDPG & \textbf{48.00} & 0.70 & 34.83 & 22.40 & 0.99 & \textbf{5.00} & 0.71 & 0.70 & 0.70 \\
TD3 & \textbf{48.00} & 0.73 & 34.48 & 11.60 & 0.99 & \textbf{5.00} & 0.71 & 0.73 & 0.72 \\
A2C & 47.70 & 0.64 & 31.75 & 7.40 & 0.99 & \textbf{5.00} & 0.62 & 0.64 & 0.65 \\
HAC & 47.20 & 0.65 & 32.45 & 25.10 & 0.99 & \textbf{5.00} & 0.65 & 0.65 & 0.68 \\
RLCP-Single & \textbf{48.00} & 0.62 & 31.27 & 8.50 & 0.99 & \textbf{5.00} & 0.63 & 0.62 & 0.62\\
RLCP & \textbf{48.00} & 0.57 & 27.90 & \textbf{82.10} & 1.00 & \textbf{5.00} & 0.58 & 0.57 & 0.59 \\
\bottomrule
\end{tabular}%
\end{table*}

\end{document}